\def\mcl#1{\mathcal{#1}}
\def\bracket#1{\left\langle #1\right\rangle}
\def\nn{\nonumber}
\def\opn{\operatorname}
\def\mr{\mathrm}
\def\alg{\mcl{A}}
\def\modu{\mcl{M}}
\def\bbracket#1{\big\langle #1\big\rangle}
\def\Bbracket#1{\bigg\langle #1\bigg\rangle}
\def\bc{\mathbf{c}}
\def\bG{\mathbf{G}}
\def\by{\mathbf{y}}
\def\eqref#1{(\ref{#1})}
\def\red#1{\textcolor{black}{#1}}
\def\blue#1{\textcolor{black}{#1}}
\def\green#1{\textcolor{black}{#1}}
\DeclareSymbolFont{EulerExtension}{U}{euex}{m}{n}
\DeclareMathSymbol{\euintop}{\mathop} {EulerExtension}{"52}
\DeclareMathSymbol{\euointop}{\mathop} {EulerExtension}{"48}
\newenvironment{mythm}[1][]{\medskip\par\noindent{\bfseries #1}\ \,\,\em}{\medskip\par}
\begin{document}

\title{Spectral Truncation Kernels: Noncommutativity in $C^*$-algebraic Kernel Machines}

\author{\name Yuka Hashimoto \email yuka.hashimoto@ntt.com \\
       \addr NTT, Inc.\\
       3-9-11, Midori-cho, Musashinoshi, Tokyo, 180-8585, Japan / \\
       Center for Advanced Intelligence Project, RIKEN\\
       1-4-1, Nihonbashi, Chuo-ku, Tokyo 103-0027, Japan
       \AND
       \name Ayoub Hafid \email hafid@ms.u-tokyo.ac.jp\\
       \addr Graduate School of Mathematical Sciences, The University of Tokyo\\
       3-8-1, Komaba, Meguro-ku, Tokyo 153-8914, Japan
       \AND 
       \name Masahiro Ikeda \email ikeda@ist.osaka-u.ac.jp\\
       \addr Graduate School of Information Science and Technology,
       The university of Osaka\\
       1-5, Yamadaoka, Suita-shi, Osaka 565-0871, Japan / \\
       Center for Advanced Intelligence Project, RIKEN\\
       1-4-1, Nihonbashi, Chuo-ku, Tokyo 103-0027, Japan
       \AND 
       \name Hachem Kadri \email hachem.kadri@lis-lab.fr\\
       \addr Department of Computer Science\\ 
       Aix-Marseille University, CNRS, LIS\\
       163, avenue de Luminy -- Case 901 F-13288 Marseille Cedex 9, France
       }

\editor{Lorenzo Rosasco}

\maketitle

\begin{abstract}
\red{A central question in vector- and function-valued learning is how to design kernels that capture both local and non-local interactions while remaining computationally tractable. Existing operator-valued kernels offer only partial answers: separable kernels are efficient but fail to model interactions across the function domain, while commutative kernels capture only pointwise structure. To address this, we propose spectral truncation kernels, a new class of positive definite kernels for vector- and function-valued learning based on spectral truncation and $C^*$-algebra.  By allowing noncommutative products in the kernel construction, the proposed kernels induce interactions across the data function domain and fill the gap between existing separable and commutative kernels. In addition, by using the $C^*$-algebraic framework, we reduce the computational cost compared to the existing vector-valued RKHS framework with operator-valued kernels.}
\end{abstract}

\begin{keywords}
  Kernel methods, $C^*$-algebra, Positive definite kernel, Ridge regression
\end{keywords}

\section{Introduction}\label{sec:intro}
Kernel methods have been 
{one of the most fundamental tools}  
in machine learning~\citep{scholkopf01,gretton07,hofmann08,kernelmean}.
They have been applied, for example, to
ridge regression, principal component analysis, and support vector machine.
Kernel methods are characterized by reproducing kernel Hilbert spaces (RKHSs), which are constructed by positive definite kernels.
Typical positive definite kernels include the polynomial kernel, Gaussian kernel, and Laplacian kernel.
Product kernels, which are constructed by the product of multiple kernels, have also been considered~\citep{scholkopf01,thomas08}.

Standard positive definite kernels are scalar-valued, and are well-suited to learn scalar-valued functions. 
Kernel methods for vector- and function-valued outputs have also been investigated~\citep{alvarez2012kernels,kadri16}. The kernels, in these cases, are instead operator-valued, and the associated feature space is vector-valued RKHSs (vvRKHSs)~\citep{kadri12,kadri16,minh16}.
There are at least two challenges for vvRKHS methods: the computational cost and choice of kernels.
A typical kernel is the separable kernel, which is defined by the product of a scalar-valued kernel and a positive semi-definite operator~\citep{alvarez2012kernels}.
Another typical kernel is the commutative kernel, which is defined only with the pointwise calculation of functions or vectors~\citep{hashimoto21}.
Although applying the separable and commutative kernels is computationally efficient, there is a crucial shortcoming for each kernel.
Separable kernels identify dependencies between input and output variables separately, and cannot reflect information of input variables properly to output variables.
The output is determined only by the global information of the input.
On the other hand, commutative kernels only identify the pointwise (completely local) dependencies.
Indeed, they are two extreme cases regarding the dependencies between input and output variables.
Several attempts have been made to construct kernels that go beyond separable and commutative kernels.
A typical nonseparable kernel is the transformable \green{kernel}, which is characterized by a map that can incorporate the information of input variables with the output variables.
\citet{huusari21} proposed entangled kernels based on concepts from quantum computing, such as partial trace and entanglement.
\citet{hashimoto23_aistats} proposed to use the product of circulant matrices and general squared matrices to construct kernels. 
Using this kernel, one can generalize the convolution and capture the effect of interactions of different Fourier components.
However, if we need an $m$-dimensional vector-valued outputs with these kernels, then we have to construct an $mN$ by $mN$ Gram matrix, where $N$ is the sample size, and the computational cost is $O(m^3N^3)$ in general.
Thus, with vvRKHSs, to go beyond separable and commutative kernels, the computational cost is significant.

In this work, we address the two challenges of the computational cost and the choice of kernels by introducing a new class of kernels based on the framework of reproducing kernel Hilbert $C^*$-module~(RKHM)~\citep{hashimoto21}.
%
%
RKHM is a generalization of RKHS by means of $C^*$-algebra.
$C^*$-algebra is a generalization of the space of complex values and has structures of the norm, product, and involution~\citep{murphy90,lance95}.
It unifies operators and functions.
In this framework, kernels are generalized to $C^*$-algebra-valued kernel functions and allow us to consider function-valued kernels, leading to function-valued Gram matrices.
%
%
By evaluating the values of the function-valued Gram matrix at $m$ different points, we obtain $m$ scalar-valued Gram matrices. 
This allows us to obtain an $m$-dimensional vector-valued outputs with the computational cost of $O(mN^3)$, which alleviates the dependency on $m$ from cubic
to linear compared to the case of vvRKHSs with nonseparable kernels such as transformable kernels. 
We summarize the difference between the proposed and existing kernels in Table~\ref{tab:summary}.

To obtain kernels going beyond the separable and commutative kernels with low computational cost, we propose a new class of $C^*$-algebra-valued positive definite kernels based on the spectral truncation, which has been discussed in the fields of noncommutative geometry and $C^*$-algebra~\citep{dandrea14,suijlekom21,connes21}.
%
The proposed kernels are parameterized by a natural number $n$ corresponding to the dimension of the truncated space.
They can be applied to both vector and functional inputs.
For vector inputs, we regard the elements of them as the values of functions.
We approximate the input functions on the $n$-dimensional truncated space and obtain $n$ by $n$ 
Toeplitz matrices, whose $(i,j)$-entry depends only on $i-j$.
{Thus, $n$ describes the resolution of the discretization, and we call it the truncation parameter.}
Indeed, $n$ describes the noncommutativity of the kernel.
We show that the proposed kernels converge to the commutative kernels as $n$ goes to infinity.
On the other hand, if $n=1$, then the proposed kernels are separable kernel.
Thus, we can control local and global dependencies through $n$.
By introducing the parameter $n$ and setting $n$ to balance the information of gloal dependencies and that of the local dependencies, we obtain higher performance compared to the separable and commutative kernels.
We also propose a \blue{combined model} with the proposed kernels to make the framework more flexible and capture both local and global dependencies of the output on the input.

In the setting of functional data~\citep{ramsay05,wang16,hashimoto21}, we have focused on how to {\em interpolate} observed data to obtain functions~\citep{ramsay05,wang16,hashimoto21}.
On the other hand, the proposed truncation kernels shed light on the good effects of {\em discretization}~(setting $n$ as a finite number) on the learning process.

Our contributions are summarized as follows:

\begin{itemize}[leftmargin=*,nosep]
\item We propose a new kernel family (see Definition~\ref{def:kernels}).
\item We study theoretical properties necessary for the applicability of a new kernel to practical learning tasks (see Sections~\ref{sec:kernel_trancation} and \ref{sec:generalization_bound}).
\item We conduct empirical validation on the tasks designed to showcase problems that could not be solved only with existing kernels (see Section~\ref{subsec:numexp}).
\end{itemize} 

\green{The core contribution of this paper, the theory and properties of spectral truncation kernels, is documented in Sections \ref{sec:preliminaries}--\ref{sec:generalization_bound}. 
Section~\ref{sec:preliminaries} reviews the background on $C^*$-algebras, spectral truncation, and operator-valued kernels. 
Section~\ref{sec:kernel_trancation} introduces the proposed kernel class and discusses its positive definiteness. 
We also show how the truncation parameter $n$ bridges separable and commutative kernels. 
Section~\ref{sec:generalization_bound} provides a generalization bound of the models with proposed kernels.
Readers primarily interested in the kernel theory can focus on these sections. 
Section~\ref{sec:application} discusses the application of the proposed kernels to kernel ridge regression and analyzes computational cost. 
Section~\ref{sec:deep} proposes an extension of the framework by combining multiple spectral kernels.
This section is for readers interested in a more flexible, learnable variant. 
Section~\ref{sec:applications} provides potential applications of the proposed kernels.
Section~\ref{subsec:numexp} presents empirical evaluations of both the base and combined models.
Section~\ref{sec:kernel_generalization} shows that the proposed kernels can be generalized for more general settings.
Section~\ref{sec:conclusion} concludes the paper.}

\newcolumntype{C}{>{\centering\arraybackslash}X}
\begin{table}[t]
\caption{Summary of the existing and the proposed kernels}\label{tab:summary}
\small
\begin{tabularx}{\textwidth}{c|C|C|C}
    \hline
     Type of kernels & Computational cost (ridge regression)&  Extraction of {\em local} information& Extraction of {\em global} information\\
     \hline
 Separable    & $O(mN^3)$ & $\times$ & $\checkmark$\\
 Commutative & $O(mN^3)$ & $\checkmark$ & $\times$\\
 Transformable & $O(m^3N^3)$ & $\checkmark$ & $\checkmark$\\
 {\bf Proposed} (with RKHM) & $O(mn^2N^2+mN^3)$ & $\checkmark$ & $\checkmark$\\
 \hline
\end{tabularx}
\end{table}

\section{Preliminaries}\label{sec:preliminaries}
We review some mathematical notions and existing studies.

\subsection{$C^*$-algebra and Reproducing kernel Hilbert $C^*$-module}\label{subsec:c_star}
$C^*$-algebra is a Banach space equipped with a product and an involution satisfying the $C^*$ identity (condition 3 below).
It is a natural generalization of the space of complex numbers.
\begin{definition}[$C^*$-algebra]~\label{def:c*_algebra}
A set $\alg$ is called a {\em $C^*$-algebra} if it satisfies the following conditions:\vspace{-.2cm}
\begin{enumerate}[itemsep=1pt,leftmargin=*]
 \item $\alg$ is an algebra over $\mathbb{C}$ and {equipped with} a bijection $(\cdot)^*:\alg\to\alg$ that satisfies the following conditions for $\alpha,\beta\in\mathbb{C}$ and $a,b\in\alg$:

 \leftskip=10pt
 $\bullet$ $(\alpha a+\beta b)^*=\overline{\alpha}a^*+\overline{\beta}b^*$,\qquad
 $\bullet$ $(ab)^*=b^*a^*$,\qquad
 $\bullet$ $(a^*)^*=a$.

 \leftskip=0pt
 \item $\alg$ is a normed space endowed with a norm $\Vert\cdot\Vert_{\alg}$, and for $a,b\in\alg$, $\Vert ab\Vert_{\alg}\le\Vert a\Vert_{\alg}\Vert b\Vert_{\alg}$ holds.
 In addition, $\alg$ is complete with respect to $\Vert\cdot\Vert_{\alg}$.

 \item For $a\in\alg$, the $C^*$ identity $\Vert a^*a\Vert_{\alg}=\Vert a\Vert_{\alg}^2$ holds.
\end{enumerate}
If there exists $a\in\alg$ such that $ab=b=ba$ for any $b\in\alg$, $a$ is called the unit and denoted by $1_{\alg}$.
\end{definition}
In this paper, we focus on the \red{unital $C^*$ algebra ($C^*$-algebra with the unit)}, especially on $C^*$-algebra of continuous functions.
\begin{example}\label{ex:continuous_funcs}
Let $\mathbb{T}=\mathbb{R}/2\pi\mathbb{Z}$ be the torus and $C(\mathbb{T})$ be the space of continuous functions on $\mathbb{T}$.
Then, $\alg:=C(\mathbb{T})$ is a $C^*$-algebra by means of the product: $(cd)(z)=c(z)d(z)$, involution: $c^*(z)=\overline{c(z)}$, and norm: $\Vert c\Vert_{\alg}=\sup_{z\in\mathbb{T}}\vert c(z)\vert$ for $c,d\in\alg$.
The unit is the constant function $1_{\alg}\equiv 1$.
\end{example}

We now review basic notions regarding $C^*$-algebra.
In the following, let $\alg$ be a $C^*$-algbra.
\begin{definition}[Positive]~\label{def:positive}
An element $a$ of $\alg$ is called {\em positive} if there exists $b\in\alg$ such that $a=b^*b$ holds.
For $a,b\in\alg$, we {write} $a\le_{\alg} b$ if $b-a$ is positive.
\end{definition}
\begin{definition}[Infimum and minimum]\label{def:sup}
For a subset $\mcl{S}$ of $\alg$, $a\in\alg$ is said to be a {\em lower bound} with respect to the order $\le_{\alg}$, if $a\le_{\alg} b$ for any $b\in\mcl{S}$.
Then, a lower bound $c\in\alg$ is said to be an {\em infimum} of $\mcl{S}$, if $a\le_{\alg} c$ for any lower bound $a$ of $\mcl{S}$. 
If $c\in\mcl{S}$, then $c$ is said to be a {\em minimum} of $\mcl{S}$.
\end{definition}
\red{Note that as the case of the standard order in $\mathbb{R}$, the minimum does not always exist.}
%

We now define RKHM.
Let $\mcl{X}$ be a non-empty set for data.
To construct an RKHM, we first introduce $\alg$-valued positive definite kernel.
\begin{definition}[$\alg$-valued positive definite kernel]\label{def:pdk_rkhm}
 An $\alg$-valued map $k:\mcl{X}\times \mcl{X}\to\alg$ is called a {\em positive definite kernel} if it satisfies the following conditions: \smallskip\\
$\bullet$ $k(x,y)=k(y,x)^*$ \;for $x,y\in\mcl{X}$,\\
$\bullet$ $\sum_{i,j=1}^Nc_i^*k(x_i,x_j)c_j$ is \blue{positive} \;for $n\in\mathbb{N}$, $c_i\in\alg$, $x_i\in\mcl{X}$.
\end{definition}
Let $\phi: \mcl{X}\to\alg^{\mcl{X}}$ be the {\em feature map} associated with $k$, defined as $\phi(x)=k(\cdot,x)$ for $x\in\mcl{X}$ and let
$\modu_{k,0}=\{\sum_{i=1}^{N}\phi(x_i)c_i|\ N\in\mathbb{N},\ c_i\in\alg,\ x_i\in \mcl{X}\ (i=1,\ldots,N)\}$.
We can define an $\alg$-valued map $\bbracket{\cdot,\cdot}_{k}:\modu_{k,0}\times \modu_{k,0}\to\alg$ as
\begin{align*}
\Bbracket{\sum_{i=1}^{N}\phi(x_i)c_i,\sum_{j=1}^{M}\phi(y_j)d_j}_{k}=\sum_{i=1}^{N}\sum_{j=1}^{M}c_i^*k(x_i,y_j)d_j,
\end{align*}
which enjoys the reproducing property
$\bracket{\phi(x),f}_{k}=f(x)$
for $f\in\modu_{k,0}$ and $x\in \mcl{X}$.
\red{Note that this map satisfies ``$\bracket{f,f}_k=0$ $\Rightarrow$ $f=0$'' by the reproducing property. Thus, we can define the norm in $\modu_{k,0}$ as $\Vert f\Vert_k^2=\Vert\bracket{f,f}_k\Vert_{\alg}$.}
We can also define the $\alg$-valued absolute value $\vert \cdot\vert_k$ as $\vert f\vert_k^2=\bracket{f,f}_k$.
The {\em reproducing kernel Hilbert $\alg$-module~(RKHM)} $\modu_k$ associated with $k$ is defined as the completion of $\modu_{k,0}$ with respect to $\Vert\cdot\Vert_k$.  
See, for example, the references~\citep{lance95,murphy90,hashimoto21} for more details about $C^*$-algebra and RKHM.

\subsection{Spectral truncation on the torus}\label{subsec:spectral_truncation}
As we mentioned in Example~\ref{ex:continuous_funcs}, the product in $C(\mathbb{T})$ is commutative.
However, by approximating the multiplication of a function $x\in C(\mathbb{T})$ by a matrix, we can obtain a noncommutative product structure. 
Let $e_j$ be the Fourier function defined as $e_j(z)=\mr{e}^{\mr{i}jz}$ for $j\in\mathbb{Z}$ and $z\in\mathbb{T}$ and $M_x$ be the multiplication operator defined on $L^2(\mathbb{T})$ with respect to $x$.
Here, $\mr{i}$ is the imaginary unit, and $L^2(\mathbb{T})$ is the space of square-integrable functions on $\mathbb{T}$. 
Let $P_n$ be the orthogonal projection onto the $n$-dimensional subspace $\opn{Span}\{e_1,\ldots,e_n\}$.
We approximate $M_x$ by $P_nM_xP_n$, i.e., by truncating the spectrum.
Then, the $(j,l)$-entry of the representation matrix $R_n(x)\in\mathbb{C}^{n\times n}$ of $P_nM_xP_n$ is written as
\begin{align*}
R_n(x)_{j,l}=\bracket{e_j,M_xe_l}_{L^2(\mathbb{T})}
=\int_{\mathbb{T}}x(t)\mr{e}^{-\mr{i}(j-l)t}\mr{d}t.
\end{align*}
Since the $(j,l)$-entry of $R_n(x)$ depends only on $j-l$, $R_n(x)$ is a Toeplitz matrix~\citep{gray06}.
It is characterized only by a vector $r_n(x)\in\mathbb{C}^{2n-1}$, where the $(j-l)$th element of $r_n(x)$ is $R_n(x)_{j,l}$.
Note that $R_n(x)_{j,l}$ is regarded as the $(j-l)$th Fourier component of $x$, and the vector $r_n(x)$ is the coordinate of the function $x$ projected on the space $\opn{Span}\{e_{-(n-1)},\ldots,e_{n-1}\}$.

For a matrix $A\in\mathbb{C}^{n\times n}$, let $S_n(A)\in C(\mathbb{T})$ be the function defined as $S_n(A)(z)=(1/n)\sum_{j,l=0}^{n-1}A_{j,l}\mr{e}^{\mr{i}(j-l)z}$, where $A_{j,l}$ is the $(j,l)$-entry of $A$.
The map $S_n$ takes the representation matrix $R_n(x)$ back to a function that approximates the original function $x$.
Indeed, we have
\begin{align*}
S_n(R_n(x))(z)=\int_{\mathbb{T}}x(t)\frac{1}{n}\sum_{j,l=0}^{n-1}\mr{e}^{-\mr{i}(j-l)t}\mr{e}^{\mr{i}(j-l)z}\mr{d}t
=\int_{\mathbb{T}}x(t)\frac{1}{n}\sum_{j=0}^{n-1}\sum_{l=-j}^j\mr{e}^{\mr{i}l(z-t)}\mr{d}t
=x\ast F_n(z),
\end{align*}
where $F_n(t)=(1/n)\sum_{j=0}^{n-1}\sum_{l=-j}^j\mr{e}^{\mr{i}lt}$ is the Fej\'{e}r kernel and $\ast$ represents the convolution.
If $n=1$, then $F_n(z)=1$, and as $n$ goes to infinity, $F_n$ goes to the delta function.
More precisely, the following proposition holds~\citep{brandolini97,suijlekom21}, which implies that for each $z\in\mathbb{T}$, $S_n(R_n(x))(z)$ converges to $x(z)$ as $n\to\infty$.
\begin{proposition}
For each $z\in\mathbb{T}$, $x\ast F_n(z)\to x(z)$ as $n\to\infty$.
\end{proposition}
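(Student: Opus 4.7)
The plan is to recognize this as the classical Fej\'er theorem on Ces\`aro summability of Fourier series for continuous functions on $\mathbb{T}$, and to prove it via the standard approximate-identity argument.

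First I would rewrite $F_n$ in closed form. Using the Dirichlet kernel $D_j(t)=\sum_{l=-j}^{j}\mr{e}^{\mr{i}lt}=\sin((j+1/2)t)/\sin(t/2)$ (for $t\ne 0$), a standard telescoping identity gives
\begin{equation*}
F_n(t)=\frac{1}{n}\sum_{j=0}^{n-1}D_j(t)=\frac{1}{n}\,\frac{\sin^2(nt/2)}{\sin^2(t/2)}\qquad (t\neq 0),
\end{equation*}
and $F_n(0)=n$. From this closed form I would read off the three properties that make $\{F_n\}$ an approximate identity on $\mathbb{T}$: (i) $F_n(t)\ge 0$; (ii) $\frac{1}{2\pi}\int_{\mathbb{T}}F_n(t)\,\mr{d}t=1$, which follows immediately from $\frac{1}{2\pi}\int_{\mathbb{T}}\mr{e}^{\mr{i}lt}\,\mr{d}t=\delta_{l,0}$; and (iii) for any $\delta\in(0,\pi)$, $F_n(t)\le 1/(n\sin^2(\delta/2))$ uniformly for $\delta\le\vert t\vert\le\pi$, so $\int_{\delta\le\vert t\vert\le\pi}F_n(t)\,\mr{d}t\to 0$ as $n\to\infty$.

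Next, using (ii), I would write
\begin{equation*}
x\ast F_n(z)-x(z)=\frac{1}{2\pi}\int_{\mathbb{T}}\bigl(x(z-t)-x(z)\bigr)F_n(t)\,\mr{d}t,
\end{equation*}
and split the integral into $\vert t\vert<\delta$ and $\vert t\vert\ge\delta$. Fix $\epsilon>0$. Since $x\in C(\mathbb{T})$ is continuous at $z$ (in fact uniformly continuous on the compact torus), choose $\delta>0$ so that $\vert x(z-t)-x(z)\vert<\epsilon$ whenever $\vert t\vert<\delta$; by (i)–(ii) this contribution is bounded by $\epsilon$. For $\vert t\vert\ge\delta$, bound the integrand by $2\Vert x\Vert_\infty F_n(t)$ and invoke (iii) to make this part smaller than $\epsilon$ for $n$ sufficiently large. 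Letting $\epsilon\to 0$ yields $x\ast F_n(z)\to x(z)$.

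There is no real obstacle here; this is textbook Fej\'er. The only minor care points are the constant conventions (the paper writes the convolution without an explicit $1/(2\pi)$, which should simply be absorbed into the normalization of $F_n$) and the closed-form derivation of $F_n$. Note that the argument in fact gives uniform convergence in $z$, since $x$ is uniformly continuous on $\mathbb{T}$, which is slightly stronger than the pointwise statement asserted.
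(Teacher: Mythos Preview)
Your proof is correct; this is precisely the classical Fej\'er theorem, and the approximate-identity argument you outline is the standard textbook one. The paper, however, does not supply its own proof of this proposition: it is stated as a known result with citations to Brandolini--Travaglini and van Suijlekom, so there is nothing on the paper's side to compare against---you have simply filled in the standard argument where the paper defers to the literature. Your side remarks on normalization conventions and on the upgrade to uniform convergence are both accurate.
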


A generalization of the Fej\'{e}r kernel
on $\mathbb{T}$ to that on $\mathbb{T}^q$ with respect to the sum over a polyhedron has been theoretically investigated~\citep{Travaglini94,brandolini97}.
Let $P$ be a polyhedron and $jP=\{jz\,\mid\,z\in P\}$ for $j\in\mathbb{Z}$.
For $t\in\mathbb{T}^q$, the Fej\'{e}r kernel on $\mathbb{T}^q$ is defined as 
\begin{align}
F_n^{q,P}(t)=\frac1n\sum_{j=1}^{n-1}\sum_{r\in jP\bigcap \mathbb{Z}^q}\mr{e}^{\mr{i}r\cdot t}.\label{eq:general_fejer}
\end{align}

\subsection{Existing operator-valued kernels}\label{ap:operator-valued_kernels}
We review existing operator-valued kernels and discuss their advantages and shortcomings.
We especially focus on the case $\mcl{X}=\alg^d$ and the application of kernel ridge regression in this paper.
For more details about the operator-valued kernels, see, for example,~\citet{alvarez2012kernels}.

Typical existing kernels are summarized as follows:
\red{Let $\tilde{\mcl{X}}$ be a nonempty set,}
$\tilde{k}:\tilde{\mcl{X}}\times \tilde{\mcl{X}}\to\mathbb{C}$ be a complex-valued positive definite kernel, $\mu$ be a finite positive measure on $\mathbb{T}^2$, and $S:\mathbb{T}\times \mcl{X}\to\tilde{\mcl{X}}$.
\red{Let $\mathcal{B}(L^2(\mathbb{T}))$ be the space of bounded linear operators on $L^2(\mathbb{T})$.}
Consider a kernel $k:\mcl{X}\times \mcl{X}\to\mcl{B}(L^2(\mathbb{T}))$ defined by
\begin{align}
\bracket{u,k(x,y)v}_{L^2(\mathbb{T})}=\int_{\mathbb{T}^2}\red{\overline{u(z)}}\tilde{k}(S(z,x),S(t,y))v(t)\mr{d}\mu(t,z)\label{eq:op_valued_kernel}
\end{align}
for $u,v\in L^2(\mathbb{T})$.

In general, we have to construct a Gram matrix whose elements are operators.
Typically, if we need an output function evaluated at $m$ points, then we discretize the operator on $O(m)$-dimensional space and obtain an $O(mN)$ by $O(mN)$ Gram matrix.
Then, the computational cost for the kernel regression task for obtaining an output function evaluated at $m$ points is $O(N^3m^3)$.
Therefore, the computational cost is significant.
As a special case of the kernel~\eqref{eq:op_valued_kernel}, the following two kernels are efficient in the sense of the computational cost.

\paragraph{Separable kernel}
If we set $\tilde{\mcl{X}}=\mcl{X}$ and $S(t,x)=x$, then the kernel $k$ defined in Eq.~\eqref{eq:op_valued_kernel} is called a separable kernel.
In other words, let $A\in\mcl{B}(L^2(\mathbb{T}))$ be a Hermitian positive semi-definite operator defined as $\bracket{u,Av}_{L^2(\mathbb{T})}=\int_{\red{\mathbb{T}^2}}\overline{u(t)}v(z)\mr{d}\mu(z,t)$.
The kernel $k:\mcl{X}\times \mcl{X}\to\mcl{B}(L^2(\mathbb{T}))$ defined as
\begin{align*}
k(x,y)v(z)=\tilde{k}(x,y)Av(z)
\end{align*}
is called a separable kernel.

For separable kernels, the computation is reduced to the computation of the Gram matrix of $\tilde{k}$.
Thus, the computational cost for the kernel regression task for obtaining an output function evaluated at $m$ points is $O(N^3m)$, where $N$ is the sample size.
However, as we can see from the definition, it identifies dependencies between input and output variables separately and cannot reflect local information of input variables to output variables.
Indeed, with separable kernels, we cannot specify the relationship between the value of the output function at $z$ and the value $x(z)$ of the input $x$ at $z$ although they have strong connection in many cases.
For example, if we have an input $[x_1,\ldots,x_d]\in\alg^d$ as $d$ time-series explanatory variables (i.e., $x_1,\ldots,x_d$ are functions with respect to time) and try to obtain an output function as a response variable, the values of $x_1(z),\ldots,x_d(z)$ at time $z$ is strongly related to the value of the output at time $z$.
In this case, the separable kernels are not suitable for extracting the relationship between $x_1(z),\ldots,x_d(z)$ and the values of the output at $z$.

\paragraph{Commutative kernel}
Let $\tilde{\mcl{X}}=\mathbb{C}$ and $S(t,x)=x(t)$.
\red{Let $\Delta :\mathbb{T}\to\mathbb{T}^2$ be defined as $\Delta (z)=(z,z)$ and let $\mu=\Delta_{\sharp}m$, where $m$ is the normalized Haar measure on $\mathbb{T}$ and $_{\sharp}$ represents the push forward.
Then, we have $\bracket{u,k(x,y)v}_{L^2(\mathbb{T})}=\int_{\mathbb{T}}\red{\overline{u(z)}}\tilde{k}(x(z),y(z))v(z)\mr{d}z$.}
Thus, we have  
\begin{align*}
k(x,y)v(z)=\tilde{k}(x(z),y(z))v(z).
\end{align*}
We call this kernel the commutative kernel.

For commutative kernels, the computation is reduced to the computation at each $z$.
Thus, the computational cost for the kernel regression task for obtaining an output function evaluated at $m$ points is $O(N^3m)$.
However, as we can see from the definition, it
only identifies completely local relationship between the input function and the output function.
The value of the output function at $z$ is determined only with the value of the input function at $z$.
In the same example as separable kernels, the values of $x_1(z),\ldots,x_d(z)$ at time $z$ is strongly related to the value of the output at time $z$, but may also be related to $y(z+t)$ for $t\in [-T,T]$ for a small number $T$.
In this case, the commutative kernels are not suitable for extracting the relationship between $x_1(z),\ldots,x_d(z)$ and the values of the output around $z$, not only exactly at $z$.

As we have seen, although the separable and commutative kernels are computationally efficient, they are two extreme cases regarding the description of the relationship between the input and output functions; the separable kernels identify dependencies between input and output variables separately (only with global information) and the commutative kernels only identify the commutative (completely local) dependencies.
The following existing kernels fill a gap between the above two kernels, but computationally expensive.
\paragraph{Transformable kernel}
If we set $\mu$ as the Lebesgue measure, then the kernel $k$ defined in Eq.~\eqref{eq:op_valued_kernel} is called the transformable kernel~\red{\citep{caponnetto08a}}.
In this case, the kernels can identify the dependency between input and output variables through the map $S$.
However, the computational cost for the kernel regression task for obtaining an output function evaluated at $m$ points is $O(N^3m^3)$ as we discussed at the beginning of this section.
In addition, we have to determine the map $S$, but it is not easy to interpret in general.

\paragraph{Combination of separable and transformable kernels}
Considering the sum of separable and transformable kernels and the product of these kernels have also been proposed.
However, we have the same shortcomings of separable and transformable kernels.

\section{$C^*$-algebra-valued Positive Definite Kernel with Spectral Truncation}\label{sec:kernel_trancation}
In the following, we set the $C^*$-algebra $\alg$ as $C(\mathbb{T})$.
To obtain vector- or functional-valued outputs, applying vvRKHSs has been investigated.
However, as we stated in Section~\ref{sec:intro}, 
\green{in problems where the output function exhibits nontrivial dependencies along the function domain that separable kernels cannot capture by construction, going beyond the separable kernel can yield improved performance. 
However, existing nonseparable kernels such as transformable kernels incur a computational cost of $O(m^3N^3)$ for $m$-dimensional outputs with $N$ samples, making them impractical (See Subsection~\ref{ap:operator-valued_kernels}).}
%
%
To go beyond the separable and commutative kernels with lower computational cost, we consider function-valued ($\alg$-valued) kernels and RKHMs.
As we will discuss in the last part of Section~\ref{sec:application}, the computational cost depends on $mN^3$ if we use an $\alg$-valued kernel.

We first combine existing kernels~\citep{hashimoto21,hashimoto23_aistats,scholkopf01,kadri12,alvarez2012kernels} and obtain the following typical commutative $\alg$-valued kernels.
Here, we consider the case where the set $\mcl{X}$ for data is $\mcl{X}\subseteq \alg^d$.
\begin{example}\label{ex:standard_kernels}
For $x=[x_1,\ldots,x_d],y=[y_1,\ldots,y_d]\in \alg^d$ and $z\in\mathbb{T}$,
\begin{enumerate}
    \item polynomial kernel $k^{\opn{poly},q}(x,y)(z)=\sum_{i=1}^d\alpha_i(\overline{x_i(z)}y_i(z))^q$,
    \item product kernel $k^{\opn{prod},q}(x,y)(z)=\prod_{j=1}^q\overline{\tilde{k}_{1,j}(x(z),y(z))}\tilde{k}_{2,j}(x(z),y(z))$.
\end{enumerate}
Here, $q\in\mathbb{N}$ is the degree of the products in the kernel, $\alpha_i\ge 0$ is the parameter of $k^{\opn{poly},q}$, and for $i=1,2$ and $j=1,\ldots,q$, $\tilde{k}_{i,j}:\mathbb{C}\times\mathbb{C}\to\mathbb{C}$ 
is a complex-valued continuous positive definite kernel.
\end{example}

We propose new $\alg$-valued positive definite kernels by generalizing the typical kernels in Example~\ref{ex:standard_kernels}.
The kernels in Example~\ref{ex:standard_kernels} involve the product of functions in $\alg$.
For $x,y\in \alg$, the product of $x$ and $y$ is commutative and is just the pointwise product defined as $(xy)(z)=x(z)y(z)$ (See Example~\ref{ex:continuous_funcs}).
Thus, as we also discussed in Subsection~\ref{ap:operator-valued_kernels}, the commutative kernels defined in Example~\ref{ex:standard_kernels} do not induce the interactions along $z\in\mathbb{T}$, that is, $k(x,y)(z)$ is determined only by the values $x(z)$ and $y(z)$ at $z$.
If the values $x(z_1)$ and $x(z_2)$, or $x(z_1)$ and $y(z_2)$ for different $z_1$ and $z_2$ are related, then the construction of the kernels only with this commutative product cannot extract that relationship.
On the other hand, if we introduce the truncation parameter $n$ and transform $x$ and $y$ into Toeplitz matrices using the map $R_n$ defined in Subsection~\ref{subsec:spectral_truncation}, then the product of $R_n(x)$ and $R_n(y)$ is noncommutative.
Focusing on this feature, we define $\alg$-valued kernels based on Example~\ref{ex:standard_kernels}, but they have interactions along $z\in\mathbb{T}$.

\begin{definition}\label{def:kernels}
With the notations in Example~\ref{ex:standard_kernels}, for $x=[x_1,\ldots,x_d],y=[y_1,\ldots,y_d]\in \alg^d$, let
\begin{align*}
k_n^{\opn{poly},q}(x,y)&=S_n\bigg(\sum_{i=1}^d\alpha_i(R_n(x_i)^*)^qR_n(y_i)^q\bigg),\\
k_n^{\opn{prod},q}(x,y)&=S_n\bigg(\prod_{j=1}^qR_n(\tilde{k}_{1,j}(x,y))^*\prod_{j=1}^qR_n(\tilde{k}_{2,j}(x,y))\bigg).
\end{align*}
Here, $A^*$ for a matrix $A$ is the adjoint, and we denote by $\tilde{k}_{i,j}(x,y)$ the map $z\mapsto \tilde{k}_{i,j}(x(z),y(z))$.
\end{definition}

\begin{remark}
Although the inputs of the kernels in Example~\ref{ex:standard_kernels} and Definition~\ref{def:kernels} are functions, we can also deal with vector inputs.
For vector inputs, we can regard them as values of functions, and approximate the integral by the discrete sum.
\end{remark}

In Definition~\ref{def:kernels}, to construct the kernels, we first project functions in $\alg$ onto the space $\opn{Span}\{e_{-(n-1)},\ldots,e_{n-1}\}$ using the map $R_n$ and obtain Toeplitz matrices.
Then, we consider the product of the Toeplitz matrices.
Note that the product of two Toeplitz matrices is not always a Toeplitz matrix.
Finally, we apply $S_n$ to the matrix and take it back to the space $\opn{Span}\{e_{-(n-1)},\ldots,e_{n-1}\}$ and obtain the output function of the kernel.
Figure~\ref{fig:kernel_overview} schematically shows the construction of the simplest kernel $k_n^{\opn{poly},1}(x,y)=S_n(R_n(x)^*R_n(y))$ defined in Definition~\ref{def:kernels}.

\begin{figure}
    \centering
    \includegraphics[scale=0.4]{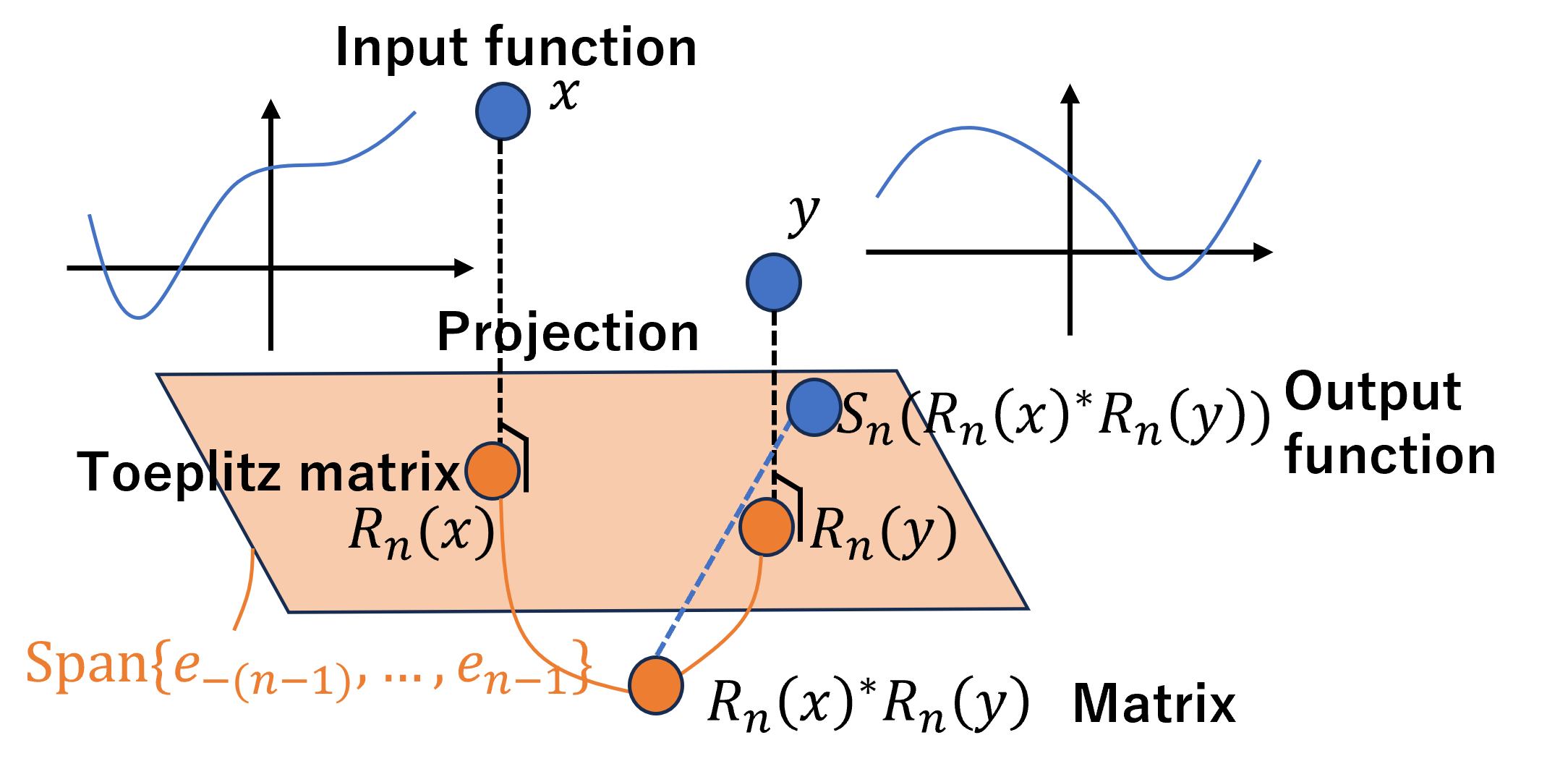}\vspace{-.4cm}
    \caption{Overview of the construction of the simplest kernel $k_n^{\opn{poly},1}(x,y)=S_n(R_n(x)^*R_n(y))$}
    \label{fig:kernel_overview}
\end{figure}

By the definitions of $S_n$ and $R_n$, we can show the following identity for $x_1,\ldots,x_q,y_1,\ldots,y_q\in\alg$~(see the proof of Theorem~\ref{thm:convergence} in Appendix~\ref{ap:proof}): 
\begin{align}
S_n\bigg(\prod_{j=1}^qR_n(x_j)^*\prod_{j=1}^qR_n(\red{y_j})\bigg)(z)=\int_{\mathbb{T}^{2q}}\!\!\!\!\!\!\overline{x_1(t_1)}\cdots\overline{x_q(t_q)}y_1(t_{q+1})\cdots y_q(t_{2q})F_n^{2q,P}(z\mathbf{1}-t)\mr{d}t,\label{eq:kernel_fejer}
\end{align}
where $t=[t_1,\ldots,t_{2q}]$, $\mathbf{1}=[1,\ldots,1]$, $P=\{[r_1,\ldots,r_{2q}]\in\mathbb{R}^{2q}\,\mid\,\vert \sum_{i=l}^mr_i\vert\le 1,\ l\le m\}$, and $F_n^{q,P}$
is the Fej\'{e}r kernel on $\mathbb{T}^q$ defined in Eq.~\eqref{eq:general_fejer}. 
Eq.~\eqref{eq:kernel_fejer} implies that the value of $S_n\big(\prod_{j=1}^qR_n(x_j)^*\prod_{j=1}^qR_n(y_j)\big)$ at $z$ is determined not only by $x_1(z),\ldots,x_q(z),y_1(z),\ldots,y_q(z)$, but also by $x_1(t_1),\ldots,x_q(t_q),y_1(t_{q+1}),\ldots,y_q(t_{2q})$ with different values $t_1,\ldots,t_{2q}$ from $z$.
Note that $F_n^{q,P}(t)$ is a real-valued function since for $r\in P$, we have $-r\in P$.
Thus, if a kernel in Example~\ref{ex:standard_kernels} is real-valued, then the corresponding kernel in Definition~\ref{def:kernels} is also real-valued.

\subsection{Connection between the proposed kernel and existing kernels}\label{subsec:connection}
The proposed function-valued kernels combined with the framework of RKHMs fill the gap between separable and commutative kernels with a computational cost lower than that of transformable kernels.
The proposed kernels are separable if $n=1$ and commutative if $n=\infty$.
Indeed, we have the following observation:

\paragraph{The case of $n=1$} The proposed kernels $k_n^{\opn{poly},q}$ and $k_n^{\opn{prod},q}$ are equivalent to separable kernels:
\begin{align*}
k_1^{\opn{poly},q}(x,y)(z)&=\sum_{i=1}^d\alpha_i\bigg(\int_{\mathbb{T}}\overline{x_i(t)}\mr{d}t\bigg)^q\bigg(\int_{\mathbb{T}}y_i(t)\mr{d}t\bigg)^q,\\
k_1^{\opn{prod},q}(x,y)(z)&=\prod_{j=1}^q\int_{\mathbb{T}}\overline{\tilde{k}_{1,j}(x(t),y(t))}\mr{d}t\prod_{j=1}^q\int_{\mathbb{T}}\tilde{k}_{2,j}(x(t),y(t))\mr{d}t.
\end{align*}
We can see that the output of the kernel does not depend on the variable $z$.
For example, the output function with an input $x$ of the kernel ridge regression is in the form of $\sum_{i=1}^Nk(x,x_i)(z)c_i(z)$ for given samples $x_1,\ldots,x_N\in\mcl{X}$ and some $c_1,\ldots,c_N\in C(\mathbb{T})$ (See \citet{hashimoto23_aistats} or Section~\ref{sec:application}).
Thus, the kernels identify dependencies between input and output functions separately.
We can only capture global information of the input $x$ for obtaining the output.

\paragraph{The case of $n=\infty$}
In the next section (Theorem~\ref{thm:convergence}), we will show that the proposed kernels $k_n^{\opn{poly},q}$ and $k_n^{\opn{prod},q}$ converge to commutative kernels $k^{\opn{poly},q}$ and $k^{\opn{prod},q}$ as $n$ goes to infinity.
Thus, the proposed kernels are equivalent to commutative kernels:
\begin{align*}
k_{\infty}^{\opn{poly},q}(x,y)(z)&=\sum_{i=1}^d\alpha_i(\overline{x_i(z)}y_i(z))^q,\\
k_{\infty}^{\opn{prod},q}(x,y)(z)&=\prod_{j=1}^q\overline{\tilde{k}_{1,j}(x(z),y(z))}\tilde{k}_{2,j}(x(z),y(z)).
\end{align*}
The value of the output of the kernels at $z$ are determined only by $x(z)$ and $y(z)$. The kernels identify completely local dependencies between input and output functions.

\paragraph{The case of $1<n<\infty$}
If $n$ is small, then we focus more on global information, and if $n$ is large, we focus more on local information.
Indeed, as we discussed in Eq.~\eqref{eq:kernel_fejer}, we have
\begin{align}
\begin{aligned}
k_{n}^{\opn{poly},q}(x,y)(z)&=\int_{\mathbb{T}^{2q}}\overline{x_1(t_1)}\cdots\overline{x_q(t_q)}y_1(t_{q+1})\cdots y_q(t_{2q})F_n^{2q,P}(z\mathbf{1}-t)\mr{d}t,\\
k_{n}^{\opn{prod},q}(x,y)(z)&=\int_{\mathbb{T}^{2q}}\prod_{j=1}^q\overline{\tilde{k}_{1,j}(x(t_j),y(t_j)}\tilde{k}_{2,j}(x(t_{q+j}),y(t_{q+j}))F_n^{2q,P}(z\mathbf{1}-t)\mr{d}t,
\end{aligned}\label{eq:kernel_fj}
\end{align}
where we assume $d=1$ for simplify the notation.
The proposed kernels are described by the convolution of the input function and the F\'{e}jer kernel $F_n^{2q,P}$.
If $n=1$, then $F_n^{2q,P}(z)=1$, and as $n$ goes to infinity, $F_n^{2q,P}$ goes to the delta function.
This means that if $n$ is small, then the convolution with $F_n^{2q,P}$ extract global information more than local information.
On the other hand, if $n$ is large, then the convolution with $F_n^{2q,P}$ extract local information more than global information.
We illustrate how $F_n^{2,P}$ changes along $n$ in Figure~\ref{fig:fejer}.

\begin{figure}[t]
    \centering
    \includegraphics[scale=0.42]{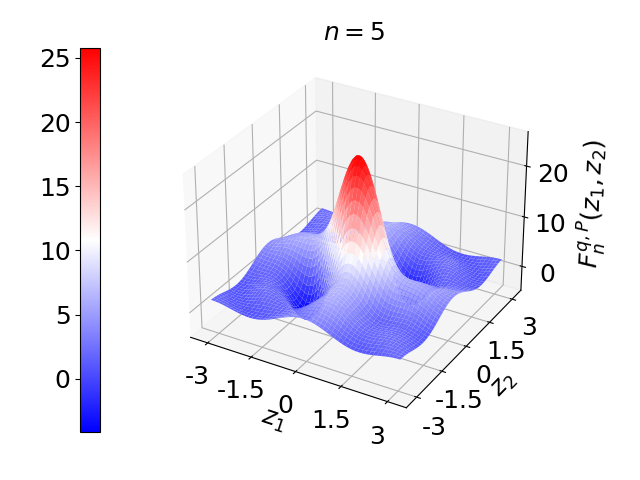}
    \includegraphics[scale=0.42]{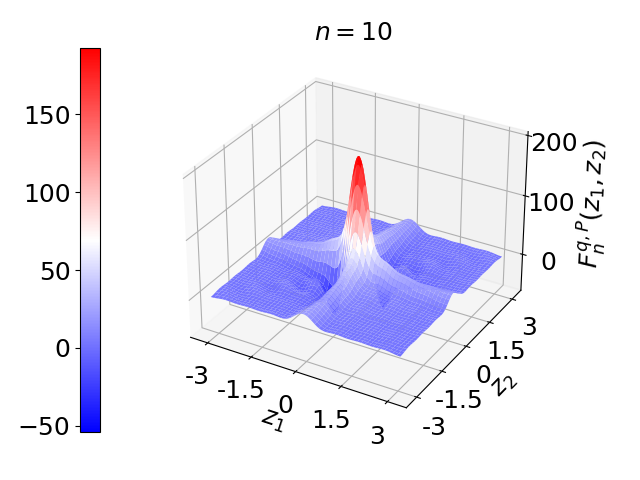}
    \includegraphics[scale=0.42]{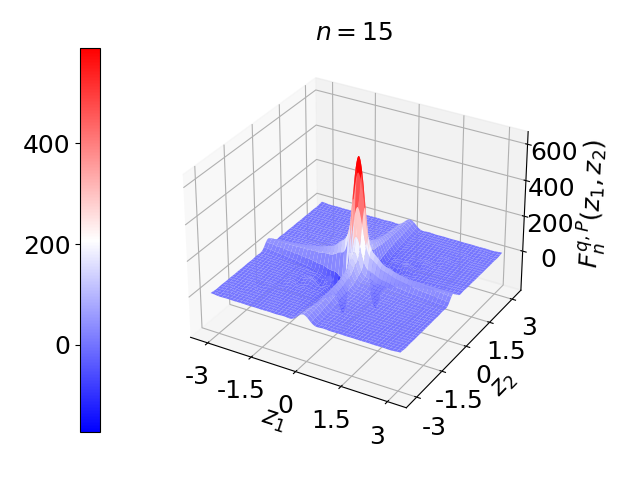}
    \caption{F\'{e}jer kernel $F_n^{2,P}$ for $n=5,10,15$}
    \label{fig:fejer}
\end{figure}

\begin{remark}
We can determine optimal $n$ in the sense of the dependencies by observing the F\'{e}jer function $F_n^{2q,P}$.
Since the proposed kernel is defined by the convolution of the input function with the F\'{e}jer kernel, the volume of the region where the value of the F\'{e}jer kernel is sufficiently large corresponds to the range of local dependencies.
Thus, if we have a information of the local dependencies, then we can choose $n$ based on the values of $F_n^{2q,P}$.
We also discuss a generalization of the proposed kernel to kernels with a continuous value parameter, instead of the discrete value parameter $n$, in Subsection~\ref{subsec:generalization_heat}. 
This generalization enables us to use an optimization method to learn a best possible parameter.
\end{remark}

\begin{remark}
The proposed kernels are composed of complex-valued kernels $\tilde{k}_{i,j}$ in Definition~\ref{def:kernels}. 
We can choose any kernel for $\tilde{k}_{i,j}$, and the properties of the proposed kernels depend also on that choice. 
If we choose a kernel with parameters as $\tilde{k}_{i,j}$ (such as a weighted sum of multiple kernels), then by optimizing the parameters, we can obtain a better kernel for given data or tasks. 
In addition, for function-valued kernels, in the same manner as the complex-valued kernels, the weighted sum of positive definite kernels and product of positive definite kernels are also a positive definite kernel. 
Thus, we can consider multiple proposed kernels and combine them with weight parameters. 
In that case, we can also optimize the parameters to obtain a better kernel for given data or tasks.
\end{remark}

\paragraph{Computational cost}
As we will discuss in the last paragraph of Section~\ref{sec:application}, the computational cost for the kernel regression task for obtaining an output function evaluated at $m$ points with the proposed kernel $k_n^{\opn{poly},q}$ or $k_n^{\opn{prod},q}$ is $O((q+m)n^2N^2+mN^3)$.
Thus, if $(q+m)n^2<m^3N$, then the proposed kernels are more computationally efficient than nonseparable and noncommutative kernels, such as the transformable kernels.

\subsection{Convergence and interactions}\label{subsec:convergence}
We show theoretical results that support the discussion in Subsection~\ref{subsec:connection}.
We first show that $k_n^{\opn{poly},q}$ and $k_n^{\opn{prod},q}$ defined in Definition~\ref{def:kernels} converge to $k^{\opn{poly},q}$ and $k^{\opn{prod},q}$ defined in Example~\ref{ex:standard_kernels} as $n$ goes to infinity, respectively.
In the following, all the proofs of theoretical statements are documented in Appendix~\ref{ap:proof}.

\begin{theorem}\label{thm:convergence}
For $x,y\in\alg^d$ and $z\in\mathbb{T}$, $k_n^{\opn{poly},q}(x,y)(z)\to k^{\opn{poly},q}(x,y)(z)$ and $k_n^{\opn{prod},q}(x,y)(z)\to k^{\opn{prod},q}(x,y)(z)$ as $n\to\infty$.
\end{theorem}

As we can see in Eq.~\eqref{eq:kernel_fejer}, if the kernel involves $q$ products of Toeplitz matrices generated by $R_n$, it is represented by a Fej\'{e}r kernel on $\mathbb{T}^q$.
To show Theorem~\ref{thm:convergence}, we apply the following lemmas.
\begin{lemma}\label{lem:index}
For $m\in\mathbb{N}$ and $j=0,\ldots 2q$, let $Q_j^m=\{r'=[r_1',\ldots,r_{2q}']\in\mathbb{R}^{2q}\,\mid\, r_i'=r_i-r_{i-1}\ (i=1,\ldots,2q),\ 0\le r_i\le m\ (i=0,\ldots,2q),\ r_j=m\}$ and $P=\{r=[r_1,\ldots,r_{2q}]\in\mathbb{R}^{2q}\,\mid\,\vert \sum_{i=l}^kr_i\vert\le 1,\ l\le k\}$.
Then, we have $mP=\bigcup_{j=0}^{2q}Q_j^m$.
\end{lemma}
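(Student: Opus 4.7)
The plan is to characterize membership in $mP$ via the existence of a monotone-bounded ``lifting'' $(r_0,\ldots,r_{2q})\in[0,m]^{2q+1}$ whose consecutive differences recover $r'$, and then to use an extremal choice of $r_0$ to force one entry of the lifting to equal $m$.

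First I would establish the equivalence: $r'=[r_1',\ldots,r_{2q}']\in mP$ if and only if there exist $r_0,\ldots,r_{2q}\in[0,m]$ with $r_i'=r_i-r_{i-1}$ for $i=1,\ldots,2q$. The ``$\Leftarrow$'' direction is immediate by telescoping: for any $1\le l\le k\le 2q$,
\begin{equation*}
\sum_{i=l}^{k} r_i' \;=\; r_k - r_{l-1} \;\in\; [-m,m].
\end{equation*}
For ``$\Rightarrow$'', note that once $r_0$ is fixed, the lifting is forced by $r_i = r_0 + \sum_{l=1}^{i} r_l'$ (empty sum $=0$). The constraint $r_i\in[0,m]$ for $i=0,\ldots,2q$ becomes $-\sum_{l=1}^{i}r_l' \le r_0 \le m-\sum_{l=1}^{i}r_l'$, so an admissible $r_0$ exists exactly when
\begin{equation*}
\max_{0\le i\le 2q}\Bigl(-\sum_{l=1}^{i} r_l'\Bigr) \;\le\; \min_{0\le j\le 2q}\Bigl(m-\sum_{l=1}^{j} r_l'\Bigr),
\end{equation*}
equivalently $\max_{i,j}\bigl(\sum_{l=1}^{j}r_l'-\sum_{l=1}^{i}r_l'\bigr)\le m$. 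Each such difference is either $0$ (if $i=j$) or a telescoping partial sum $\pm\sum_{l=a}^{b} r_l'$ with $1\le a\le b\le 2q$, so the inequality is precisely the defining condition of $mP$.

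With this equivalence in hand, $\bigcup_{j=0}^{2q} Q_j^m\subseteq mP$ is immediate (any element of any $Q_j^m$ admits a lifting). For the reverse inclusion, given $r'\in mP$, I would select
\begin{equation*}
r_0 \;=\; \min_{0\le i\le 2q}\Bigl(m-\sum_{l=1}^{i} r_l'\Bigr),
\end{equation*}
which is the largest admissible value of $r_0$. Letting $j$ be an index attaining this minimum, the associated lifting satisfies $r_j = r_0 + \sum_{l=1}^{j} r_l' = m$, so $r'\in Q_j^m$.

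The only real obstacle is bookkeeping: one must carefully track that the $2q+1$ constraints $0\le r_i\le m$ reduce, after eliminating $r_0$, to exactly the $\binom{2q+1}{2}$ partial-sum bounds defining $mP$, and then verify that the extremal choice of $r_0$ saturates the upper bound at some index. Once those two observations are in place, the lemma follows without further computation.
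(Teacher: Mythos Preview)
Your proof is correct and follows essentially the same telescoping idea as the paper: both arguments pass between the lifting $(r_0,\ldots,r_{2q})$ and the partial sums $\sum_{l=a}^{b} r_l'$ to identify $Q_j^m$ with a slice of $mP$. Your extremal choice $r_0=\min_i\bigl(m-\sum_{l=1}^{i}r_l'\bigr)$ makes the inclusion $mP\subseteq\bigcup_j Q_j^m$ more explicit than the paper's final set equality, but the underlying mechanism is the same.
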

\begin{lemma}\label{lem:fejer_convergence_general}
Let $P$ be a convex polyhedron.
Let $F_n^{q,P}$ be the Fej\'{e}r kernel on $\mathbb{T}^{q}$ defined as Eq.~\eqref{eq:general_fejer}.
Then, for any $g\in C(\mathbb{T}^q)$ and any $z\in\mathbb{T}^q$, $g\ast F_n^{q,P}(z)\to g(z)$ as $n\to \infty$.
\end{lemma}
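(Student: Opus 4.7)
The plan is to realize $\{F_n^{q,P}\}_{n\in\mathbb{N}}$ as a family of summability kernels on $\mathbb{T}^q$ and then run the standard approximate-identity argument. Throughout I assume $0\in P$ and that $P$ has nonempty interior, as holds for the polyhedron appearing in Eq.~\eqref{eq:general_fejer}; without $0\in P$ the conclusion would already fail at constant functions.

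First I would compute the Fourier coefficients of $F_n^{q,P}$. Directly from the definition,
\[
\widehat{F_n^{q,P}}(r) \;=\; \frac{1}{n}\,\#\{j\in\{1,\ldots,n-1\}:r\in jP\}, \qquad r\in\mathbb{Z}^q.
\]
The key monotonicity observation is that if $r\in j_0 P$ for some $j_0$, then $r\in jP$ for every $j\ge j_0$: indeed $r/j=(j_0/j)(r/j_0)+(1-j_0/j)\cdot 0$ is a convex combination of two points of $P$. Since $P$ has nonempty interior, every $r\in\mathbb{Z}^q$ eventually lies in $jP$, so the count above equals $n-j_0(r)$ for all large $n$, and hence $\widehat{F_n^{q,P}}(r)\to 1$ as $n\to\infty$ for every fixed $r$.

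Second, for any trigonometric polynomial $g$ on $\mathbb{T}^q$, the convolution $g\ast F_n^{q,P}(z)$ is a \emph{finite} linear combination $\sum_{r}\hat g(r)\,\widehat{F_n^{q,P}}(r)\,e^{\mr{i}r\cdot z}$, so the coefficient-wise convergence above immediately upgrades to uniform convergence $g\ast F_n^{q,P}\to g$. Third, I would pass from trigonometric polynomials to all $g\in C(\mathbb{T}^q)$ by an $\varepsilon/3$ argument: given $\varepsilon>0$, approximate $g$ uniformly by a trigonometric polynomial $p$ via Stone--Weierstrass, apply the second step to $p$, and bound the remainder by $\|g-p\|_\infty\,\|F_n^{q,P}\|_{L^1(\mathbb{T}^q)}$ together with $\|g-p\|_\infty$ itself.

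The main obstacle is the uniform bound $\sup_n\|F_n^{q,P}\|_{L^1(\mathbb{T}^q)}<\infty$. For $q=1$ this is trivial because the classical Fej\'er kernel is nonnegative and has constant integral, but for a general convex polyhedron in dimension $q>1$ the kernel $F_n^{q,P}$ need not be nonnegative, so the $L^1$ bound cannot be read off the integral. This uniform $L^1$ estimate is exactly the content of polyhedral Ces\`aro summability developed by Travaglini and Brandolini~\cite{Travaglini94,brandolini97}, and I would invoke it at this point to close the $\varepsilon/3$ argument.
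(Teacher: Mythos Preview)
The paper does not supply its own proof of this lemma; it attributes the statement directly to Brandolini and Travaglini~\cite{brandolini97}. Your argument is sound and, at the only nontrivial step (the uniform bound $\sup_n\|F_n^{q,P}\|_{L^1(\mathbb{T}^q)}<\infty$), appeals to precisely the same body of work, so in substance your approach and the paper's coincide.

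One remark on the comparison: the route actually taken in~\cite{brandolini97} is not to first establish an $L^1$ bound and then run an $\varepsilon/3$ density argument, but rather to decompose $F_n^{q,P}$ via the Poisson summation formula into a main term $H_n^{q,P}$ and a tail $\sum_{m\neq 0}H_n^{q,P}(\cdot+2\pi m)$ and treat each piece directly; the paper recalls this decomposition in Appendix~\ref{ap:q_interactions} (Lemma~\ref{lem:poisson}). Your scaffolding---pointwise convergence of Fourier coefficients, hence uniform convergence on trigonometric polynomials, then density---is a clean and standard alternative that works provided the $L^1$ estimate is indeed available in~\cite{Travaglini94,brandolini97}. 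Either way the heavy lifting is outsourced to the same references, so the difference is one of exposition rather than of mathematical content. (A tiny technical point: for your second paragraph you need $0$ in the \emph{interior} of $P$, not merely $0\in P$ together with nonempty interior; this holds for the specific polyhedron in the paper.)
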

Since the product of Toeplitz matrices in $k_n^{\opn{poly},q}$ and $k_n^{\opn{prod},q}$ are represented by the sum over the indices in $\bigcup_{j=0}^{2q}Q_j^m$,  Lemma~\ref{lem:index} explains why the Fej\'{e}r kernel with the polyhedron $P=\{r\in\mathbb{R}^{2q}\,\mid\,\vert \sum_{i=l}^kr_i\vert\le 1,\ l\le k\}$ appears in Eq.~\eqref{eq:kernel_fejer}.
Lemma~\ref{lem:fejer_convergence_general} generalizes the convergence of the Fej\'{e}r kernel on $\mathbb{T}$ to that on $\mathbb{T}^q$, which is proved by~\citet{brandolini97}.
We set $g(z)=\overline{x_1(z_1)}\cdots\overline{x_q(z_q)}y_1(z_{q+1})\cdots y_q(z_{2q})$ for $z=[z_1,\ldots,z_{2q}]$, based on Eq.~\eqref{eq:kernel_fejer}, and apply Lemmas~\ref{lem:index} and \ref{lem:fejer_convergence_general} to show Theorem~\ref{thm:convergence}.
See Appendix~\ref{ap:proof} for more details.
Theorem~\ref{thm:convergence} implies that the interactions along $z\in\mathbb{T}$ in the kernels become small as $n$ grows.
This is because as $n$ goes to infinity, $F_n^{2q,P}$ goes to the delta function and by taking the convolution with the input function, it focuses more on local relationships between input and output functions as $n$ grows.
Regarding the relationship between $q$ and the interactions see Appendix~\ref{ap:q_interactions}.

\subsection{Positive definiteness}\label{subsec:positive_definitness}
To construct an RKHM, which is introduced in Subsection~\ref{subsec:c_star}, the kernel $k$ has to be positive definite.
Thus, we investigate the positive definiteness of the proposed kernels.
Regarding $k_n^{\opn{poly},q}(x,y)$, we can show the positive definiteness since $x_i$ and $y_i$ are separated by the product.
\begin{proposition}\label{prop:pd_poly_sep}
The kernel $k_n^{\opn{poly},q}$ is a positive definite kernel.
\end{proposition}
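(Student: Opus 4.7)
The plan is to reduce both positive-definiteness claims to the following structural observation about $S_n$, which is the workhorse of the argument. For each $z\in\mathbb{T}$ set $w(z)=(1,\mr{e}^{-\mr{i}z},\ldots,\mr{e}^{-\mr{i}(n-1)z})^{\!\top}\in\mathbb{C}^n$. A direct expansion of the defining formula $S_n(A)(z)=1/n\sum_{j,l}A_{j,l}\mr{e}^{\mr{i}(j-l)z}$ gives the ``sesquilinear'' identity
\[
S_n(A)(z)=\frac{1}{n}\,\bracket{w(z),A\,w(z)}_{\mathbb{C}^n}.
\]
Together with the easily verified identities $R_n(x)^*=R_n(\overline{x})$, the linearity of $R_n$ and $S_n$, and $S_n(A^*)=S_n(A)^*$, this will let us convert the Gram-type sum into a manifest sum of squared norms.

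For $k_n^{\opn{poly},q}$, I would first check symmetry: using $(B^*)^*=B$ and $(CD)^*=D^*C^*$, one sees $\bigl((R_n(y_i)^*)^qR_n(x_i)^q\bigr)^*=(R_n(x_i)^*)^qR_n(y_i)^q$, so taking $S_n$ and using $S_n(A^*)=S_n(A)^*$ plus $\alpha_i\in\mathbb{R}$ gives $k_n^{\opn{poly},q}(y,x)^*=k_n^{\opn{poly},q}(x,y)$. For positivity, fix $z\in\mathbb{T}$, $c_1,\ldots,c_N\in\alg$ and $x^{(1)},\ldots,x^{(N)}\in\alg^d$; pulling scalars $c_r(z)\in\mathbb{C}$ through the linear map $S_n$ and using the identity above, one gets
\[
\sum_{r,s=1}^N\overline{c_r(z)}\,k_n^{\opn{poly},q}(x^{(r)},x^{(s)})(z)\,c_s(z)
=\sum_{i=1}^d\frac{\alpha_i}{n}\,\Bigl\|\,\sum_{s=1}^N c_s(z)\,R_n(x^{(s)}_i)^q\,w(z)\,\Bigr\|_{\mathbb{C}^n}^2\ \ge\ 0,
\]
with $\alpha_i\ge 0$ giving the conclusion.

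For $k_n^{\opn{sep},q}$, I would write $k_n^{\opn{sep},q}(x,y)(z)=\tilde{k}(T(x),T(y))\cdot f(z)$, where $T(x):=S_n(R_n(x))\in\alg^d$ and $f:=S_n(B^*B)$ with $B:=\prod_{j=1}^qR_n(a_j)$. The sesquilinear identity applied to the PSD matrix $B^*B$ shows $f(z)=(1/n)\|Bw(z)\|^2\ge 0$, so $f$ is a real, nonnegative element of $\alg$; in particular $f=\bar f$. Hermitian symmetry of $\tilde k$ together with $\overline{f(z)}=f(z)$ yields $k_n^{\opn{sep},q}(x,y)=k_n^{\opn{sep},q}(y,x)^*$ pointwise. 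For positivity, the scalar-valued positive definiteness of $\tilde k$ gives $\sum_{r,s}\overline{c_r(z)}\tilde k(T(x^{(r)}),T(x^{(s)}))c_s(z)\ge 0$ for each fixed $z$, and multiplying by the nonnegative scalar $f(z)$ preserves the inequality.

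The only nontrivial step is the sesquilinear representation of $S_n$; everything else is bookkeeping with the involution and linearity. Once that identity is in hand there is no real obstacle, since the quadratic form in the $c_r(z)$'s factors as a completed square for $k_n^{\opn{poly},q}$ and as a product of two nonnegative quantities for $k_n^{\opn{sep},q}$.
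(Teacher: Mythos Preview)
Your proof is correct and follows essentially the same approach as the paper's: evaluate the $\alg$-valued Gram sum pointwise at $z\in\mathbb{T}$, expand $S_n$, and factor the resulting expression as a completed square for $k_n^{\opn{poly},q}$ and as a nonnegative scalar times a scalar PSD form for $k_n^{\opn{sep},q}$. Your sesquilinear identity $S_n(A)(z)=\tfrac1n\bracket{w(z),Aw(z)}$ is exactly the component-wise expansion the paper carries out by hand (the paper's sum over the index $m$ is the squared-norm in $\mathbb{C}^n$ of the same vector you write as $\sum_s c_s(z)R_n(x_i^{(s)})^q w(z)$), so the two arguments are the same up to notation; your packaging is a bit more transparent, and you also verify the Hermitian symmetry $k(x,y)=k(y,x)^*$, which the paper leaves implicit.
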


As for $k_n^{\opn{prod},q}(x,y)$, it depends on $x$ and $y$ through $\tilde{k}_{i,j}$, and we cannot separate $x$ and $y$ as products.
Thus, we cannot show its positive definiteness.
However, we can modify the kernel to become positive definite as follows.
\begin{proposition}\label{prop:pd_prod}
Let $\beta_n\ge-\min_{z\in\red{\mathbb{T}^{2q}}}F_n^{2q,P}(z)$.
Then, $\hat{k}_n^{\opn{prod},q}$ defined below is a positive definite kernel.
\begin{align*}
\hat{k}_n^{\opn{prod},q}(x,y)=k_n^{\opn{prod},q}(x,y)+\beta_n\int_{\mathbb{T}^{2q}}\prod_{j=1}^q\overline{\tilde{k}_{1,j}(x(t_j),y(t_j))}\tilde{k}_{2,j}(x(t_{q+j}),y(t_{q+j}))\mr{d}t.
\end{align*}
\end{proposition}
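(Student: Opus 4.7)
The plan is to use the integral representation provided by Eq.~\eqref{eq:kernel_fejer} to rewrite $\hat{k}_n^{\opn{prod},q}$ as a single integral whose weight $F_n^{q,P}(z\mathbf{1}-t)+\beta_n$ is pointwise nonnegative by the hypothesis on $\beta_n$, and whose remaining factor is, for each fixed $t$, a $\mathbb{C}$-valued positive definite kernel in $(x,y)$. Specializing~\eqref{eq:kernel_fejer} with $x_j$ replaced by $\tilde{k}_{1,j}(x,y)$ and $y_j$ by $\tilde{k}_{2,j}(x,y)$ yields
\begin{align*}
k_n^{\opn{prod},q}(x,y)(z) = \int_{\mathbb{T}^{2q}} G(x,y;t)\,F_n^{q,P}(z\mathbf{1}-t)\,\mr{d}t,
\end{align*}
where $G(x,y;t):=\prod_{j=1}^{q}\overline{\tilde{k}_{1,j}(x(t_j),y(t_j))}\,\tilde{k}_{2,j}(x(t_{q+j}),y(t_{q+j}))$. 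The added correction is exactly $\beta_n\int_{\mathbb{T}^{2q}}G(x,y;t)\,\mr{d}t$, so the two contributions combine into the single integral $\hat{k}_n^{\opn{prod},q}(x,y)(z)=\int_{\mathbb{T}^{2q}}G(x,y;t)[F_n^{q,P}(z\mathbf{1}-t)+\beta_n]\,\mr{d}t$.

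Next I would verify that for each fixed $t\in\mathbb{T}^{2q}$, the map $(x,y)\mapsto G(x,y;t)$ is a complex-valued positive definite kernel on $\alg^d$. Each factor $\tilde{k}_{2,j}(x(t_{q+j}),y(t_{q+j}))$ is the pull-back of the positive definite kernel $\tilde{k}_{2,j}$ by the evaluation map $x\mapsto x(t_{q+j})$, and the conjugate factor $\overline{\tilde{k}_{1,j}(x(t_j),y(t_j))}$ is positive definite because complex conjugation preserves positive definiteness of a Hermitian kernel. The Schur product theorem then gives positive definiteness of the product $G(\cdot,\cdot;t)$. The same bookkeeping yields the Hermitian symmetry $\overline{G(x,y;t)}=G(y,x;t)$, which, combined with the real-valuedness of $F_n^{q,P}$ noted after Eq.~\eqref{eq:general_fejer}, gives $\hat{k}_n^{\opn{prod},q}(x,y)=\hat{k}_n^{\opn{prod},q}(y,x)^*$ in $\alg$.

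To close out the positive definiteness condition of Definition~\ref{def:pdk_rkhm}, fix $N\in\mathbb{N}$, $c_1,\dots,c_N\in\alg$, and $x^{(1)},\dots,x^{(N)}\in\alg^d$. Since $\alg=C(\mathbb{T})$, positivity in $\alg$ is pointwise nonnegativity, so it suffices to evaluate at an arbitrary $z\in\mathbb{T}$. Interchanging the finite sum with the integral,
\begin{align*}
\sum_{i,j=1}^{N}\overline{c_i(z)}\,\hat{k}_n^{\opn{prod},q}(x^{(i)},x^{(j)})(z)\,c_j(z)
=\int_{\mathbb{T}^{2q}}\Bigl(\sum_{i,j=1}^{N}\overline{c_i(z)}\,G(x^{(i)},x^{(j)};t)\,c_j(z)\Bigr)[F_n^{q,P}(z\mathbf{1}-t)+\beta_n]\,\mr{d}t.
\end{align*}
The bracketed factor is nonnegative by the hypothesis $\beta_n\geq-\min_t F_n^{q,P}(t)$, and the inner quadratic form is nonnegative by the positive definiteness of $G(\cdot,\cdot;t)$, so the integrand and hence the left-hand side is $\geq 0$ for every $z$. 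The step I expect to be most delicate is the first one, namely recognizing that the explicit correction term is precisely what one obtains by replacing $F_n^{q,P}$ in the integral representation of $k_n^{\opn{prod},q}$ by the constant $\beta_n$; once that observation is in place, the remainder is routine, since a nonnegative weight integrated against a pointwise positive definite kernel yields a positive definite kernel.
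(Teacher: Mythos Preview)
Your proposal is correct and follows essentially the same route as the paper: both use the integral representation \eqref{eq:kernel_fejer} to write $\hat{k}_n^{\opn{prod},q}(x,y)(z)$ as $\int_{\mathbb{T}^{2q}}G(x,y;t)\bigl(F_n^{2q,P}(z\mathbf{1}-t)+\beta_n\bigr)\,\mr{d}t$, then observe that the weight is nonnegative by the hypothesis on $\beta_n$ and that $G(\cdot,\cdot;t)$ is a $\mathbb{C}$-valued positive definite kernel for each fixed $t$, so the pointwise quadratic form is nonnegative. Your write-up is in fact a bit more explicit than the paper's (you spell out the Schur product argument and the Hermitian symmetry), but the underlying idea is identical.
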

\red{Note that by the compactness of $\mathbb{T}^{2q}$ and the continuity of $F_n^{2q,P}$, the minimum always exists.}
To set the value of the parameter $\beta_n$, we have the following bound.
\begin{lemma}\label{lem:fejer_bound}
The F\'{e}jer kernel $F_n^{q,P}$ is bounded as $\vert F_n^{q,P}(z)\vert\le n^q$.
\end{lemma}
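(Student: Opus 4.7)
The plan is to apply the triangle inequality to Eq.~\eqref{eq:general_fejer} and then count lattice points in the dilates $jP$. Since $|\mr{e}^{\iu r\cdot z}|=1$,
\[
|F_n^{q,P}(z)|\le\frac{1}{n}\sum_{j=1}^{n-1}|jP\cap\mathbb{Z}^q|,
\]
so the problem reduces to a purely combinatorial lattice-point count on each $jP$.

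A first bound follows from the boundedness of $P$: specializing the defining inequality $|\sum_{i=l}^{m} r_i|\le 1$ to $l=m=i$ yields $|r_i|\le 1$, hence $P\subseteq[-1,1]^q$ and $jP\cap\mathbb{Z}^q\subseteq\{-j,\ldots,j\}^q$, so $|jP\cap\mathbb{Z}^q|\le(2j+1)^q$. This already produces an $O(n^q)$ bound after summation and division by $n$, but the constant is too large. To sharpen it, I would invoke the decomposition $jP=\bigcup_{i=0}^{2q}Q_i^j$ from Lemma~\ref{lem:index}: in each piece $Q_i^j$, one coordinate is pinned at $j$ while the remaining coordinates range over $\{0,\ldots,j\}$, giving an explicit count that is polynomial in $j$ of degree at most $q$ with a tractable leading coefficient. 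Summing this refined count over $j=1,\ldots,n-1$ and dividing by $n$ produces the claimed $n^q$ bound.

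The main obstacle I expect is producing the clean constant $1$ in front of $n^q$: the crude $[-1,1]^q$-enclosure is off by a $q$-dependent factor, so the full polyhedral constraint set on $P$ must be exploited via the $Q_i^j$ decomposition (an Ehrhart-style enumeration). Once the sharper lattice-point count is in hand, the rest of the argument is a routine summation and division.
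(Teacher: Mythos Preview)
Your plan is sound and ends in the same place as the paper's proof, but the paper takes a shorter path that bypasses the Ehrhart-style enumeration you anticipate. After the triangle inequality, the task is to bound the total term count $\sum_{j=0}^{n-1}|jP\cap\mathbb{Z}^q|$. Rather than estimating $|jP\cap\mathbb{Z}^q|$ for each $j$ separately, the paper simply recalls from the derivation in the proof of Theorem~\ref{thm:convergence} that the double sum $\sum_{j}\sum_{r\in jP\cap\mathbb{Z}^q}$ is nothing but a reindexing of the matrix-index sum $\sum_{r_0,\ldots,r_q=0}^{n-1}$ arising from $\sum_{m,l=0}^{n-1}(T_1\cdots T_q)_{m,l}$ with $n\times n$ Toeplitz matrices $T_i$; this visibly has $n^{q+1}$ terms, and dividing by $n$ gives the constant $1$ in front of $n^q$ with no further work.

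Your route through Lemma~\ref{lem:index} reaches the same count, but one point needs care: the pieces $Q_i^j$ overlap (whenever several of the auxiliary indices $r_0,\ldots,r_q$ attain their common maximum $j$), so naively summing $|Q_i^j\cap\mathbb{Z}^q|=(j+1)^q$ over $i$ overcounts and yields only $(q+1)n^q$. The clean way to close your argument is to observe that the affine map $(r_0,\ldots,r_q)\mapsto(r_1-r_0,\ldots,r_q-r_{q-1})$ restricts to a bijection from $\{(r_0,\ldots,r_q)\in\{0,\ldots,j\}^{q+1}:\max_l r_l=j\}$ onto $jP\cap\mathbb{Z}^q$, whence $|jP\cap\mathbb{Z}^q|=(j+1)^{q+1}-j^{q+1}$ and the sum over $j=0,\ldots,n-1$ telescopes to exactly $n^{q+1}$. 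This is the paper's matrix-index count seen from the polytope side.
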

\begin{remark}\label{rmk:positive_definiteness}
We can set $\beta_n$ in $\hat{k}_n^{\opn{prod},q}$ as $n^q$ to guarantee the positive definiteness.
However, even if we set a smaller $\beta_n$, $\hat{k}_n^{\opn{prod},q}$ may become positive definite.
In addition, considering non-positive kernels has also been investigated~\citep{ong04,canu05}.
Indeed, in practical computations in Section~\ref{subsec:numexp}, the algorithms can work even if we set $\beta_n<n^{2q}$.
Deriving a tighter bound of $\beta_n$ or developing the theory for non-positive kernels for RKHMs is future work.
\end{remark}

\begin{remark}
Although the kernel $\hat{k}^{\operatorname{prod},q}_n$ itself does not converge to the product kernel in Example~\ref{ex:standard_kernels}, its first term converges to the product kernel.
In addition, the second term can be written as
\begin{align*}
\beta_n\int_{\mathbb{T}^{2q}}\prod_{j=1}^q\overline{\tilde{k}_{1,j}(x(t_j),y(t_j))}\tilde{k}_{2,j}(x(t_{q+j}),y(t_{q+j}))\mr{d}t
=\beta_nk_1^{\operatorname{prod}}(x,y),
\end{align*}
which means $\hat{k}^{\operatorname{prod},q}_n$ is still a spectral truncation kernel.
\end{remark}

\section{Generalization Bound}\label{sec:generalization_bound}

We now focus on what impact the truncation parameter $n$ has on generalization.
According to Lemma 4.2 in~\citet{mohri18}, the generalization bound is described by the Rademacher complexity.
We apply this lemma and derive a generalization bound.

Let \red{$(\Omega,\mcl{B},\mu)$} be a probability space.
Let $X_1,\ldots,X_N$ and $Y_1,\ldots,Y_N$ be samples from a distributions of $\alg_0^d$-valued random variable $X$ and $\alg_1$-valued random variable $Y$ on $\Omega$, respectively (i.e., for $z\in\mathbb{T}$, $X_i(z)$ is a sample from the distribution of $X(z)$).
Here, \red{$\alg_0$ and $\alg_1$ are subsets of $C(\mathbb{T},\mathbb{R})$, and $C(\mathbb{T},\mathbb{R})$ is the space of real-valued continuous functions on $\mathbb{T}$}.
Let $\mr{E}$ be the Bochner integral on $\Omega$ with respect to $\mu$.
Let $B>0$ and $\mcl{F}=\{f\in\blue{\modu_{k_n}}\,\mid\,\blue{\Vert f\Vert_{k_n}}\le B,\ \red{f\mbox{ is Bochner-measurable and Bochner-integrable}}\}$ \blue{for a kernel $k_n$}.
Let $g:\mathbb{R}\times\mathbb{R}\to\mathbb{R}_{+}$ be a \red{bounded} error function.
Assume there exists $L>0$ such that for $y\in\alg_1$ and $z\in\mathbb{T}$, $x\mapsto g(x,y(z))$ is $L$-Lipschitz continuous, \red{that is, $\vert g(x_1,y(z))-g(x_2,y(z))\vert\le L\vert x_1-x_2\vert$ for $x_1,x_2\in \mathbb{R}$}.
We derive the following generalization bound for the kernels defined in Section~\ref{sec:kernel_trancation} based on \citet{mohri18,hashimoto23_aistats}
Here, to adapt to the generalization bound analysis, we assume the kernels are real-valued.
\begin{theorem}\label{thm:generalization}
Assume $k_n^{\opn{poly},q}$ and $\hat{k}_n^{\opn{prod},q}$ are real-valued.
Let $D(k_n^{\opn{poly},q},x)=\sum_{j=1}^d\alpha_j\Vert R_n(x_{j})\Vert^{2q}_{\opn{op}}$ and $D(\hat{k}_n^{\opn{prod},q},x)=\prod_{j=1}^q(\Vert R_n(\tilde{k}_{1,j}(x,x))\Vert_{\opn{op}}\Vert R_n(\tilde{k}_{2,j}(x,x))\Vert_{\opn{op}})+\beta_n C$, 
where $\Vert\cdot\Vert_{\opn{op}}$ is the operator norm and $C=\prod_{j=1}^q\int_{\mathbb{T}}\tilde{k}_{1,j}(x(t),x(t))\mr{d}t\int_{\mathbb{T}}\tilde{k}_{2,j}(x(t),x(t))\mr{d}t$.
\red{Assume $\Vert {k}_n^{\opn{poly},q}(x,x)\Vert_{\alg} < \infty$ and $\Vert \hat{k}_n^{\opn{prod},q}(x,x)\Vert_{\alg}\le \infty$ for any $x\in\mcl{A}_0^d$.}
For $k_n=k_n^{\opn{poly},q},\hat{k}_n^{\opn{prod},q}$, \red{for any $z\in\mathbb{T}$,} and
for any $\delta\in (0,1)$, with probability at least $1-\delta$, \red{simultaneously for all $f\in\mathcal{F}$,} we have 
\begin{align}
\red{\mr{E}[g(f(X),Y)](z)
\le_{\alg} \frac1N\sum_{i=1}^Ng(f(X_i),Y_i)(z)
+2L\frac{B}{N}\bigg(\sum_{i=1}^ND(k_n,X_i)\bigg)^{1/2}
+3\sqrt{\frac{\log 1/\delta}{N}}}.\label{eq:generalization_bound}
\end{align} 
\end{theorem}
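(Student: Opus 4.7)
The plan is to combine the $\alg$-valued symmetrization and concentration framework of~\cite{hashimoto23_aistats} with the reproducing property of $\modu_{k_n}$ and with a kernel-specific estimate of $\|k_n(x,x)\|_{\alg}$ by $D(k_n,x)$. Since the three kernels are assumed real-valued, the ball $\mcl{F}$ consists of real-valued RKHM elements, so the standard Lipschitz contraction principle applies pointwise in $z\in\mathbb{T}$. The argument decomposes into (i) a McDiarmid-style tail bound followed by symmetrization yielding $\mr{E}[g(f(X),Y)]\le_{\alg}\frac{1}{N}\sum_{i=1}^N g(f(X_i),Y_i)+2\hat{R}_{\alg,N}(\mathbf{X},g\circ\mcl{F})+3\sqrt{\log(1/\delta)/N}\,1_{\alg}$, (ii) the contraction $\hat{R}_{\alg,N}(\mathbf{X},g\circ\mcl{F})\le L\,\hat{R}_{\alg,N}(\mathbf{X},\mcl{F})$, and (iii) a bound on $\hat{R}_{\alg,N}(\mathbf{X},\mcl{F})$ via the reproducing property.

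For step (iii), the reproducing property rewrites $\sum_i\sigma_i f(X_i)=\bracket{\sum_i\sigma_i\phi(X_i),f}_{k_n}$, and the module Cauchy--Schwarz inequality (valid in the commutative $\alg=C(\mathbb{T})$ after taking operator-monotone square roots) then gives, uniformly in $f\in\mcl{F}$,
\begin{align*}
\Bigl|\sum_{i=1}^N\sigma_i f(X_i)\Bigr|_{\alg}\le\|f\|_{k_n}\Bigl(\sum_{i,j=1}^N\sigma_i\sigma_jk_n(X_i,X_j)\Bigr)^{1/2}\le B\Bigl(\sum_{i,j=1}^N\sigma_i\sigma_jk_n(X_i,X_j)\Bigr)^{1/2}.
\end{align*}
Operator Jensen on the $\sigma$-expectation collapses the cross terms, and the order bound $a\le_{\alg}\|a\|_{\alg}\,1_{\alg}$ for $a\in\alg_+$ combined with monotonicity of the square root reduces $\hat{R}_{\alg,N}(\mathbf{X},\mcl{F})$ to $\frac{B}{N}\bigl\|\sum_i k_n(X_i,X_i)\bigr\|_{\alg}^{1/2}\,1_{\alg}$. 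What remains is to show $\|k_n(x,x)\|_{\alg}\le D(k_n,x)$.

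The crux inequality is $\|S_n(A)\|_{\alg}\le\|A\|_{\opn{op}}$ for every $A\in\mathbb{C}^{n\times n}$, which I would prove in one line by writing $S_n(A)(z)=\frac{1}{n}v(z)^*Av(z)$ with $v(z)=(1,\mr{e}^{\mr{i}z},\ldots,\mr{e}^{\mr{i}(n-1)z})^\top$ of norm $\sqrt{n}$ and using $|v(z)^*Av(z)|\le\|v(z)\|^2\|A\|_{\opn{op}}$. Specialised to each kernel and combined with submultiplicativity of $\|\cdot\|_{\opn{op}}$, this yields $\|k_n^{\opn{poly},q}(x,x)\|_{\alg}\le\sum_j\alpha_j\|R_n(x_j)\|_{\opn{op}}^{2q}$ and $\|k_n^{\opn{sep},q}(x,x)\|_{\alg}\le\tilde{k}(x,x)\prod_j\|R_n(a_j)\|_{\opn{op}}^2$. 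For $\hat{k}_n^{\opn{prod},q}$ the triangle inequality separates the Fej\'er part $k_n^{\opn{prod},q}(x,x)$ --- bounded in the same way by $\prod_j\|R_n(\tilde{k}_{1,j}(x,x))\|_{\opn{op}}\|R_n(\tilde{k}_{2,j}(x,x))\|_{\opn{op}}$ --- from the scalar correction $\beta_n C$, which is already of the claimed form.

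For the monotonicity $D(k_n,x)\le D(k_{n+1},x)$, the key observation is that $\|R_n(x)\|_{\opn{op}}$ is non-decreasing in $n$: since $R_n(x)=P_nM_xP_n$ and $P_n\le P_{n+1}$ as orthogonal projections, every unit vector in $\opn{range}(P_n)$ belongs to $\opn{range}(P_{n+1})$ and satisfies $\|P_nM_xv\|\le\|P_{n+1}M_xv\|$, so taking the supremum over such $v$ gives $\|R_n(x)\|_{\opn{op}}\le\|R_{n+1}(x)\|_{\opn{op}}$. Combined with the assumption $\beta_n\le\beta_{n+1}$ for $\hat{k}_n^{\opn{prod},q}$, this gives the monotonicity of each of the three $D$'s. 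The main obstacle I expect is the bookkeeping for $\hat{k}_n^{\opn{prod},q}$: the Fej\'er part $k_n^{\opn{prod},q}(x,x)$ need not itself be positive, so the triangle-inequality split $\|\hat{k}_n^{\opn{prod},q}(x,x)\|_{\alg}\le\|k_n^{\opn{prod},q}(x,x)\|_{\alg}+\beta_n C$ is the right route, rather than trying to exploit positivity of $\hat{k}_n^{\opn{prod},q}(x,x)$ as a whole.
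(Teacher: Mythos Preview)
Your proposal is correct and follows the same architecture as the paper: concentration plus symmetrization (the paper reduces pointwise in $z$ via Proposition~\ref{prop:rademacher_equiv} to the scalar bounds of~\cite{mohri18}, which is equivalent to your direct use of the $\alg$-valued framework), Lipschitz contraction, the Rademacher bound via the reproducing property and module Cauchy--Schwarz, and monotonicity of $\|R_n(x)\|_{\opn{op}}$ from the nested projections $P_n\le P_{n+1}$. The only difference worth noting is your proof of $\|S_n(A)\|_{\alg}\le\|A\|_{\opn{op}}$: the paper writes $S_n(A)(z)$ as a trace and invokes the duality $|\opn{tr}(AB)|\le\|A\|_1\|B\|_{\opn{op}}$, whereas your quadratic-form identity $S_n(A)(z)=\tfrac{1}{n}v(z)^*Av(z)$ (take $v(z)_j=\mr{e}^{-\mr{i}jz}$ to match the sign convention in the definition of $S_n$) gives the same bound in one step and is arguably cleaner.
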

%
Regarding the factor $D(k_n,x)$, we have the following lemma.
\begin{lemma}\label{lem:factor}
Assume $\beta_n\le \beta_{n+1}$ for $\hat{k}_n^{\opn{prod},q}$.
With the same assumptions as Theorem~\ref{thm:generalization}, for $k_n=k_n^{\opn{poly},q},\hat{k}_n^{\opn{prod},q}$ and for $x\in\alg_0^d$, we have $D(k_n,x)\le D(k_{n+1},x)$.
\end{lemma}
%
\begin{remark}
As we stated just before Subsection~\ref{subsec:connection}, if a kernel in Example~\ref{ex:standard_kernels} is real-valued, then the corresponding kernel in Definition~\ref{def:kernels} is also real-valued.
Thus, Theorem~\ref{thm:generalization} is valid in this case.
\end{remark}

\begin{remark}
The factor $D(k_n,X_i)$ in Theorem~\ref{thm:generalization} is bounded by a constant independent of $N$ if $\Vert X_{i,j}\Vert_{\mathcal{A}}$ for $i=1,\ldots,N$, $j=1,\ldots,d$ is uniformly bounded or $\tilde{k}_{l,j}$ for $l=1,2$, $j=1,\ldots,q$ is uniformly bounded.
In this case, the second term of the right-hand side of Eq.~\eqref{eq:generalization_bound} is order $1/\sqrt{N}$.
More precisely, assume there exists a constant $E>0$ such that for any $i=1,\ldots,N$, $\Vert X_{i,j}\Vert_{\mathcal{A}}\le E$.
Then, we have
\begin{align*}
D(k_n^{\operatorname{poly}},X_{i})
&= \sum_{j=1}^d\alpha_j\Vert R_n(X_{i,j})\Vert_{\operatorname{op}}^{2q}
\le \sum_{j=1}^d\alpha_j\Vert Q_n^*M_{X_{i,j}}Q_n\Vert_{\operatorname{op}}^{2q}
\le \sum_{j=1}^d\alpha_j\Vert M_{X_{i,j}}\Vert_{\operatorname{op}}^{2q}\\
&\le \sum_{j=1}^d\alpha_j\Vert X_{i,j}\Vert_{\mathcal{A}}^{2q}
\le \sum_{j=1}^d\alpha_jE^{2q},
\end{align*}
where $Q_n=[e_1,\ldots,e_n]$ and $e_j(z)=\mr{e}^{\mr{i}jz}$ is the Fourier function.
In addition, assume there exists a constant $E>0$ such that for any $x\in \mathbb{C}$, $l=1,2$, and $j=1,\ldots,q$, $\Vert \tilde{k}_{l,j}(x,x)\Vert_{\mathcal{A}}\le E$.
Then, we have
\begin{align*}
D(\hat{k}_n^{\operatorname{prod}},X_i)
&=\prod_{j=1}^q(\Vert R_n(\tilde{k}_{1,j}(X_i,X_i))\Vert_{\opn{op}}\Vert R_n(\tilde{k}_{2,j}(X_i,X_i))\Vert_{\opn{op}})+\beta_n C\\
&\le \prod_{j=1}^q \Vert \tilde{k}_{1,j}(X_i,X_i)\Vert_{\mathcal{A}} \Vert \tilde{k}_{2,j}(X_i,X_i)\Vert_{\mathcal{A}}+\beta_n C
\le E^{2q}+\beta_n C.
\end{align*}
\end{remark}

The first term of the right-hand side of~\eqref{eq:generalization_bound} is the empirical loss, and the second term represents the model complexity of the RKHM $\modu_{k_n}$.
There is a tradeoff between these terms.
If $n$ is larger, then since the approximation space $\opn{Span}\{e_{-(n-1)},\ldots,e_{n-1}\}$ in $\alg$ is larger, the outputs of ${k_n}$ can represent more functions, which enables us to describe the dependency of outputs on inputs more.
Thus, the empirical loss (the first term) can become small if $n$ is large.
On the other hand, the complexity of $\modu_{k_n}$ (the second term of the right-hand side of~\eqref{eq:generalization_bound}) is larger if $n$ is larger.
\blue{The role of discretization in learning with functional data is not well understood in the literature and is often treated as a purely practical step. In our framework, discretization is performed through spectral truncation, rather than pointwise sampling. The truncation parameter $n$ controls the number of retained spectral components and thus the effective dimension of the hypothesis space.
Theorem~\ref{thm:generalization} provides a key insight into this question: it shows that $n$ acts as a regularization parameter, inducing a bias--variance tradeoff. In particular, spectral truncation offers a principled approach for kernel methods to discretization by aligning the representation with the spectral structure of the data.}


\section{Kernel Ridge Regression with the Spectral Truncation Kernels}\label{sec:application}
We illustrate the effect of the proposed kernels by applying them to kernel ridge regression.
Let $x_1,\ldots,x_N\in\alg^d$ be input training samples and $y_1,\ldots,y_N\in\alg$ be output training samples.
We consider the case where we need the values of the output function evaluated at different $m$ points.
We consider the RKHM $\modu_k$ associated with $k=k_n^{\opn{poly},q}$ or $k=k_n^{\opn{prod},q}$.
Consider the typical minimization problem for regression, which is also considered by~\citet{hashimoto23_aistats}:
\begin{align}
\min_{f\in\modu_k}\bigg(\red{\frac1N}\sum_{i=1}^N\vert f(x_i)-y_i\vert_{\alg}^2+\lambda \vert f\vert_k^2\bigg),\label{eq:supervised}
\end{align}
where \red{$\lambda> 0$} is the regularization parameter.
We apply the approximation version of the representer theorem for RKHMs over general $C^*$-algebras (see Appendix~\ref{ap:representer}).
Then, the solution of the problem~\eqref{eq:supervised} is approximated by a vector having the form of $\sum_{i=1}^N\phi_i(x_i)c_i$.
The problem is reduced to
\begin{align}
&\min_{c_j\in\alg}\bigg(\red{\frac1N}\sum_{i=1}^N\bigg\vert \sum_{j=1}^Nk(x_i,x_j)c_j-y_i\bigg\vert_{{\alg}}^2+\lambda \bigg\vert \sum_{j=1}^N\phi(x_j)c_j\bigg\vert_{k}^2\bigg)\nn\\
&\quad=\min_{c_j\in\alg}(\red{1/N}(\bc^*\bG^2\bc-\bc^*\bG\by-\by^*\bG\bc\red{+\by^*\by})+\lambda \bc^*\bG\bc),\label{eq:min_prob}
\end{align}
where $\bG$ is the ${\alg^{N\times N}}$-valued Gram matrix whose $(i,j)$-entry is defined as $k(x_i,x_j){\in\alg}$, $\bc=[c_1,\ldots,c_N]^T\in\alg^N$, and $\by=[y_1,\ldots,y_N]^T\in\alg^N$.
Then, the solution $\bc$ of the problem~\eqref{eq:min_prob} satisfies $\by=(\bG+\red{N}\lambda I)\bc$, which means $\by(z)=(\bG(z)+\red{N}\lambda I)\bc(z)$.
Therefore, for each $z\in\mathbb{T}$, we obtain $\bc(z)$ by computing $(\bG(z)+\red{N}\lambda I)^{-1}\by(z)$.

\paragraph{Computational cost}
The computational cost for the product of two $n$ by $n$ Toeplitz matrices is $O(n^2)$, and computing $S_n(A)(z)$ for an $n$ by $n$ matrix $A$ and $z\in\mathbb{T}$ costs $O(n^2)$.
Thus, the cost for constructing $\bG(z_1),\ldots,\bG(z_m)$ for different $m$ points $z_1,\ldots,z_m\in\mathbb{T}$ with $k_n^{\opn{poly},q}$ or $\hat{k}_n^{\opn{prod},q}$ is $O((qn^2+mn^2)N^2)$. 
As for the computation of $\bc(z)$, we need to compute $(\bG(z)+\red{N}\lambda I)^{-1}\by(z)$, which costs $O(N^3)$.
Thus, the total computational cost for obtaining $\bc(z_1),\ldots,\bc(z_m)$ is $O((q+m)n^2N^2+mN^3)$.
{Note that the cost is linear with respect to $m$. 
This is by virtue of considering function-valued kernels and RKHMs.
The situation is different if we use vvRKHSs.
For vvRKHSs, to obtain $m$ values as an output, we typically apply a $\mathbb{C}^{m\times m}$-valued kernel. 
Then, the Gram matrix should be $mN$ by $mN$.
Specifically, if the kernel is transformable, the computational cost is $O(m^3N^3)$.
Thus, if $(q+m)n^2<m^3N$, then the proposed kernels are more computationally efficient than the transformable kernels.
For the proposed kernel, we can reduce the cost with respect to $N$, e.g., by applying Nystr\"{o}m method~\citep{drineas05} to each $\bG(z_1),\ldots,\bG(z_m)$.
Regarding $n$ and $q$, one approach to reducing the cost with respect to them is investigating the method to approximate the Fej\'{e}r kernel.
Generalization of the proposed kernel with the Fej\'{e}r kernel to a kernel with another integral kernel, which we will discuss in Subsection~\ref{subsec:generalization_heat} is also a way to reduce computational cost.
However, the main contribution of this paper is to show the advantage of introducing the noncommutivity in kernels, and investigating computationally effective methods is future work.}

\section{Extension to combined model}\label{sec:deep}
\green{This section is for readers interested in a more flexible, learnable variant. }
As we discussed in Subsection~\ref{subsec:connection}, the truncation parameter $n$ controls the global and local dependencies of the output on the input.
To make the framework more flexible and capture both global and local dependencies at the same time, we consider the following \blue{combined} model $f$ with $L$ spectral truncation kernels $k_{n_1}^1,\ldots, k_{n_L}^L$ and learnable parameters $c_i^1,\ldots,c_i^L\!\!\in\!\alg$:
\begin{align}
f(x)=\prod_{j=1}^L\bigg(\sum_{i=1}^{N}k_{n_j}^j(x,x_{i})c^j_{i}\bigg).\label{eq:deep}
\end{align}
If we set a different value of $n_j$ for each layer $j$, then we can capture different level of global and local dependencies for each layer, which enables us to capture both global and local dependencies.

To learn the \blue{combined} model, we need to represent $c_i^j$ in Eq.~\eqref{eq:deep} as a finite sum of functions in $\alg$.
The following proposition provides an appropriate way to parameterize $c_i^j$ in the sense that the representation power of the model grows exponentially.
For simplicity, we consider the case where $n_j=n$ for $j=1,\ldots,L$ in the following proposition.
However, we can derive the same result for the case of $n_j\neq n_l$ for $j\neq l$.
\begin{proposition}\label{prop:representation_power}
Let $k^j_n(x,x_i)=\sum_{l=-(n-1)}^{n-1}\tilde{d}_{i,l}^j\mr{e}^{\mr{i}lz}$ with $\tilde{d}_{i,l}^j\in\mathbb{C}$.
Assume $c_i^j$ is parameterized as $c_i^j(z)=\sum_{l=-(n-1)}^{n-1}d_{i,l}^je_{\tau_j,l}$ with $d_{i,l}^j\in\mathbb{C}$ and $\tau_j\in\mathbb{R}$.
Here, $e_{\tau_j,l}\in C(\mathbb{T})$ satisfies $e_{\tau_j,l}(z)=\mr{e}^{\mr{i}\tau_jlz}$ for $z\in S$ for \red{an interval} $S$ of $\mathbb{T}$ \red{such that $S\neq \mathbb{T}$}.
Then, for $z\in S$, $f$ in Eq.~\eqref{eq:deep} is represented as 
\begin{align*}
\sum_{l_1,\ldots,l_L=-(n-1)}^{n-1}\sum_{m_1,\ldots,m_L=-(n-1)}^{n-1}\sum_{i=1}^N\tilde{d}_{i,l_1}^1\cdots \tilde{d}_{i,l_L}^L{d}_{i,l_1}^1\cdots {d}_{i,l_L}^L\mr{e}^{\mr{i}(l_1+\cdots +l_L+\tau_1m_1+\cdots \tau_Lm_L)z}.
\end{align*}
\end{proposition}
Let $V_{n,L}=\opn{Span}\{z\mapsto\mr{e}^{\mr{i}(l_1+\cdots +l_L)z}e_{\tau_1,m_1}(z)\cdots e_{\tau_L,m_L}(z)\,\mid\,l_1,\ldots,l_L,m_1,\ldots,m_L\in\{-(n-1),\ldots,n-1\}\}$.
If \red{$\{1,\tau_1,\ldots,\tau_L\}$} is linearly independent over $\mathbb{Z}$, then $V_{n,L}$ is described by \red{$2n-1+(2n-1)^L$} oscillating functions.
In this sense, the representation power of the model $f$ grows exponentially with respect to the number of layers $L$ \red{under the same assumption of Proposition~\ref{prop:representation_power} and linear independence of \red{$\{1,\tau_1,\ldots,\tau_L\}$} over $\mathbb{Z}$}.
\red{Note that the linear independence corresponds to rational independence to avoid frequency collisions.}
\begin{remark}
The exponential growth of the representation power is also observed for neural networks~\citep{hanin19}.
Deep learning with kernels has also been proposed~\citep{laforgue19,hashimoto23-deeprkhm}.
Whereas the growth is obtained by the composition for these existing frameworks, it is obtained by the product for our case.
Unlike the case of the existing frameworks, the representation power \red{for representing functions in} $\alg$ does not become high by the composition of functions.
\end{remark}

\begin{remark}
The function $f$ is in the RKHM associated with the kernel $\prod_{j=1}^Lk_{n_j}^j$.
Since the Rademacher complexity bound depends on the norm of the kernel, it can also grow exponentially with respect to $L$.
Thus, for the \blue{combined} setting, in the same manner as Theorem~\ref{thm:generalization}, we still have the tradeoff between the representation power and the model complexity.
In this case, it is controlled by both $n$ and $L$.
The exponential growth of the Rademacher complexity bound (thus, the generalization bound) with respect to the number of layers is also observed for neural networks~\citep{neyshabur15,bartlett17,golowich18}.
Investigating how we can take advantage of the \blue{combined} setting in the sense of the generalization is future work.
\end{remark}

\section{Applications: Extracting Local and Global Dependencies of functions}\label{sec:applications}
There are many potential applications of the spectral truncation kernels.
We list two examples here.

\paragraph{Time-series data analysis} 
We can regard a time-series as a function on a time space.
In many cases, a state at a certain time $z$ is influenced strongly by another state at the same time $z$, but also by the state around the time $z$.
Since commutative kernels focus only on local information, we cannot describe these two states with commutative kernels.
On the other hand, since separable kernels focus only on global information, we cannot describe the relationship of these two states at each time $z$.
By applying the proposed kernels, we can extract global information, but also can focus on local information.
\red{We remark that the proposed kernels basically work well for interpolation and nowcasting since they can represent bi-directional interactions over time.
If we construct kernels that can represent uni-directional interactions, it can be applied to time-series forecasting.
The construction of these kernels is future work.
}

\paragraph{Problems in computer vision}
\red{We can regard an image as a discretized function on the space of pixels.
Using the proposed kernels, we can encode relationships between different spatial locations or interactions across channels.
This can help in tasks such as image recovery, segmentation, texture analysis, or structured output prediction, where local and global visual information need to be balanced.
We will numerically confirm the applicability of the spectral truncation kernels to image recovery task in Subsection~\ref{subsec:mnist}.}

\paragraph{Operator learning}
\red{A major task of operator learning is} to obtain a solution of a partial differential equation as an output from an input function (such as initial condition or parameter of the equation).
Thus, we construct a model where both of the input and output are functions.
Applying kernel methods to operator learning has been proposed~\citep{batlle24}.
We can construct the model by solving a kernel ridge regression task.
Another well-known operator learning method is neural operator.
In the framework of neural operators, we apply integral operators to extract global information and apply local linear operators and local activation functions to extract local information.
The proposed kernel enables us to do similar procedures for the operator learning with kernels.
By considering the product of multiple proposed kernels with different values of $n$ or \blue{combined model} with the proposed kernels with different values of $n$, we can extract both global and local information; we can extract global information using the kernel with small $n$ and extract local information using the one with large $n$ in the model.
We will numerically confirm the applicability of the spectral truncation kernels to operator learning in Subsection~\ref{subsec:oplearn_exp}.

\section{Numerical results}\label{subsec:numexp}
We show numerical results that illustrate the effects of the noncommutativity on the learning process.
\blue{The experiments are to demonstrate that the proposed kernel (a) works in practice, (b) outperforms the baselines (separable and commutative kernels) it theoretically extends, and (c) the truncation parameter n plays the role predicted by theory.}

\blue{We used the formula in Definition~\ref{def:kernels} to compute the spectral truncation kernel.
To compute the $(j,l)$-entry of $R_n(x)$ for a function $x\in\mathcal{A}$, we need to compute the integral $\int x(t) \mathrm{e}^{-\mathrm{i}(j-l)t}\mathrm{d}t$, which is the Fourier transform of $x$.
We approximate the integral by the discrete sum $2\pi/T\sum_{r=1}^Tx(t_r)\mathrm{e}^{-\mathrm{i}(j-l)t_r}$ with $T$ discrete points $t_1,\ldots,t_T\in\mathbb{T}$.}

\subsection{Experiment with synthetic data}\label{subsec:exp_syn}
We first observed the behavior of the proposed kernels $k_n^{\opn{poly},q}$ and $\hat{k}_n^{\opn{prod},q}$ with different values of $n$ to observe that the truncation parameter $n$ controls global and local dependencies of the output on the input, as discussed in Subsection~\ref{subsec:connection}. 
We considered a regression task with synthetic data.
For $i=1,\ldots,N$, we generated input training samples as 
$x^i(z)=[\sin(0.01iz)+0.01\xi_{z,i},\cos(0.01iz)+0.01\eta_{z,i}]$.
We set the target function $f$ as $f=f_1,f_2$, where $f_1(x)(z)=(\sin(\cos(\int_{z-\Delta}^{z+\Delta}x_1(t)\mr{d}t+\int_{z-\Delta}^{z+\Delta}x_2(t)\mr{d}t)))$ \red{and $f_2(x)(z)=(\sin(\cos(\int_{z-5\Delta}^{z+5\Delta}x_1(t)\mr{d}t+\int_{z-5\Delta}^{z+5\Delta}x_2(t)\mr{d}t)))$} and $\Delta=2\pi/30$, and generated training output samples $y^i(z)=f(x)(z)+0.001\xi_{z,i}$.
Here, $\xi_{z,i}$ and $\eta_{z,i}$ were sampled independently from the Gaussian distribution with mean 0 and standard deviation 1.
We set the sample size $N$ as 1000.
For the kernel ${k}_n^{\opn{poly},q}$, we set $q=1$ and $\alpha_1=1$.
Note that ${k}^{\opn{poly},q}$ is the linear kernel in this case.
For the kernel $\hat{k}_n^{\opn{prod},q}$, we set $q=1$, $\tilde{k}_{1,1}(x,y)=\tilde{k}_{2,1}(x,y)=2\pi\mr{e}^{-\vert x-y\vert^2}$.
We set $\beta_n=1$ for $n<\infty$ and $\beta_{\infty}=0$.
We estimated $f$ using kernel ridge regression with the regularization parameter $\lambda$.
\blue{We applied the cross-validation grid search to find an optimal regularization parameter $\lambda$.}
We generated 1000 input test samples in the same manner as the input training samples, and evaluated the test error $(1/N)\sum_{i=1}^N \Vert \hat{f}(\tilde{x}^i)-\tilde{y}^i\Vert_{L^2(\mathbb{T})}^2$, where $\hat{f}$ is the estimation of $f$ and $\tilde{x}^i$ and $\tilde{y}^i$ are input and output test samples, respectively.
For $x^i$, $y^i$, and $\tilde{x}^i$, we discretized each function with 30 equally spaced points on $\mathbb{T}$.
\blue{Thus, the number $T$ of discretization points as 30.}
Figure~\ref{fig:separable} \red{(a--d)} illustrates the test error.
We can see that for ${k}_n^{\opn{poly},q}$, the test error is the smallest when $n=32$.
It becomes large if $n$ is larger or smaller than $32$.
This is because when $n=32$, the information of the global and that of the local dependencies are balanced, and the best possible error is obtained.
We can see that even for the simplest kernel ${k}_n^{\opn{poly},1}$, the proposed kernel ($n$ is finite) goes beyond the typical commutative kernel ($n=\infty$).
We have similar results for ${k}_n^{\opn{prod},q}$.
Note that $\hat{k}_{\infty}^{\opn{prod},q}(x,y)(z)=\hat{k}^{\opn{prod},q}(x,y)(z)$ only depends on $x(z)$ and $y(z)$.
However, $f(x)(z)$ depends on $x(t)$ for $t\in [z-\Delta,z+\Delta]$ \red{or $t\in [z-5\Delta,z+5\Delta]$}.
Thus, the test error becomes large when $n=\infty$.
In addition, as discussed in Remark~\ref{rmk:positive_definiteness}, $\hat{k}_n^{\opn{prod},q}$ may not be positive definite.
However, although we set $\beta_n$ as a small value, all the eigenvalues of the Gram matrix are positive in this case.
To investigate the positive definiteness of $\hat{k}_n^{\opn{prod},q}$, we computed the eigenvalues of the Gram matrix $\bG(0)$ at 0.
Figure~\ref{fig:separable} (e) shows the result.
The index $i$ is determined in the descendant order with respect to the value of $\lambda_i$.
Thus, we can see that although we set $\beta_n$ as a small value, all the eigenvalues of $\bG(0)$ are positive in this case.

We also compared the performance and  computational time with the proposed kernels to the existing operator-valued kernel with the same setting as above to show the advantage of the proposed kernel in the sense of the computational efficiency and performance.
For the existing kernel, we used the nonseparable kernel $k(x,y)=\tilde{k}(x,y)[\mr{e}^{-\vert x_i-y_j\vert^2}]_{i,j}$, where we replaced $S_n(\prod_{j=1}^q R_n(a_j)^*\prod_{j=1}^q R_n(a_j))$ with the nonseparable matrix $[\mr{e}^{-\vert x_i-y_j\vert^2}]_{i,j}$.
This kernel is the combination of the separable and transformable kernels and proposed by~\citet{lim15}.
Table~\ref{tab:regression} shows the result.
We can see that the proposed kernels outperform the existing typical nonseparable kernel, and the computational time with the proposed kernel is smaller than that with the existing kernel.
\green{We also documented additional results about combined models in Appendix~\ref{ap:additional_numexp}.}

\begin{figure}
\centering
\subfigure[$k={k}_n^{\opn{poly},q}$, $f=f_1$]{\includegraphics[scale=0.5]{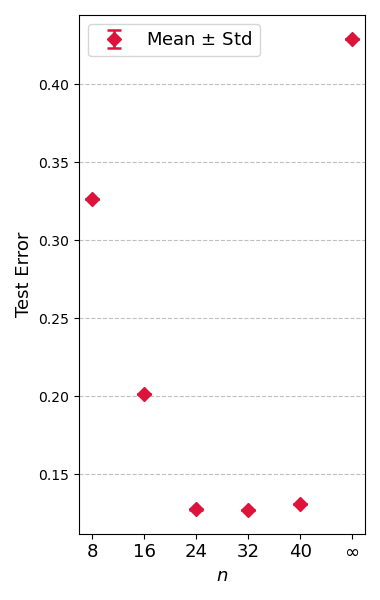}}
\subfigure[$k=\hat{k}_n^{\opn{prod},q}$, $f=f_1$]{\includegraphics[scale=0.5]{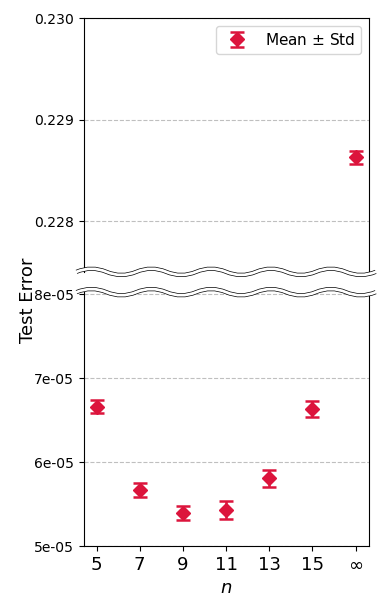}}
\subfigure[$k={k}_n^{\opn{poly},q}$, $f=f_2$]{\includegraphics[scale=0.5]{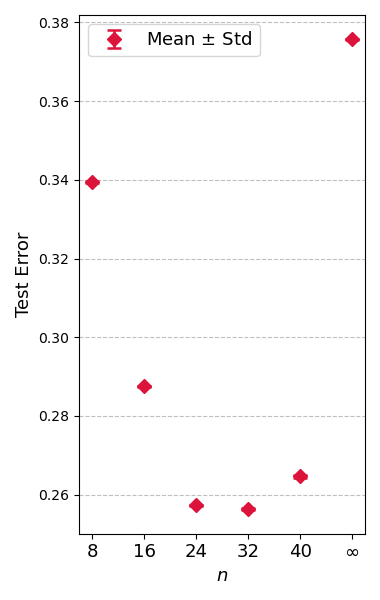}}
\subfigure[$k=\hat{k}_n^{\opn{prod},q}$, $f=f_2$]{\includegraphics[scale=0.5]{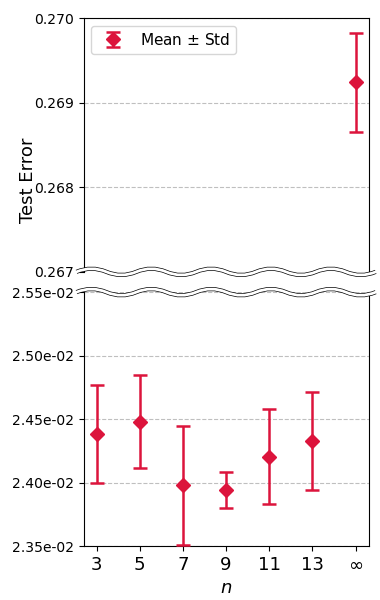}}
\subfigure[Eigenvalues, $f=f_1$]{\includegraphics[scale=0.25]{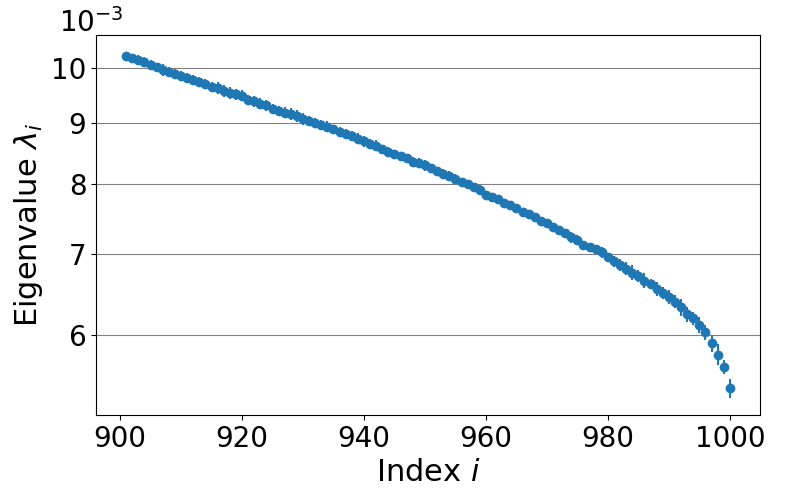}}
\caption{(a--d) Test error of the regression task with different values of $n$. (Box plot of results of five independent runs with different values of noises $\xi_{z,i}$ and $\eta_{z,i}$ in $x^i(z)$ and $y^i(z)$.) (e) Eigenvalues of the Gram matrix $\bG(0)$ indexed as the descending order for the regression task. (Average value of results of five different runs. The error bar represents the standard deviation.)}\label{fig:separable}
\end{figure}

\begin{table}[t]
\caption{Test error and CPU time of the regression task with RKHM and vvRKHS}\label{tab:regression}
\centering
\newcolumntype{C}{>{\centering\arraybackslash}X}
\begin{tabularx}{0.9\textwidth}{C|c|c}
    \hline
    &Test Error &  CPU Time (s)\\
    \hline
    {\small RKHM, $k=k_n^{\opn{prod},q}$ ($n=9$)} & {\small $5.40\times 10^{-5}\pm$7.79$\times 10^{-8}$} &{\small 149.3$\pm$2.392}\\
    {\small vvRKHS (combination of transformable\newline  and separable kernels)} & {\small $3.70 \times 10^{-4} \pm 3.75\times 10^{-7}$} & {\small 570.4$\pm$14.87}\\
    \hline
\end{tabularx}
\end{table}

\begin{figure}[t]
\centering
\subfigure[]{\includegraphics[scale=0.5]{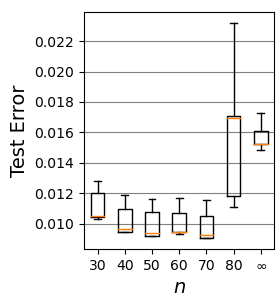}}
\subfigure[]{\includegraphics[scale=0.25]{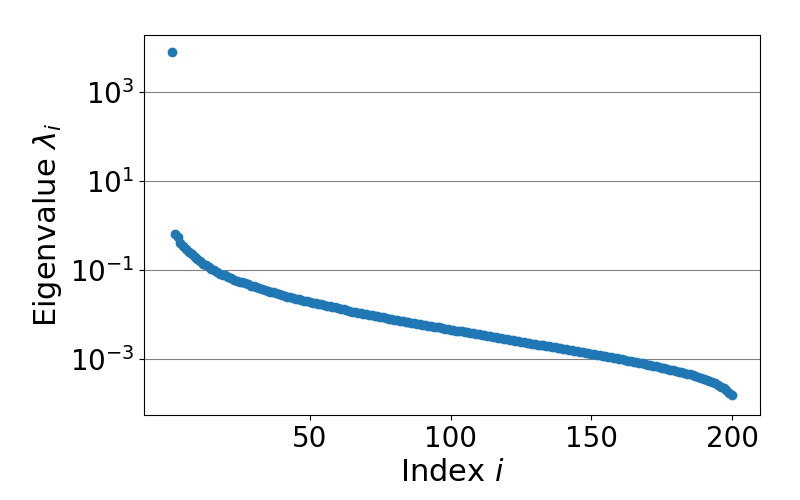}}
\subfigure[]{\includegraphics[scale=0.25]{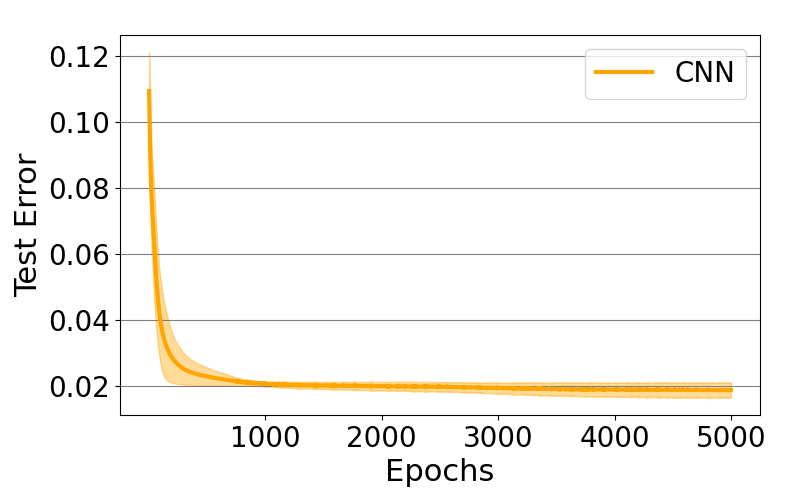}}
\caption{(a) Test error of the image recovering task with different values of $n$ and the kernel $k=\hat{k}_n^{\opn{prod},q}$. (Box plot of results of five independent runs with different training and test data.)  (b) Eigenvalues of the Gram matrix $\bG(0)$ indexed as the descending order for the image recovering task. \green{(c) Test error of the image recovering task with a 2-layer CNN.} (Average value of results of five different runs. The error bar represents the standard deviation.)}\label{fig:mnist_err}
\end{figure}

\begin{figure}[t]
    \centering
\tabcolsep = 0.1cm
\begin{tabular}{ccccccc}
\includegraphics[scale=0.9]{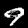} &
 \includegraphics[scale=0.9]{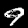} & 
 \includegraphics[scale=0.9]{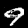}&
\includegraphics[scale=0.9]{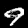}&
\includegraphics[scale=0.9]{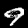}&
\includegraphics[scale=0.9]{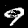}&
\includegraphics[scale=0.9]{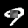}\\
$n=30$ & $n=40$ & $n=50$ & $n=60$ & $n=70$ & $n=80$ & $n=\infty$
\end{tabular}
    \caption{Output images of the image recovering task with different values of $n$.}
    \label{fig:mnist}
\end{figure}

\subsection{Experiment with MNIST}\label{subsec:mnist}
To check the performance of the proposed kernels for practical applications, we consider an image recovering task with MNIST~\citep{lecun98}.
For $i=1,\ldots,N$, we generated input training samples by setting all the $8\times 8$ pixels in the middle of the images as $0$.
We set the output samples as the original images.
We flatten the 2-dimensional $28\times 28$ image to a vector in $\mathbb{R}^{784}$ and regard it as a discretized function on the space of pixels.
\blue{Thus, the number $T$ of discretization points is $784$ in this case.}
We tried to estimate a function that transforms an image with the missing part into the original image.
We applied kernel ridge regression with the regularization parameter $N\lambda=0.01$. 
We set the sample size $N$ as $200$, and used the kernel $k_n^{\opn{prod},q}$ with $q=1$ and $\tilde{k}_{1,1}(x,y)=\tilde{k}_{2,1}(x,y)=2\pi\mr{e}^{-0.1\vert x-y\vert^2}$.
We set $\beta_n=0.01$ for $n<\infty$ and $\beta_{\infty}=0$.
We generated 200 input test samples in the same manner as the input training samples, and evaluated the test error $(1/N)\sum_{i=1}^N \Vert \hat{f}(\tilde{x}^i)-\tilde{y}^i\Vert_{L^2(\mathbb{T})}^2$, where $\hat{f}$ is the estimated map that maps the image with the missing part to its original image, and $\tilde{x}^i$ and $\tilde{y}^i$ are input and output test samples, respectively.
Figure~\ref{fig:mnist_err} (a) shows the test error.
We can see that the test error becomes the smallest when $n=70$, but it becomes large when $n$ is smaller or larger than $70$.
In addition, to investigate the positive definiteness of $\hat{k}_n^{\opn{prod},q}$, we computed the eigenvalues of the Gram matrix $\bG(0)$ at 0.
Figure~\ref{fig:mnist_err} (b) shows the result.
The index $i$ is determined in the descendant order with respect to the value of $\lambda_i$.
Thus, we can see that although we set $\beta_n$ as a small value, all the eigenvalues of $\bG(0)$ are positive in this case.
Furthermore, Figure~\ref{fig:mnist} shows the output images with different values of $n$.
When $n=\infty$, i.e., the commutative kernel, we cannot recover the missing part since each pixel of the output is determined only with the corresponding pixel of the input, and we cannot obtain information of other pixels to estimate what is written in the image.
When $n=70$, we can recover the image the clearest.

\green{In addition, to compare the results with those of other standard methods, we applied convolutional neural network (CNN) to the same task.
Since we considered the product of two kernels ($q=1$) for $\hat{k}_n^{\opn{prod},q}$, we constructed a CNN with two layers.
To extract local and global information simultaneously, we set a $3\times 3$ filter as the first layer (for local information) and a $24\times 24$ filter as the second layer (for global information).
For the optimization, we used the Adam optimizer with learning rate $0.001$ to minimize the mean squared error.
The result is shown in Figure~\ref{fig:mnist_err} (c).
We can see that the kernel ridge regression with the proposed spectral truncation kernel outperforms the CNN.
}

\subsection{Experiment for operator learning}\label{subsec:oplearn_exp}
To show the availability of the proposed kernels to operator learning, we conducted an experiment.
Consider the following Burgers' equation on $(0,1]\times (0,1]$ for $w(x,t)$.
\begin{align*}
\frac{\partial w}{\partial t}+w\frac{\partial w}{\partial x}&=\nu\frac{\partial^2 w}{\partial x^2}\qquad x\in (0,1)\times (0,1],\\
w(x,0)&=u(x)\qquad x\in (0,1),
\end{align*}
where $\nu=0.1$. 
We learned the operator mapping the initial condition $u$ to $w(\cdot,1)$, the solution at time $t=1$, by kernel ridge regression explained in Section~\ref{sec:application}.
The training data and test data are generated by sampling the initial condition $u$ from the Gaussian distribution with mean 0 and the covariance $625(-\Delta+25 I)^{-2}$.
We used equally spaced $128$ points for computing the integral related to the input functions.
The number of training samples is 1000 and that of test samples is 200.
The same problem is also considered by~\citet{batlle24}.
We used the same $\hat{k}_n^{\opn{prod},q}$ kernel as Subsection~\ref{subsec:exp_syn} with $n=5$.
\blue{We set the number $T$ of discretization points as 128.}
We set the regularization parameter $N\lambda$ as $0.01$.

Figure~\ref{fig:burger_sol} shows the true solution $w(\cdot,1)$ and the predicted function $v$ by using the proposed kernel for the same test input function $u$ (initial condition).
Figure~\ref{fig:burger_error} shows the error $v-w(\cdot,1)$ of these two functions.
The mean value of the pointwise error $(1/128)\sum_{i=1}^{128}\vert v(x_i)-w(x_i,1)\vert$ over all the test samples is $0.00157\pm 7.079\times 1e^{-6}$ (average $\pm$ standard deviation over five independent runs).
We can see that the proposed kernel combined with kernel ridge regression properly predict the solution of the problem, and compared to the results in Subsection 4.3.1 by~\citet{batlle24}, the proposed kernel outperforms these existing results.

\begin{figure}[t]
    \centering
    \includegraphics[scale=0.4]{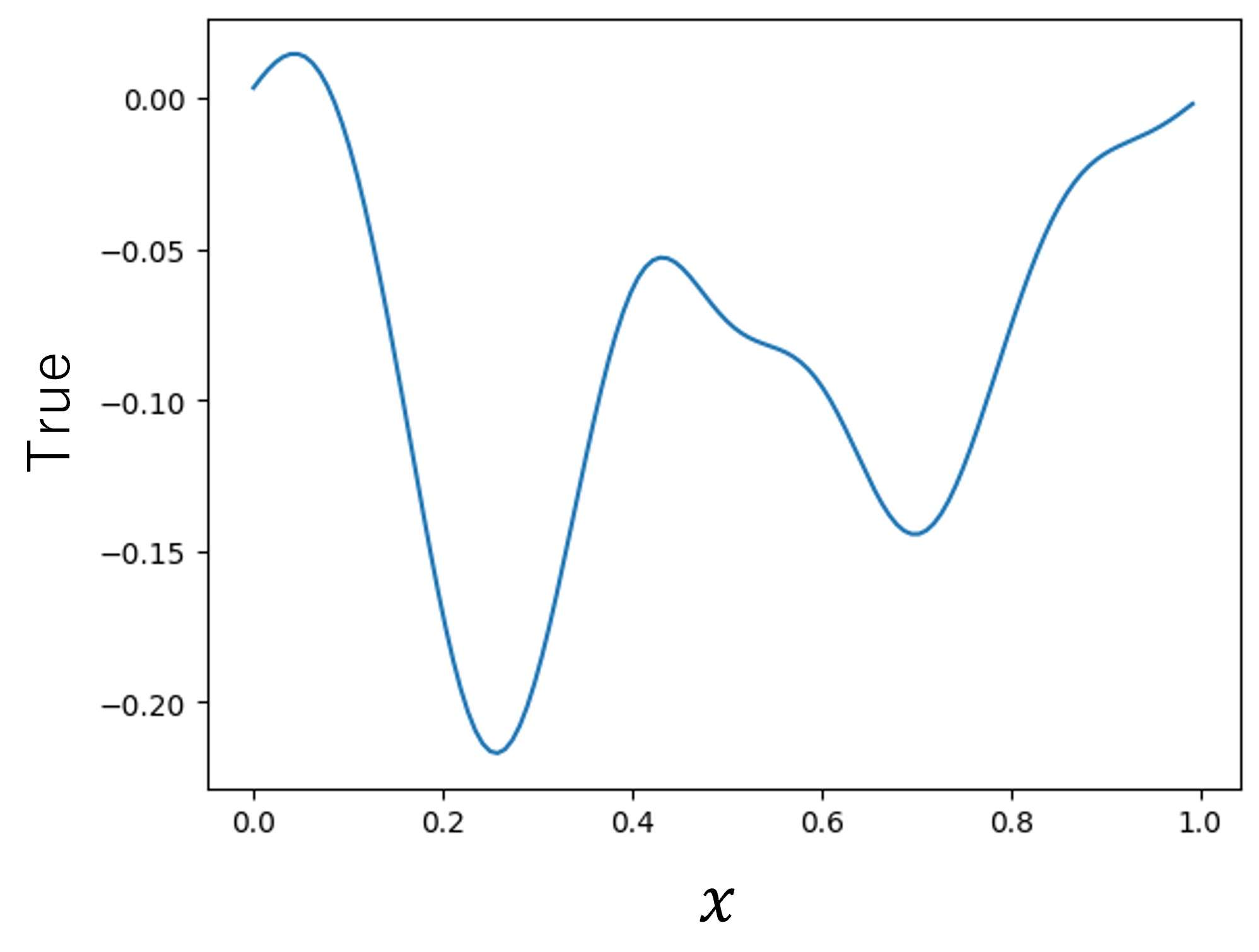}
    \includegraphics[scale=0.4]{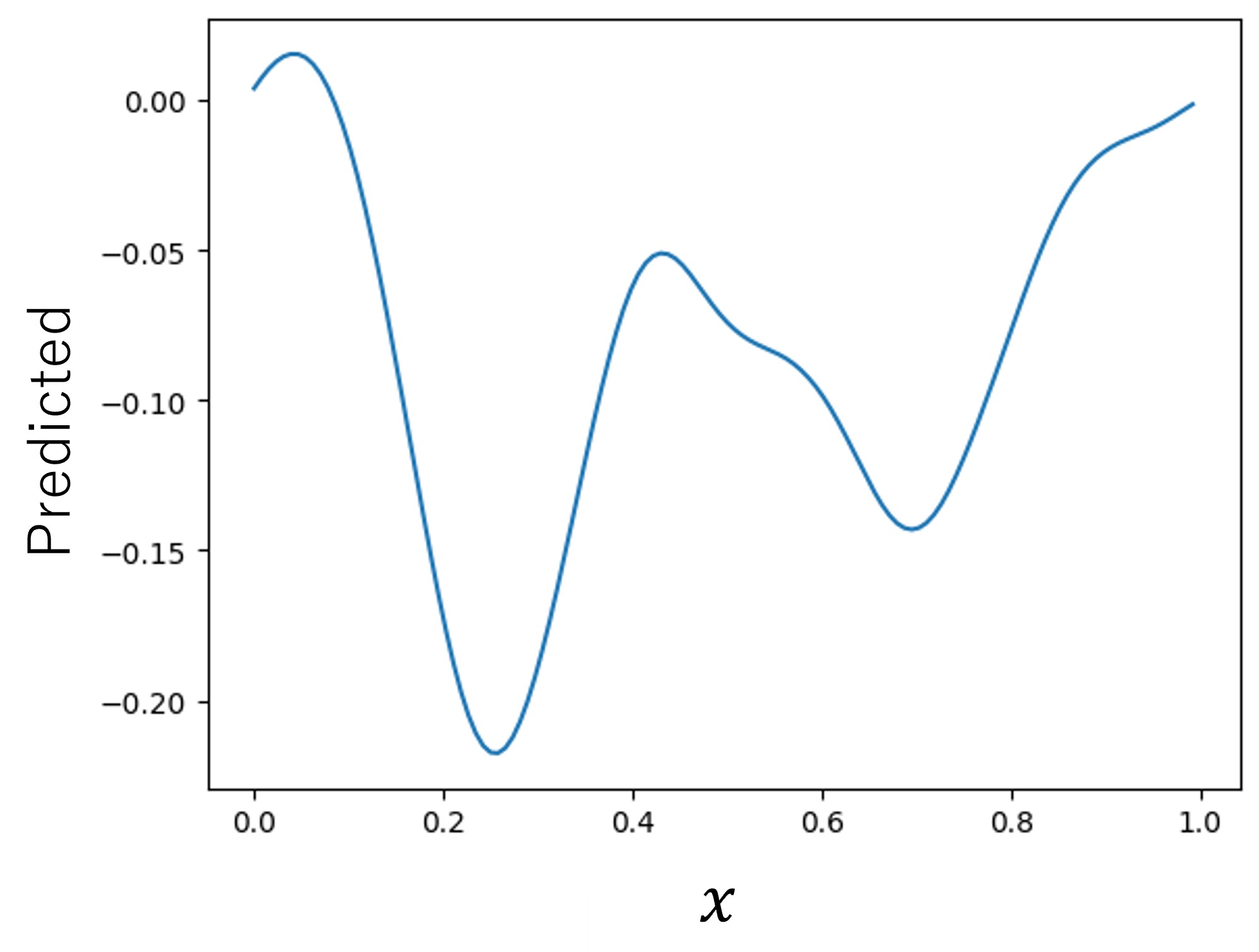}
    \caption{The true solution $w(\cdot,1)$ and the predicted function $v$ of the Burgers' equation.}
    \label{fig:burger_sol}
\end{figure}

\begin{figure}[t]
    \centering
    \includegraphics[scale=0.45]{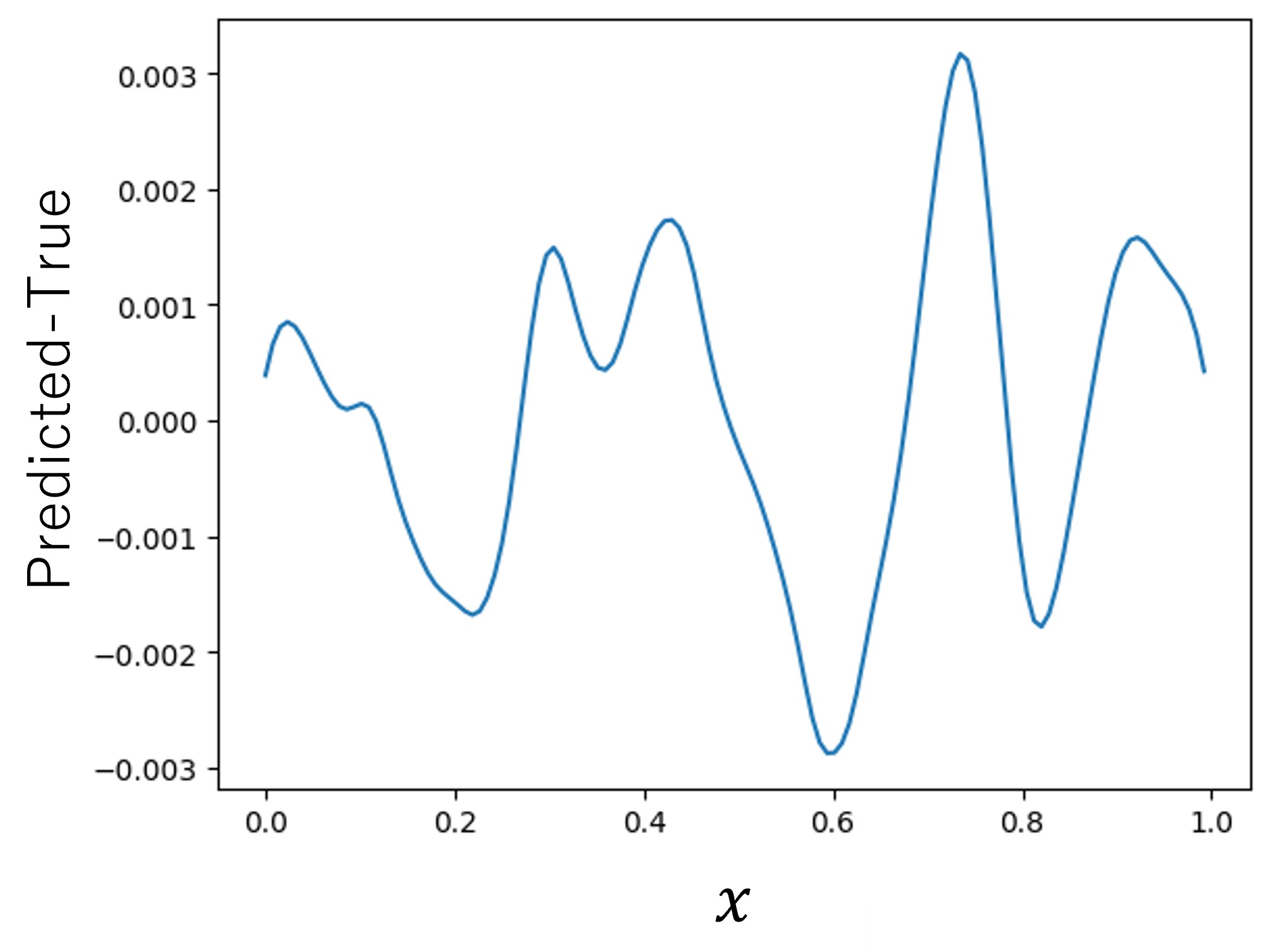}
    \caption{Pointwise error $v-w(\cdot,1)$ of the true solution $w(\cdot,1)$ and the predicted function $v$ of the Burgers' equation.}
    \label{fig:burger_error}
\end{figure}


\section{Generalization of the Spectral Truncation Kernels}\label{sec:kernel_generalization}
The proposed spectral truncation kernels have potential possibilities of constructing kernels for more general situations.

\subsection{Generalization to continuous functions on high-dimensional torus}
We can generalize Theorem~\ref{thm:convergence} for $\mathbb{T}$ to that for $\mathbb{T}^m$ in a straightforward manner.
Let $N=\{0,\ldots,n-1\}^m$.
Then, let $R_n(x)\in\mathbb{C}^{n^m\times n^m}$ be defined as
\begin{align*}
R_n(x)_{j,l}=
\int_{\mathbb{T}^m}x(t)\mr{e}^{-\mr{i}(j-l)\cdot t}\mr{d}t.
\end{align*}
Here, the integral is with respect to the normalized Haar measure on $\mathbb{T}^m$.
For a matrix $A\in\mathbb{C}^{n^m\times n^m}$, let $S_n(A)\in C(\mathbb{T}^m)$ be the function defined as $S_n(A)(z)=(1/n)\sum_{j,l\in N}A_{j,l}\mr{e}^{\mr{i}(j-l)\cdot z}$, where $A_{j,l}$ is the $(j,l)$-entry of $A$.
We define $k_n^{\opn{poly},q}$ and $k_n^{\opn{prod},q}$ in the same manner as for $\mathbb{T}$.
Then, we have the same result as Theorem~\ref{thm:convergence}.

\begin{theorem}\label{thm:convergence_Td}
For $x,y\in\alg^d$ and $z\in\mathbb{T}^m$, $k_n^{\opn{poly},q}(x,y)(z)\to k^{\opn{poly},q}(x,y)(z)$ and $k_n^{\opn{prod},q}(x,y)(z)\to k^{\opn{prod},q}(x,y)(z)$ as $n\to\infty$.
\end{theorem}

\subsection{Generalization to continuous functions on a compact space in $\mathbb{R}^m$}\label{subsec:generalization_compact}
Furthermore, we can generalize $\mathbb{T}^m$ to a compact space $\mathcal{X}\subset\mathbb{R}^m$ that is isomorphic to the unit ball.
Indeed, let $B_m$ be the closed unit ball in $\mathbb{T}^m$.
If $\mathcal{X}$ is isomorphic to $B_m$, then we can construct a function $\check{f}$ on $\mathbb{T}^m$ that satisfies $\check{f}(x)=\tilde{f}(x)$ for $x\in B_m$, where $\tilde{f}=f\circ \psi$ and
$\psi:B_m\to \mathcal{X}$ is the isomorphism.
Let $a$ be a positive real number satisfying $1<a<\pi$.
Then, we can smoothly extend $\tilde{f}$ on $B_r$ to a map $\hat{f}$ on $aB_m$ as $\hat{f}(x)=\tilde{f}(x)\ (x\in B_m)$, $\hat{f}(x)=0\ (\Vert x\Vert=a)$.
For example, we can construct $\hat{f}$ in the same manner as a smooth bump function~\citep{tu08}.
Finally, we extend $\hat{f}$ on $aB_m$ to a map $\check{f}$ on $[-\pi,\pi]^m$ as $\check{f}(x)=\hat{f}(x)\ (x\in aB_m)$, $\check{f}(x)=0\ (x\notin aB_m)$.
Then, since $\check{f}([-\pi,\ldots,-\pi])=\check{f}([\pi,\ldots,\pi])$, we can regard $\check{f}$ as a 
function on $\mathbb{T}^m$.

\subsection{Generalization to multivariate functions}
We can generalize univariate functions to multivariate functions.
Indeed, let $r\in\mathbb{N}$.
Then, for $(f_1,\ldots,f_r)\in C(\mathbb{T}^d)^r$, we set $f\in C(\mathbb{T}^{d+1})$ as $f(x,i/r)=f_i(x)$.
Thus, we can set $\mathcal{A}$ as $C(\mathbb{T}^{m+1})$ instead of $C(\mathbb{T}^m)$ to deal with $r$-variate functions.
In this case, we can show the same statement as Theorem~\ref{thm:convergence_Td}  through the isomorphism $\psi$ in Subsection~\ref{subsec:generalization_compact}.

\subsection{Generalization to other $C^*$-algebras}\label{sec:general_c_star}
In the framework of the spectral truncation, setting $R_n$ and $S_n$ for more general $C^*$-algebras has been investigated.
Using these results and replacing $R_n$ and $S_n$ in Definition~\ref{def:kernels}, we can define positive definite kernels for the $\alg^d$-valued inputs and $\alg$-valued output for a more general $C^*$-algebra $\alg$.
\paragraph{Continuous functions on high-dimensional torus~\citep{leimbach24}}
For $\alg=C(\mathbb{T}^m)$, let $e_j(z)=\mr{e}^{\mr{i}j\cdot z}$ for $j\in\mathbb{Z}^m$ and $B_n=\{j\in\mathbb{Z}^m\,\mid\,\Vert j\Vert\le n\}$, and consider the space $\opn{Span}\{e_j\,\mid\,j\in B_n\}$.
Here, $\Vert\cdot\Vert$ is the Euclidean norm.
We consider generalized matrices whose elements are indexed by $\mathbb{Z}^m$, and set 
$R_n(x)=(\int_{\mathbb{T}^m}x(t)\mr{e}^{-\mr{i}(j-l)\cdot t}\mr{d}t)_{j,l\in B_n}$ and $S_n(A)(z)=(1/\vert B_n\vert)\sum_{j,l\in B_n}A_{j,l}\mr{e}^{\mr{i}(j-l)\red{\cdot}z}$.

\paragraph{Continuous functions on the sphere~\citep{rieffel04}}
For $\alg=C(S^2)$, let $\rho_n$ be the $n$-dimensional irreducible group representation of $SU(2)$ (special unitary group of degree 2) and let $V_n$ be the representation space.
\red{Let $ S_n(A)([g])= \opn{tr}(A \rho_n(g)^\ast P \rho_n(g) ) $  where $[g]$ is the class of $g\in SU(2)$ in $ SU(2)/U(1)\cong S^2 $, $U(1)$ is the unitary group of degree 1, $P=\xi\xi^*$ is the projection, and $\xi$ is the highest weight vector (see \citep{rieffel04} for more details).
Let $R_n(x)=n\int_{SU(2)}  x([g]) \rho_n(g)^\ast P \rho_n(g) \mr{d}g  $, where the integral is with respect to the normalized Haar measure on $SU(2)$.}

\subsection{Generalization using other integral kernels}\label{subsec:generalization_heat}
We can generalize the spectral truncation kernel to a kernel on a measure space $\mcl{X}$ by replacing the function $F_n^{2q,P}$ in Eq.~\eqref{eq:kernel_fj} with another function $G
:\Theta \times \mcl{X}^{2q}$ that satisfies $G(\theta,z)\ge 0$ for any $\theta\in\Theta$ and $z\in\mcl{X}^{2q}$.
Here, $\Theta$ is the space of parameters.
For example, we can set $\Theta=\mathbb{R}_+$, the set of positive real values, $\mcl{X}=\mathbb{R}^d$, and $G(\sigma,z)=\mr{e}^{-\sigma \Vert z\Vert^2}$.
Then, similar to the spectral truncation kernel, the convolution with $G(\sigma,z-\cdot)$ captures global information if $\sigma$ is small and captures local information if $\sigma$ is large.
In this case, since the parameter space $\Theta$ is continuous, we can use an optimization method to learn a best possible parameter $\sigma$ from data.

\section{Conclusion and Discussion}\label{sec:conclusion}
In this paper, we proposed a new class of positive definite kernels for vector- or function-valued outputs based on the spectral truncation, which fills the gap between existing separable and commutative kernels.
The proposed kernels with the framework of RKHMs resolve two shortcomings of the framework of vvRKHSs for vector- or function-valued outputs at the same time: computational cost and choice of kernels.
By considering function-valued positive definite kernels, not operator-valued ones, we can alleviate the computational cost.
At the same time, we can introduce noncommutativity into the learning process and can induce interactions along the data function domain, which enables us to control the global and local dependencies of the output on the input.
In addition, we proposed a \blue{combined model} to obtain models with more flexibility.

In the current setting, we focus on the $C^*$-algebra of continuous functions on the torus.
Although this setting includes many important examples, e.g., periodic time-series data, continuous functions defined on a compact set in $\mathbb{R}$, finite-dimensional vectors, investigations for more general $C^*$-algebras allow us to apply the proposed kernel more general settings.
As we discuss in Section~\ref{sec:kernel_generalization}, we can generalize this setting and define positive definite kernels for more general inputs and outputs and for more flexible frameworks.
More detailed investigations for general settings are future work.
In addition, as we discuss in Section~\ref{sec:applications}, there are many potential applications of the proposed kernels.
\green{Evaluation on larger problems is an interesting direction of future work.}
Theoretical and empirical investigations for these generalized settings and applications remain to be investigated in future work.

\subsection*{Acknowledgements}
The representer theorem stated in Appendix~\ref{ap:representer} is pointed out by Dr. Miho Mukohara.
We would like to thank her for the helpful discussion.
We would also like to thank Dr. Noboru Isobe for a constructive discussion.
Ayoub Hafid was partially supported by FoPM, WINGS Program, the University of Tokyo.
Masahiro Ikeda was partially supported by grant JPMJCR1913 from JST CREST and 24K21316, 25K24910 from JSPS KAKENHI.
Hachem Kadri was partially supported by grant ANR-19-CE23-0011 from the French National Research Agency.

\appendix
\section*{Appendix}
\allowdisplaybreaks

\section{Proofs}\label{ap:proof}
We provide the proofs of theorems, propositions, and lemmas stated in the main text.
\begin{mythm}[Theorem~\ref{thm:convergence}]
For $x,y\in\alg^d$ and $z\in\mathbb{T}$, $k_n^{\opn{poly},q}(x,y)(z)\to k^{\opn{poly},q}(x,y)(z)$ and $k_n^{\opn{prod},q}(x,y)(z)\to k^{\opn{prod},q}(x,y)(z)$ as $n\to\infty$.
\end{mythm}
\begin{proof}
We show $k_n^{\opn{poly},q}(x,y)(z)\to k^{\opn{poly},q}(x,y)(z)$.
The proof for $k_n^{\opn{prod},q}(x,y)$ is similar.
We have
\begin{align*}
&k_n^{\opn{poly},q}(x,y)(z)
=S_n\bigg(\sum_{i=1}^d\alpha_i(R_n(x_i)^*)^qR_n(y_i)^q\bigg)(z)\\
&=\frac1{n}\sum_{i=1}^d\alpha_i\sum_{j,l=0}^{n-1}\sum_{r_1,\ldots,\red{r_{2q-1}=0}}^{n-1}R_n(x_i)^*_{j,r_1}\cdots R_n(x_i)^*_{r_{q-1},r_q} R_n(y_i)_{r_q,r_{q+1}}\cdots R_n(y_i)_{r_{2q-1},l}\mr{e}^{\mr{i}(j-l)z}\\
&=\frac1{n}\sum_{i=1}^d\alpha_i\sum_{j,l=0}^{n-1}\sum_{r_1,\ldots,r_{2q-1}=0}^{n-1}\int_{\mathbb{T}}\cdots\int_{\mathbb{T}}\overline{x_i(t_1)}\cdots \overline{x_i(t_q)}y_i(t_{q+1})\cdots y_i(t_{2q})\\
&\qquad\cdot \mr{e}^{\mr{i}(r_1-j)t_1}\cdots\mr{e}^{\mr{i}(r_{q}-r_{q-1})t_q}\mr{e}^{\mr{i}(r_{q+1}-r_{q})t_{q+1}}\cdots\mr{e}^{\mr{i}(l-r_{2q-1})t_{2q}}\mr{d}t_1\cdots\mr{d}t_{2q}\mr{e}^{\mr{i}(j-l)z}\\
&=\frac1{n}\sum_{i=1}^d\alpha_i\sum_{j,l=0}^{n-1}\sum_{r_1,\ldots,r_{2q-1}=0}^{n-1}\int_{\mathbb{T}}\cdots\int_{\mathbb{T}}\overline{x_i(t_1)}\cdots \overline{x_i(t_q)}y_i(t_{q+1})\cdots y_i(t_{2q})\\
&\qquad\cdot \mr{e}^{\mr{i}(r_1-j)(t_1-z)}\cdots\mr{e}^{\mr{i}(r_{q}-r_{q-1})(t_q-z)}\mr{e}^{\mr{i}(r_{q+1}-r_{q})(t_{q+1}-z)}\cdots\mr{e}^{\mr{i}(l-r_{2q-1})(t_{2q}-z)}\mr{d}t_1\cdots\mr{d}t_{2q}\\
&=\frac1{n}\sum_{i=1}^d\alpha_i\sum_{m=0}^{n-1}\bigg(\sum_{j,l,r_1,\ldots,r_{2q-1}=0}^m-\sum_{j,l,r_1,\ldots,r_{2q-1}=0}^{m-1}\bigg)\int_{\mathbb{T}}\cdots\int_{\mathbb{T}}\overline{x_i(t_1)}\cdots \overline{x_i(t_q)}y_i(t_{q+1})\cdots y_i(t_{2q})\\
&\qquad\cdot \mr{e}^{\mr{i}(r_1-j)(t_1-z)}\cdots\mr{e}^{\mr{i}(r_{q}-r_{q-1})(t_q-z)}\mr{e}^{\mr{i}(r_{q+1}-r_{q})(t_{q+1}-z)}\cdots\mr{e}^{\mr{i}(l-r_{2q-1})(t_{2q}-z)}\mr{d}t_1\cdots\mr{d}t_{2q}\\
&=\frac1{n}\sum_{i=1}^d\alpha_i\sum_{m=0}^{n-1}\sum_{\substack{j\lor l\lor r_1\lor\ldots\lor r_{2q-1}= m\\ 0\le j,l,r_1,\ldots,r_{2q-1}\le m}}\int_{\mathbb{T}}\cdots\int_{\mathbb{T}}\overline{x_i(t_1)}\cdots \overline{x_i(t_q)}y_i(t_{q+1})\cdots y_i(t_{2q})\\
&\qquad\cdot \mr{e}^{\mr{i}(r_1-j)(t_1-z)}\cdots\mr{e}^{\mr{i}(r_{q}-r_{q-1})(t_q-z)}\mr{e}^{\mr{i}(r_{q+1}-r_{q})(t_{q+1}-z)}\cdots\mr{e}^{\mr{i}(l-r_{2q-1})(t_{2q}-z)}\mr{d}t_1\cdots\mr{d}t_{2q}\\
&=\frac1{n}\sum_{i=1}^d\alpha_i\sum_{j=0}^{n-1}\sum_{r\in jP\bigcap \mathbb{Z}^{2q}}\int_{\mathbb{T}}\cdots\int_{\mathbb{T}}\overline{x_i(t_1)}\cdots \overline{x_i(t_q)}y_i(t_{q+1})\cdots y_i(t_{2q})\\
&\qquad\cdot \mr{e}^{\mr{i}r_1(z-t_1)}\cdots\mr{e}^{\mr{i}r_{q}(z-t_q)}\mr{e}^{\mr{i}r_{q+1}(z-t_{q+1})}\cdots\mr{e}^{\mr{i}r_{2q}(z-t_{2q})}\mr{d}t_1\cdots\mr{d}t_{2q}\\
&=\sum_{i=1}^d\alpha_ig_i\ast F_n^{2q,P}(z),
\end{align*}
where the sum $\sum_{j,l,r_1,\ldots,r_{2q-1}=0}^{-1}$ is set as $0$, $g_i(t)=\overline{x_i(t_1)}\cdots \overline{x_i(t_q)}y_i(t_{q+1})\cdots y_i(t_{2q})$ for $t=[t_1,\ldots,t_{2q}]\in\mathbb{T}^{2q}$, and  $P=\{r=[r_1,\ldots,r_{2q}]\in\mathbb{R}^{2q}\,\mid\,\vert \sum_{i=l}^mr_i\vert\le 1,\ l\le m\}$.
In addition, we set $j=r_0$ and \red{$l=r_{2q}$}.
The second to the last equality is derived by Lemma~\ref{lem:index} below.
Since $P$ is a convex polyhedron, $g_i\ast F_n^{2q,P}(z)\to g_i(z)$ as $n\to\infty$ by Lemma~\ref{lem:fejer_convergence_general}.
\end{proof}

\begin{mythm}[Lemma~\ref{lem:index}]
For $m\in\mathbb{N}$ and $j=0,\ldots 2q$, let $Q_j^m=\{r'=[r_1',\ldots,r_{2q}']\in\mathbb{R}^{2q}\,\mid\, r_i'=r_i-r_{i-1}\ (i=1,\ldots,2q),\ 0\le r_i\le m\ (i=0,\ldots,2q),\ r_j=m\}$ and $P=\{r=[r_1,\ldots,r_{2q}]\in\mathbb{R}^{2q}\,\mid\,\vert \sum_{i=l}^kr_i\vert\le 1,\ l\le k\}$.
Then, we have $mP=\bigcup_{j=0}^{2q}Q_j^m$.
\end{mythm}
\begin{proof}
Let $r'\in Q_j^m$.
For $i>j$, we have $r_i=r_i'+r_{i-1}=r_i'+(r_{i-1}'+r_{i-2})=\cdots=r_i'+\cdots +r_{j+1}'+m$, and for $i< j$, we have $-r_i=r_{i+1}'-r_{i+1}=r_{i+1}'+(r_{i+2}'-r_{i+2})=\cdots=r_{i+1}'+\cdots+r_j'-m$.
Therefore, we have
\begin{align*}
&\bigcup_{j=0}^{2q}Q_j^m\\
&=\bigcup_{j=0}^{2q}\bigg\{r\in\mathbb{R}^{2q}\,\bigg\vert\,r_i=\sum_{l=j+1}^ir_{l}'+m\ (j<i\le 2q),\ r_i=-\sum_{l=i+1}^jr_{l}'+m\ (0\le i< j),\\
&\qquad\qquad 0\le r_i\le m\ (i=0,\ldots,j-1,j+1,\ldots 2q)\}\\
&=\bigcup_{j=0}^{2q}\bigg\{r'\in\mathbb{R}^{2q}\,\bigg\vert\,-m\le \sum_{l=j+1}^ir_{l}'\le 0\ (j<i\le 2q),\ 0\le \sum_{l=i+1}^jr_l'\le m\ (0\le i< j)\bigg\}\\
&=\bigg\{r\in\mathbb{R}^{2q}\,\bigg\vert\,\bigg\vert \sum_{i=l}^kr_i\bigg\vert\le 1,\ l\le k\bigg\},
\end{align*}
which completes the proof of the lemma.
\end{proof}

\begin{mythm}[Proposition~\ref{prop:pd_poly_sep}]
The kernel $k_n^{\opn{poly},q}$ is a positive definite kernel.
\end{mythm}
\begin{proof}
By the definition of $S_n$, we have $S_n(A^*)(z)=\overline{S_n(A)(z)}$.
Thus, we have
\begin{align*}
k_n^{\opn{poly},q}(x,y)&=S_n\bigg(\sum_{i=1}^d\alpha_i(R_n(x_i)^*)^qR_n(y_{i})^q\bigg)\\
&=S_n\bigg(\sum_{i=1}^d\alpha_i(R_n(y_{i})^*)^qR_n(x_i)^q\bigg)^*
=k_n^{\opn{poly},q}(y,x)^*
\end{align*}
for $x,y\in\alg^d$.
In addition, let $x_1,\ldots,x_N\in\alg^d$ and $d_1,\ldots,d_N\in\alg$.
Then, we have
\begin{align*}
&\bigg(\sum_{j,l=1}^Nd_j^*k_n^{\opn{poly},q}(x_j,x_l)d_l\bigg)(z)\\
&=\sum_{j,l=1}^N\overline{d_j(z)}\sum_{j',l'=1}^n\sum_{i=1}^d\alpha_i\sum_{m=1}^n\big((R_n(x_{i,j})^*)^q\big)_{j',m}\big(R_n(x_{i,l})^q\big)_{m,l'}\mr{e}^{\mr{i}(j'-l')z}d_l(z)\\
&=\sum_{i=1}^d\alpha_i\sum_{m=1}^n\bigg\vert \sum_{j=1}^N\sum_{j'=1}^nd_j(z)\big((R_n(x_{i,j}))^q\big)_{m,j'}\mr{e}^{-\mr{i}j'z}\bigg\vert^2\ge 0
\end{align*}
for $z\in\mathbb{T}$.
Thus, $k_n^{\opn{poly},q}$ is positive definite.
\end{proof}

\begin{mythm}[Proposition~\ref{prop:pd_prod}]
Let $\beta_n\ge-\min_{z\in\red{\mathbb{T}^{2q}}}F_n^{2q,P}(z)$.
Then, $\hat{k}_n^{\opn{prod},q}$ defined below is a positive definite kernel.
\begin{align*}
\hat{k}_n^{\opn{prod},q}(x,y)=k_n^{\opn{prod},q}(x,y)+\beta_n\int_{\mathbb{T}^{2q}}\prod_{j=1}^q\overline{\tilde{k}_{1,j}(x(t_j),y(t_j))}\tilde{k}_{2,j}(x(t_{q+j}),y(t_{q+j}))\mr{d}t.
\end{align*}
\end{mythm}
\begin{proof}
Since $F_n^{2q,P}$ is real-valued, we have
\begin{align*}
\hat{k}_n^{\opn{prod},q}(x,y)(z)
&=\int_{\mathbb{T}^{2q}}\prod_{j=1}^q\overline{\tilde{k}_{1,j}(x(t_j),y(t_j))}\tilde{k}_{2,j}(x(t_{q+j}),y(t_{q+j}))F_n^{2q,P}(z\mathbf{1}-t)\mr{d}t\\
&\qquad+\beta_n\int_{\mathbb{T}^{2q}}\prod_{j=1}^q\overline{\tilde{k}_{1,j}(x(t_j),y(t_j))}\tilde{k}_{2,j}(x(t_{q+j}),y(t_{q+j}))\\
&=\int_{\mathbb{T}^{2q}}\prod_{j=1}^q\tilde{k}_{1,j}(y(t_j),x(t_j))\overline{\tilde{k}_{2,j}(y(t_{q+j}),x(t_{q+j}))}F_n^{2q,P}(z\mathbf{1}-t)\mr{d}t\\
&\qquad+\beta_n\int_{\mathbb{T}^{2q}}\prod_{j=1}^q\tilde{k}_{1,j}(y(t_j),x(t_j))\overline{\tilde{k}_{2,j}(x(t_{q+j}),y(t_{q+j}))}\\
&=\overline{\hat{k}_n^{\opn{prod},q}(y,x)(z)}
\end{align*}
for $x,y\in\alg^d$.
In addition, let $x_1,\ldots,x_N\in\alg^d$ and $d_1,\ldots,d_N\in\alg$.
Then, we have
\begin{align*}
&\bigg(\sum_{j,l=1}^Nd_j^*\hat{k}_n^{\opn{prod},q}(x_j,x_l)d_l\bigg)(z)\\
&=\sum_{m,l=1}^N\int_{\mathbb{T}^{2q}}\overline{d_m(z)}\bigg(\prod_{j=1}^q\overline{\tilde{k}_{1,j}(x_m(t_j),x_l(t_j))}\tilde{k}_{2,j}(x_m(t_{q+j}),x_l(t_{q+j}))\bigg)d_l(z)F_n^{2q,P}(z\mathbf{1}-t)\mr{d}t\\
&\qquad+\beta_n\sum_{m,l=1}^N\int_{\mathbb{T}^{2q}}\overline{d_m(z)}\bigg(\prod_{j=1}^q\overline{\tilde{k}_{1,j}(x_m(t_j),x_l(t_j))}\tilde{k}_{2,j}(x_m(t_{q+j}),x_l(t_{q+j}))\bigg)d_l(z)\mr{d}t\\
&=\sum_{m,l=1}^N\int_{\mathbb{T}^{2q}}\overline{d_m(z)}\bigg(\prod_{j=1}^q\overline{\tilde{k}_{1,j}(x_m(t_j),x_l(t_j))}\tilde{k}_{2,j}(x_m(t_{q+j}),x_l(t_{q+j}))\bigg)d_l(z)(F_n^{2q,P}(z\mathbf{1}-t)+\beta_n)\mr{d}t\\
&\ge 0
\end{align*}
for $z\in\mathbb{T}$.
The last inequality is derived since the map $([x_1,\ldots,x_{2q}],[y_1,\ldots,y_{2q}])\mapsto \prod_{j=1}^q\overline{\tilde{k}_{1,j}(x_{2j-1},y_{2j-1})}\tilde{k}_{2,j}(x_{2j},y_{2j})$ is positive definite and $F_n^{2q,P}(z\mathbf{1}-t)+\beta_n\ge 0$.
\end{proof}

\begin{mythm}[Lemma~\ref{lem:fejer_bound}]
We have $\vert F_n^{q,P}(z)\vert\le n^q$.    
\end{mythm}
\begin{proof}
The number of terms in $F_n^{q,P}$ is equal to the number of terms in $\sum_{m,l=0}^{n-1}(T_1\cdots T_{2q})_{m,l}$, where $T_1,\ldots,T_{2q}$ are Toeplitz matrices.
Thus, we have
\begin{align*}
\vert F_n^{q,P}(z)\vert
\le \frac1n n^{q+1}
= n^q.
\end{align*}
\end{proof}

Let $\Omega$ be a probability space with a probability measure $\mu$.
Let $X_1,\ldots,X_N$ and $Y_1,\ldots,Y_N$ be samples from a distributions of $\alg_0^d$-valued random variable $X$ and $\alg_1$-valued random variable $Y$ on $\Omega$, respectively (i.e., for $z\in\mathbb{T}$, $X_i(z)$ is a sample from the distribution of $X(z)$).
Here, $\alg_0$ and $\alg_1$ are subsets of $\alg$.

\begin{mythm}[Theorem~\ref{thm:generalization}]
Assume $k_n^{\opn{poly},q}$ and $\hat{k}_n^{\opn{prod},q}$ are real-valued.
Let $D(k_n^{\opn{poly},q},x)=\sum_{j=1}^d\alpha_j\Vert R_n(x_{j})\Vert^{2q}_{\opn{op}}$ and $D(\hat{k}_n^{\opn{prod},q},x)=\prod_{j=1}^q(\Vert R_n(\tilde{k}_{1,j}(x,x))\Vert_{\opn{op}}\Vert R_n(\tilde{k}_{2,j}(x,x))\Vert_{\opn{op}})+\beta_n C$, 
where $\Vert\cdot\Vert_{\opn{op}}$ is the operator norm and $C=\prod_{j=1}^q\int_{\mathbb{T}}\tilde{k}_{1,j}(x(t),x(t))\mr{d}t\int_{\mathbb{T}}\tilde{k}_{2,j}(x(t),x(t))\mr{d}t$.
\red{Assume $\Vert {k}_n^{\opn{poly},q}(x,x)\Vert_{\alg} < \infty$ for any $x\in\mcl{A}_0^d$ and $\Vert \hat{k}_n^{\opn{prod},q}(x,x)\Vert_{\alg}\le \infty$.}
For $k_n=k_n^{\opn{poly},q},\hat{k}_n^{\opn{prod},q}$, \red{for any $z\in\mathbb{T}$,} and
for any $\delta\in (0,1)$, with probability at least $1-\delta$, \red{simultaneously for all $f\in\mathcal{F}$,} we have 
\begin{align}
\red{\mr{E}[g(f(X),Y)](z)
\le_{\alg} \frac1N\sum_{i=1}^Ng(f(X_i),Y_i)(z)
+2L\frac{B}{N}\bigg(\sum_{i=1}^ND(k_n,X_i)\bigg)^{1/2}
+3\sqrt{\frac{\log 1/\delta}{N}}}.
\end{align} 
\end{mythm}

To show Theorem~\ref{thm:generalization}, we use the $\alg$-valued Rademacher complexity defined by~\citet{hashimoto23_aistats}.
Let $\sigma_1,\ldots,\sigma_N$
be i.i.d. Rademacher variables, {which take their values on $\{-1,1\}$}.
For a real-valued function class $\mcl{F}$ and $\mathbf{x}=[x_1,\ldots,x_N]\in(\alg^d)^N$, the empirical Rademacher complexity $\hat{R}_N(\mathbf{x},\mcl{F})$ is defined by
$\hat{R}_N(\mathbf{x},\mcl{F})=\mr{E}[\sup_{f\in \mcl{F}}\sum_{i=1}^Nf(x_i)\sigma_i]/N$.
Here, $\mr{E}$ is the integration on $\Omega$ with respect to $\mu$.
Similar to the case of the standard Rademacher complexity, an $\alg$-valued version of the Rademacher complexity is defined as
$\hat{R}_{\alg,N}(\mathbf{x},\mcl{F})=\mr{E}[\sup^{\alg}_{f\in \mcl{F}}\sum_{i=1}^N\vert f(x_i)^*\sigma_i\vert_{\alg}]/N$ for an $\alg$-valued function class $\mcl{F}$.
Here, $\vert a\vert_{\alg}=(a^*a)^{1/2}$ for $a\in\alg$, $\sup^{\alg}$ is the supremum in the sense of the order in $\alg$ (see Definition~\ref{def:sup}), and the integral $\mr{E}$ means the Bochner integral in this case.

Indeed, in our case, the $\alg$-valued Rademacher complexity is represented by the standard Rademacher complexity, and the argument is reduced to evaluate the standard Rademacher complexity.
In the following, we denote by $\alg_+$ the subset of $\alg$ composed of all positive elements in $\alg$.

\begin{proposition}\label{prop:rademacher_equiv}
Let $\mcl{F}$ be an $\alg$-valued function class.
Assume for any $x\in\alg_0^d$ and $z\in\mathbb{T}$, $f(x)(z)\in\mathbb{R}$.
Then, we have $\hat{R}_N(\mathbf{x},\mcl{F}(z))=\hat{R}_{\alg,N}(\mathbf{x},\mcl{F})(z)$, where $\mcl{F}(z)=\{x\mapsto (f(x))(z)\,\mid\,f\in\mcl{F}\}$.
\end{proposition}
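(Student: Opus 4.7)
The plan is to push the point evaluation $\opn{ev}_z : \alg = C(\mathbb{T}) \to \mathbb{C}$ through each $C^*$-algebraic operation in the definition of $\hat R_{\alg, N}$. Since $\opn{ev}_z$ is a $*$-homomorphism that, under the hypothesis $f(x)(z) \in \mathbb{R}$, takes values in $\mathbb{R}$, the $\alg$-valued involution, the absolute value $|\cdot|_\alg$, the $\alg$-ordered supremum $\sup^\alg$, and the Bochner integral $\mr{E}$ should all collapse to their scalar counterparts and assemble into the standard Rademacher complexity of $\mcl F(z)$.

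First I would handle the integrand. For fixed Rademacher variables $\sigma_i \in \{-1, +1\}$ and any $f \in \mcl F$, the real-valuedness gives $f(x_i)^*(z) = \overline{f(x_i)(z)} = f(x_i)(z)$, and the definition $|a|_\alg = (a^* a)^{1/2}$ then yields $|f(x_i)^* \sigma_i|_\alg(z) = |\sigma_i f(x_i)(z)|$, a scalar. Hence $\bigl[\sum_i |f(x_i)^* \sigma_i|_\alg\bigr](z)$ matches, term by term, the scalar Rademacher-type expression evaluated on the function $g_f(x) := f(x)(z) \in \mcl F(z)$, and the map $f \mapsto g_f$ is a bijection between $\mcl F$ and $\mcl F(z)$. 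Next I would commute $\opn{ev}_z$ with the Bochner integral: since $\opn{ev}_z$ is a bounded linear functional on $\alg$ and the Rademacher probability space $\{-1,+1\}^N$ is finite and scalar-valued, the exchange is immediate. Finally, to commute $\opn{ev}_z$ with $\sup^\alg$, I would invoke that in the commutative algebra $C(\mathbb{T})$ the order $\le_\alg$ restricted to real-valued elements is the pointwise order, and argue that for the family $\{\sum_i |f(x_i)^* \sigma_i|_\alg : f \in \mcl F\}$ the pointwise supremum lies in $\alg$ and realizes the $\alg$-ordered supremum.

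The main obstacle is the third step: a priori, the pointwise supremum of a family of continuous functions need not be continuous. The plan is to exploit the RKHM-ball constraint $\Vert f\Vert_k \le B$ together with the reproducing property, which gives the Cauchy--Schwarz-type bound $|f(x)(z)| \le B\,\Vert k(x,x)\Vert_\alg^{1/2}$, and to combine it with continuity of the kernel to obtain uniform equicontinuity of $z \mapsto \sum_i |f(x_i)^* \sigma_i|_\alg(z)$ across $f \in \mcl F$. This forces the pointwise supremum to be continuous on $\mathbb{T}$, hence to lie in $\alg$, and thus to coincide with $\sup^\alg$. Chaining the three commutations then yields the identity $\hat R_{\alg, N}(\mathbf x, \mcl F)(z) = \hat R_N(\mathbf x, \mcl F(z))$.
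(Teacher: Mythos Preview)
Your plan is the paper's own: push $\opn{ev}_z$ through $\mr{E}$, through $\sup^{\alg}$, and through $|\cdot|_{\alg}$. The paper handles the $\sup^{\alg}$ step by simply declaring that $b(z):=\sup_{a\in\mcl{S}}a(z)$ lies in $\alg=C(\mathbb{T})$ and then checking the two order inequalities. You are right to isolate this as the one nontrivial point --- a pointwise supremum of continuous functions need not be continuous, and in $C(\mathbb{T})$ one can build families for which $\sup^{\alg}$ exists yet strictly exceeds the pointwise supremum at some point --- so on this score you are more careful than the paper.

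However, your proposed repair has its own gap. The RKHM Cauchy--Schwarz inequality yields only the uniform sup-norm bound $\|f(x_i)\|_{\alg}\le B\,\|k(x_i,x_i)\|_{\alg}^{1/2}$, and a sup-norm bound does not by itself control the modulus of continuity of $z\mapsto f(x_i)(z)$ uniformly over $f\in\mcl{F}$; invoking ``continuity of the kernel'' adds nothing here, since for $f=\sum_j\phi(y_j)c_j$ one has $f(x_i)=\sum_j k(x_i,y_j)c_j$ with arbitrary coefficients $c_j\in C(\mathbb{T})$ whose moduli of continuity are in no way constrained by $\|f\|_k\le B$. So equicontinuity does not follow from the ingredients you list, and the proposition as stated (for a general $\alg$-valued class $\mcl{F}$, with no kernel structure assumed) does not supply what is missing. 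A separate bookkeeping point: you work with $\sum_i|f(x_i)^*\sigma_i|_{\alg}$ as printed in the main text, but that evaluates to $\sum_i|f(x_i)(z)|$ and loses all dependence on $\sigma$; the paper's own proof silently switches to $\bigl|\sum_i f(x_i)^*\sigma_i\bigr|_{\alg}$, and you should do the same.
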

\begin{proof}
We first show $(\sup^{\alg}_{a\in\mcl{S}}a)(z)=\sup_{a\in\mcl{S}}a(z)$ for $\mcl{S}\subseteq \alg_+$ for any $z\in\mathbb{T}$.
Let $a\in\mcl{S}$.
Since $a(z)\le \sup_{a\in\mcl{S}}a(z)$ for any $z\in\mathbb{T}$, we have $a\le_{\alg} b$, where $b\in\alg$ is defined as $b(z)=\sup_{a\in\mcl{S}}a(z)$.
Thus, $b$ is an upper bound of $\mcl{S}$, and we have $\sup^{\alg}_{a\in\mcl{S}}a\le_{\alg} b$, which means $(\sup^{\alg}_{a\in\mcl{S}}a)(z)\le\sup_{a\in\mcl{S}}a(z)$ for any $z\in\mathbb{T}$.
Conversely, since $a\le_{\alg} \sup^{\alg}_{a\in\mcl{S}}a$ for $a\in\mcl{S}$, we have $a(z)\le (\sup^{\alg}_{a\in\mcl{S}}a)(z)$ for any $z\in\mathbb{T}$.
Thus, we have $\sup_{a\in\mcl{S}}a(z)\le (\sup^{\alg}_{a\in\mcl{S}}a)(z)$.
Therefore, we have
\begin{align*}
\hat{R}_{\alg,N}(\mathbf{x},\mcl{F})(z)
&=\frac1N\mr{E}\bigg[\sup_{f\in \mcl{F}}\!^{\alg}\bigg\vert\sum_{i=1}^Nf(x_i)^*\sigma_i\bigg\vert_{\alg}\bigg](z)
=\frac1N\mr{E}\bigg[\bigg(\sup_{f\in \mcl{F}}\!^{\alg}\bigg\vert\sum_{i=1}^Nf(x_i)^*\sigma_i\bigg\vert_{\alg}\bigg)(z)\bigg]\\
&=\frac1N\mr{E}\bigg[\sup_{f\in \mcl{F}(z)}\bigg\vert \sum_{i=1}^N\overline{f(x_i)}\sigma_i\bigg\vert\bigg]
=\frac1N\mr{E}\bigg[\sup_{f\in \mcl{F}(z)}\bigg\vert\sum_{i=1}^Nf(x_i)\sigma_i\bigg\vert\bigg]
=\hat{R}_N(\mathbf{x},\mcl{F}(z)),
\end{align*}
where the third equality is given by the identity $\vert a\vert_{\alg}(z)=\vert a(z)\vert$, and the forth equality is satisfied since $\sigma_i$ is real-valued.
\end{proof}

We obtain the following lemma directly by Lemma 4.2 in~\citet{mohri18}.
\begin{lemma}\label{lem:rademacher_transform}
Let $\mcl{F}$ be an $\alg$-valued function class.
Let $g:\mathbb{R}\times\mathbb{R}\to\mathbb{R}_{+}$ be a \red{bounded} error function. 
Assume there exists $L>0$ such that for $y\in\alg_1$ and $z\in\mathbb{T}$, $x\mapsto g(x,y(z))$ is $L$-Lipschitz continuous.
Assume also for any $x\in\alg_0^d$ and $z\in\mathbb{T}$, $f(x)(z)\in\mathbb{R}$.
Then, we have
\begin{align*}
\frac1N\mr{E}\bigg[\sup_{f\in \mcl{F}}\sum_{i=1}^Ng(f(x_i)(z),y_i(z))\sigma_i\bigg]
=\frac1N\mr{E}\bigg[\sup_{f\in \mcl{F}(z)}\sum_{i=1}^Ng(f(x_i),y_i(z))\sigma_i\bigg]\le L\hat{R}_N(\mathbf{x},\mcl{F}(z)).
\end{align*}
\end{lemma}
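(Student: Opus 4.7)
The plan is to decompose the statement into (i) the equality that identifies two ways of writing the same supremum, and (ii) the Rademacher contraction inequality, and to dispatch each piece separately.

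For part (i), I would observe that by the very definition $\mcl{F}(z)=\{x\mapsto f(x)(z)\,\mid\,f\in\mcl{F}\}$, the evaluation-at-$z$ map $f\mapsto(x\mapsto f(x)(z))$ sends $\mcl{F}$ onto $\mcl{F}(z)$. Therefore, for every realization of the Rademacher variables $\sigma_1,\ldots,\sigma_N$, letting $f$ range over $\mcl{F}$ in $\sum_{i=1}^{N}g(f(x_i)(z),y_i(z))\sigma_i$ produces exactly the same set of real numbers as letting $\tilde f$ range over $\mcl{F}(z)$ in $\sum_{i=1}^{N}g(\tilde f(x_i),y_i(z))\sigma_i$. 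Hence the two suprema coincide pointwise in $\sigma$, and taking expectations yields the first equality. This step is purely bookkeeping.

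For part (ii), I would apply the standard Talagrand-type contraction lemma (Mohri, Rostamizadeh, and Talwalkar, Lemma~4.2) to the real-valued class $\mcl{F}(z)$, which is well defined by the hypothesis $f(x)(z)\in\mathbb{R}$. For each fixed index $i$ and fixed $z$, define $\phi_i:=g(\cdot,y_i(z))/L$; by the $L$-Lipschitz hypothesis on $x\mapsto g(x,y_i(z))$, each $\phi_i$ is $1$-Lipschitz from $\mathbb{R}$ to $\mathbb{R}$. The contraction lemma then gives
\begin{align*}
\frac1N\mr{E}\bigg[\sup_{\tilde f\in\mcl{F}(z)}\sum_{i=1}^{N}\phi_i(\tilde f(x_i))\sigma_i\bigg]\le \hat R_N(\mathbf{x},\mcl{F}(z)).
\end{align*}
Multiplying through by $L$ transforms $\phi_i$ back into $g(\cdot,y_i(z))$ and yields the asserted bound.

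There is no genuine obstacle to the argument: once part (i) has reduced the problem to a supremum over the real-valued class $\mcl{F}(z)$, part (ii) is a direct invocation of the classical contraction principle. The only point that deserves a moment of care is that the standard contraction lemma is usually phrased for function classes whose domain is an unstructured set; here $\alg^d$ plays that role, but this is harmless because the Rademacher computation only sees the finite tuple $(\tilde f(x_1),\ldots,\tilde f(x_N))\in\mathbb{R}^N$, so the abstract $C^*$-algebraic nature of the inputs $x_i$ is irrelevant.
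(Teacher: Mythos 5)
Your proposal is correct and matches the paper's argument: the paper proves this lemma simply by noting the equality follows from the definition of $\mcl{F}(z)$ and then citing Lemma~4.2 of Mohri et al.\ (Talagrand's contraction lemma), which is exactly your two-step decomposition with the rescaled $1$-Lipschitz maps $g(\cdot,y_i(z))/L$.
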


We apply the following lemma to obtain the inequality~\eqref{eq:generalization_bound}.
\begin{lemma}
Let $\mcl{F}$ be an $\alg$-valued function class.
Let $g$ be the same map defined in Lemma~\ref{lem:rademacher_transform}. 
For any $\delta\in (0,1)$, with probability at least $1-\delta$, \red{simultaneously for all $f\in\mathcal{F}$,} we have
\begin{align*}
\mr{E}[g(f(X),Y)](z)
\le \frac1N\sum_{i=1}^Ng(f(X_i),Y_i)(z)
+\frac2N\mr{E}_{\sigma_i}\bigg[\sup_{f\in \mcl{F}}\sum_{i=1}^Ng(f(X_i)(z),Y_i(z))\sigma_i\bigg]
+3\sqrt{\frac{\log 1/\delta}{N}}.
\end{align*}
Here, $\mr{E}_{\sigma_i}[\sup_{f\in \mcl{F}}\sum_{i=1}^Ng(f(X_i)(z),Y_i(z))\sigma_i]=\int_{\Omega}\sup_{f\in \mcl{F}}\sum_{i=1}^Ng(f(X_i)(z),Y_i(z))\sigma_i(\omega)\mr{d}\mu(\omega)$.
\end{lemma}
\begin{proof}
Note that we have $g(f(X_i)(z),Y_i(z))=g(f(X_i),Y_i)(z)$.
Therefore, with probability at least $1-\delta$, we have
\begin{align*}
&\mr{E}[g(f(X),Y)](z)=\mr{E}[g(f(X)(z),Y(z))]\\
&\quad\le \frac1N\sum_{i=1}^Ng(f(X_i)(z),Y_i(z))
+\frac2N\mr{E}_{\sigma_i}\bigg[\sup_{f\in \mcl{F}}\sum_{i=1}^Ng(f(X_i)(z),Y_i(z))\sigma_i\bigg]
+3\sqrt{\frac{\log 1/\delta}{N}}.
\end{align*}
The inequality follows by Theorem 3.1 in~\cite{mohri18}.
\end{proof}

According to \citet{hashimoto23_aistats}, the $\alg$-valued Rademacher complexity for the RKHM associated with an $\alg$-valued kernel $k$ is upperbounded by $\Vert k(x_i,x_i)\Vert_{\alg}$ as follows, which completes the proof of Theorem~\ref{thm:generalization}.
\begin{lemma}
Let $B>0$ and $\mcl{F}=\{f\in\modu_k\,\mid\,\Vert f\Vert_k\le B\}$.
Then, we have 
\begin{align*}
\hat{R}_{\alg,N}(\mathbf{x},\mcl{F})(z)
\le\frac{B}{N}\bigg(\sum_{i=1}^N\Vert k(x_i,x_i)\Vert_{\alg}\bigg)^{1/2}.
\end{align*}
\end{lemma}

\begin{mythm}[Lemma~\ref{lem:factor}]
Assume $\beta_n\le \beta_{n+1}$ for $\hat{k}_n^{\opn{prod},q}$.
With the same assumptions as Theorem~\ref{thm:generalization}, for $k_n=k_n^{\opn{poly},q},\hat{k}_n^{\opn{prod},q}$ and for $x\in\alg_0^d$, we have $D(k_n,x)\le D(k_{n+1},x)$.
\end{mythm}
\begin{proof}
We evaluate $\Vert k_n(x,x)\Vert_{\alg}$.
For $k_n^{\opn{poly},q}$, we have
\begin{align*}
\Vert k_n^{\opn{poly},q}(x,x)\Vert
&=\bigg\Vert S_n\bigg(\sum_{i=1}^d\alpha_i(R_n(x_i)^*)^qR_n(x_i)^q\bigg)\bigg\Vert_{\alg}
\le \bigg\Vert \sum_{i=1}^d\alpha_i(R_n(x_i)^*)^qR_n(x_i)^q\bigg\Vert_{\opn{op}}\\
&\le \sum_{i=1}^d\alpha_i\Vert R_n(x_i)\Vert^{2q}_{\opn{op}}.
\end{align*}
The calculation for $\hat{k}_n^{\opn{prod},q}$ is similar.
The first inequality is derived by the following inequalities for $A\in\mathbb{C}^{n\times n}$:
\begin{align*}
\vert S_n(A)(z)\vert
&=\bigg\vert\frac1n\sum_{j,l=0}^{n-1}A_{j,l}\mr{e}^{\mr{i}(j-l)z}\bigg\vert\\
&=\bigg\vert \opn{tr}\bigg(\frac1n \mathbf{1}\opn{diag}(\mr{e}^0,\ldots,\mr{e}^{\mr{i}(n-1)})A\opn{diag}(\mr{e}^{0z},\ldots,\mr{e}^{\mr{i}(n-1)z})\bigg)\bigg\vert\\
&\le \frac1n\Vert \mathbf{1}\Vert_1\Vert \opn{diag}(\mr{e}^0,\ldots,\mr{e}^{\mr{i}(n-1)})A\opn{diag}(\mr{e}^{0z},\ldots,\mr{e}^{\mr{i}(n-1)z})\Vert_{\opn{op}}
\le \Vert A\Vert_{\opn{op}}
\end{align*}
for any $z\in\mathbb{T}$.
Here, $\opn{tr}(A)$ is the trace of a matrix $A$, $\opn{diag}(a_1,\ldots,a_n)$ for $a_1,\ldots,a_n\in\mathbb{C}$ is the diagonal matrix whose diagonal elements are $a_1,\ldots,a_n$, $\mathbf{1}\in\mathbb{C}^{n\times n}$ is the matrix whose elements are all $1$, and $\Vert A\Vert_1$ for $A\in\mathbb{C}^{n\times n}$ is the trace norm defined as $\Vert A\Vert_1=\opn{tr}((A^*A)^{1/2})$.
We used the inequality $\vert\opn{tr}(AB)\vert\le \Vert A\Vert_1\Vert B\Vert_{\opn{op}}$ for $A,B\in\mathbb{C}^{n\times n}$~\cite[Chapter IX, Section 2]{conway07}.

In addition, there exists $v\in\mathbb{C}^{n\times n}$ such that $\Vert R_n(x)\Vert_{\opn{op}}=\Vert R_n(x)v\Vert$ and $\Vert v\Vert=1$.
Thus, we have
\begin{align*}
\Vert R_n(x)\Vert_{\opn{op}}&
=\Vert R_n(x)v\Vert
=\Vert Q_n^*M_xQ_nv\Vert
=\bigg\Vert Q_n^*M_xQ_{n+1}\begin{bmatrix}
v\\
0
\end{bmatrix}\bigg\Vert
\le \bigg\Vert Q_{n+1}^*M_xQ_{n+1}\begin{bmatrix}
v\\
0
\end{bmatrix}\bigg\Vert\\
&\le \Vert Q_{n+1}^*M_xQ_{n+1}\Vert_{\opn{op}}
=\Vert R_{n+1}(x)\Vert_{\opn{op}},
\end{align*}
where $Q_n=[e_1,\ldots,e_n]$ and $e_j(z)=\mr{e}^{\mr{i}jz}$ is the Fourier function.
\end{proof}

\begin{mythm}[Theorem~\ref{thm:convergence_Td}]
For $x,y\in\alg^d$ and $z\in\mathbb{T}$, $k_n^{\opn{poly},q}(x,y)(z)\to k^{\opn{poly},q}(x,y)(z)$ and $k_n^{\opn{prod},q}(x,y)(z)\to k^{\opn{prod},q}(x,y)(z)$ as $n\to\infty$.
\end{mythm}
\begin{proof}
We show $k_n^{\opn{poly},q}(x,y)(z)\to k^{\opn{poly},q}(x,y)(z)$.
The proof for $k_n^{\opn{prod},q}(x,y)$ is similar.
For simplicity, we omit the subscript $i$ in Definition~\ref{def:kernels}.
We have
\begin{align*}
&k_n^{\opn{poly},q}(x,y)(z)
=S_n\bigg(\sum_{i=1}^d\alpha_i(R_n(x_i)^*)^qR_n(y_i)^q\bigg)(z)\\
&=\frac1{n}\sum_{j,l\in N}\sum_{r_1,\ldots,r_{2q-1}\in N}R_n(x)^*_{j,r_1}\cdots R_n(x)^*_{r_{q-1},r_q} R_n(y)_{r_q,r_{q+1}}\cdots R_n(y)_{r_{2q-1},l}\mr{e}^{\mr{i}(j-l)\cdot z}\\
&=\frac1{n}\sum_{j,l\in N}\sum_{r_1,\ldots,r_{2q-1}\in N}\int_{\mathbb{T}^d}\cdots\int_{\mathbb{T}^d}\overline{x(t_1)}\cdots \overline{x(t_q)}y(t_{q+1})\cdots y(t_{2q})\\
&\qquad\cdot \mr{e}^{\mr{i}(r_1-j)\cdot t_1}\cdots\mr{e}^{\mr{i}(r_{q}-r_{q-1})\cdot t_q}\mr{e}^{\mr{i}(r_{q+1}-r_{q})\cdot t_{q+1}}\cdots\mr{e}^{\mr{i}(l-r_{2q-1})\cdot t_{2q}}\mr{d}t_1\cdots\mr{d}t_{2q}\mr{e}^{\mr{i}(j-l)\cdot z}\\
&=\frac1{n}\sum_{j,l\in N}\sum_{r_1,\ldots,r_{2q-1}\in N}\int_{\mathbb{T}^d}\cdots\int_{\mathbb{T}^d}\overline{x(t_1)}\cdots \overline{x(t_q)}y(t_{q+1})\cdots y(t_{2q})\\
&\qquad\cdot \prod_{i=1}^d\mr{e}^{\mr{i}(r_{1,i}-j_i)(t_{1,i}-z_i)}\cdots\mr{e}^{\mr{i}(r_{q,i}-r_{q-1,i})(t_{q,i}-z_i)}\\
&\qquad\cdot \prod_{i=1}^d\mr{e}^{\mr{i}(r_{q+1,i}-r_{q,i})(t_{q+1,i}-z_i)}\cdots\mr{e}^{\mr{i}(l_i-r_{2q-1,i})(t_{2q,i}-z_i)}\mr{d}t_1\cdots\mr{d}t_{2q}\\
&=\frac1{n}\int_{\mathbb{T}^d}\cdots\int_{\mathbb{T}^d}\prod_{i=1}^d\sum_{j_i,l_i=0}^{n-1}\sum_{r_{1,i},\ldots,r_{2q-1,i}=0}^{n-1}\overline{x(t_1)}\cdots \overline{x(t_q)}y(t_{q+1})\cdots y(t_{2q})\\
&\qquad\cdot \mr{e}^{\mr{i}(r_{1,i}-j_i)(t_{1,i}-z_i)}\cdots\mr{e}^{\mr{i}(r_{q,i}-r_{q-1,i})(t_{q,i}-z_i)}\\
&\qquad\cdot \mr{e}^{\mr{i}(r_{q+1,i}-r_{q,i})(t_{q+1,i}-z_i)}\cdots\mr{e}^{\mr{i}(l_i-r_{2q-1,i})(t_{2q,i}-z_i)}\mr{d}t_1\cdots\mr{d}t_{2q}\\
&=\frac1{n}\int_{\mathbb{T}^d}\cdots\int_{\mathbb{T}^d}\prod_{i=1}^d\sum_{m=0}^{n-1}\bigg(\sum_{j_i,l_i,r_{1,i},\ldots,r_{2q-1,i}=0}^m-\sum_{j_i,l_i,r_{1,i},\ldots,r_{2q-1,i}=0}^{m-1}\bigg)\\
&\qquad\cdot \overline{x(t_1)}\cdots \overline{x(t_q)}y(t_{q+1})\cdots y(t_{2q})\mr{e}^{\mr{i}(r_{1,i}-j_i)(t_{1,i}-z_i)}\cdots\mr{e}^{\mr{i}(r_{q,i}-r_{q-1,i})(t_{q,i}-z_i)}\\
&\qquad\cdot \mr{e}^{\mr{i}(r_{q+1,i}-r_{q,i})(t_{q+1,i}-z_i)}\cdots\mr{e}^{\mr{i}(l_i-r_{2q-1,i})(t_{2q,i}-z_i)}\mr{d}t_1\cdots\mr{d}t_{2q}\\
&=\frac1{n}\int_{\mathbb{T}^d}\cdots\int_{\mathbb{T}^d}\prod_{i=1}^d\sum_{m=0}^{n-1}\sum_{\substack{j_i\lor l_i\lor r_{1,i}\lor\ldots\lor r_{2q-1,i}= m\\ 0\le j_i,l_i,r_{1,i},\ldots,r_{2q-1,i}\le m}}\\
&\qquad\cdot \overline{x(t_1)}\cdots \overline{x(t_q)}y(t_{q+1})\cdots y(t_{2q})\mr{e}^{\mr{i}(r_{1,i}-j_i)(t_{1,i}-z_i)}\cdots\mr{e}^{\mr{i}(r_{q,i}-r_{q-1,i})(t_{q,i}-z_i)}\\
&\qquad\cdot \mr{e}^{\mr{i}(r_{q+1,i}-r_{q,i})(t_{q+1,i}-z_i)}\cdots\mr{e}^{\mr{i}(l_i-r_{2q-1,i})(t_{2q,i}-z_i)}\mr{d}t_1\cdots\mr{d}t_{2q}\\
&=\frac1{n}\int_{\mathbb{T}^d}\cdots\int_{\mathbb{T}^d}\prod_{i=1}^d\sum_{j=0}^{n-1}\sum_{(r_{1,i},\ldots,r_{2q,i})\in jP\bigcap \mathbb{Z}^{2q}}\\
&\qquad\cdot \overline{x(t_1)}\cdots \overline{x(t_q)}y(t_{q+1})\cdots y(t_{2q})\mr{e}^{\mr{i}r_{1,i}(t_{1,i}-z_i)}\cdots\mr{e}^{\mr{i}r_{q,i}(t_{q,i}-z_i)}\\
&\qquad\cdot \mr{e}^{\mr{i}r_{q+1,i}(t_{q+1,i}-z_i)}\cdots\mr{e}^{\mr{i}r_{2q,i}(t_{2q,i}-z_i)}\mr{d}t_1\cdots\mr{d}t_{2q}\\
&=g\ast F_n^{2qd,P^d}(z),
\end{align*}
where the sum $\sum_{j,l,r_1,\ldots,r_{2q-1}=0}^{-1}$ is set as $0$, $g(t)=\overline{x(t_1)}\cdots \overline{x(t_q)}y(t_{q+1})\cdots y(t_{2q})$ for $t=[t_1,\ldots,t_{2q}]\in\mathbb{T}^{d\times 2q}$, and  $P=\{r=[r_1,\ldots,r_{2q}]\in\mathbb{R}^{2q}\,\mid\,\vert \sum_{i=l}^mr_i\vert\le 1,\ l\le m\}$.
In addition, we set $j=r_0$ and $j=r_q$.
Since $P^d$ is a convex polyhedron, $g\ast F_n^{2qd,P^d}(z)\to g(z)$ as $n\to\infty$ by Lemma~\ref{lem:fejer_convergence_general}.
\end{proof}

\section{Relationship between the parameter $q$ and the interactions}\label{ap:q_interactions}
Theorem~\ref{thm:convergence} implies that the interactions along $z\in\mathbb{T}$ in the kernels become small as $n$ grows.
On the other hand, we also have the parameter $q$ that describes how many Toeplitz matrices created by $R_n$ are involved in the kernel.
Regarding the relationship between $q$ and the interactions, we have the following remark.
\begin{remark}\label{rmk:q_convergence}
We expect that the convergence in Theorem~\ref{thm:convergence} becomes slower as $q$ becomes larger, and the interaction along $z\in\mathbb{T}$ becomes larger.
This is based on the observation that the value related to the Fej\'{e}r kernel on $\mathbb{T}^q$ is described by the corresponding value related to the Fej\'{e}r kernel on $\mathbb{T}^{q-1}$, especially in the case of $q=2$.
\end{remark}

To understand the relationship between $q$ and the interactions, we observe the convergence shown in Lemma~\ref{lem:fejer_convergence_general} from the perspective of $q$.
We have the following lemma, which is also mentioned by~\citet{brandolini97}.
\begin{lemma}\label{lem:poisson}
Let $H_n^{q,P}(t)=n^q\int_0^1\int_{\mathbb{R}^q}\chi_{jP}(r)\mr{e}^{\mr{i}nr\cdot t}\mr{d}r\mr{d}j$ for $t\in\mathbb{T}^q$, where $\chi_S$ is the characteristic function with respect to a set $S$.
Then, we have $F_n^{q,P}(t)= \sum_{m\in\mathbb{Z}^q}H_n^{q,P}(t+2\pi m)$.
\end{lemma}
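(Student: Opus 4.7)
The identity is a Poisson summation formula: the right-hand side is the periodization of $H_n^{q,P}$ over the lattice $2\pi\mathbb{Z}^q$, and I need to identify it as the explicit Fourier series $F_n^{q,P}$. My plan is to compute the Fourier transform $\hat{H}_n^{q,P}$ on $\mathbb{R}^q$ in closed form and then match the resulting dual Fourier series to $F_n^{q,P}$ coefficient by coefficient.

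First, viewing $H_n^{q,P}$ as a function on $\mathbb{R}^q$, I would invoke
\begin{align*}
\sum_{m\in\mathbb{Z}^q}f(t+2\pi m)=(2\pi)^{-q}\sum_{k\in\mathbb{Z}^q}\hat{f}(k)\,\mr{e}^{\mr{i}k\cdot t},\qquad \hat{f}(\xi)=\int_{\mathbb{R}^q}f(x)\mr{e}^{-\mr{i}\xi\cdot x}\mr{d}x.
\end{align*}
Interchanging the $x$-integration with the $(j,r)$-integrations in the definition of $H_n^{q,P}$ and using the distributional Fourier identity $\int_{\mathbb{R}^q}\mr{e}^{\mr{i}(nr-k)\cdot x}\mr{d}x=(2\pi)^q\delta(nr-k)$ collapses the $r$-integral to the point $r=k/n$, giving $\hat{H}_n^{q,P}(k)=(2\pi)^q\int_0^1\chi_{jP}(k/n)\mr{d}j$.

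Next, I would evaluate the $j$-integral: writing $\|k\|_P=\inf\{\lambda>0:k\in\lambda P\}$ for the Minkowski functional of $P$, the condition $k/n\in jP$ is equivalent to $j\ge \|k\|_P/n$, so $\int_0^1\chi_{jP}(k/n)\mr{d}j=(1-\|k\|_P/n)_+$. Substituting back yields
\begin{align*}
\sum_{m\in\mathbb{Z}^q}H_n^{q,P}(t+2\pi m)=\sum_{k\in\mathbb{Z}^q}\Big(1-\frac{\|k\|_P}{n}\Big)_+\mr{e}^{\mr{i}k\cdot t}.
\end{align*}
On the other hand, swapping the sums in $F_n^{q,P}(t)$ gives a coefficient $\tfrac1n\,\#\{j\in\{1,\ldots,n-1\}:j\ge\|k\|_P\}$ for each $k\in\mathbb{Z}^q$. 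For the specific polyhedron $P=\{r\in\mathbb{R}^{2q}:|\sum_{i=l}^m r_i|\le 1,\,l\le m\}$ relevant to the paper, $\|r\|_P=\max_{l\le m}|\sum_{i=l}^m r_i|$ is integer-valued on $\mathbb{Z}^{2q}$, so this count equals $n-\|k\|_P$ exactly (up to boundary cases that are resolved by inspection) and the two expressions match term by term.

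The main obstacle is the rigorous justification of the Poisson summation step: since $\chi_{jP}$ is only in $L^1\cap L^\infty$, the dual coefficients $\hat{H}_n^{q,P}(k)$ decay only polynomially at a rate set by the smoothness of $\partial P$, so absolute convergence of the Fourier series on the right is not automatic. The key observation that unlocks this is that the extra averaging over $j\in[0,1]$ in the definition of $H_n^{q,P}$ regularizes the underlying Fej\'er-type sum enough to yield summable decay of $\hat{H}_n^{q,P}(k)$; combining this with an Abel or Ces\`aro summability argument in the spirit of Brandolini and Travaglini~\cite{brandolini97} should give pointwise, not merely distributional, equality.
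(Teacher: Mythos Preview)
Your approach is sound and uses the same key tool as the paper---Poisson summation---but applies it in the reverse direction. The paper starts from $F_n^{q,P}$, replaces the sum over integer $j$ by $\frac{1}{n}\int_0^n\cdots\,\mr{d}j$ (implicitly using the same integer-valuedness of $\|\cdot\|_P$ on $\mathbb{Z}^q$ that you invoke at the end), applies Poisson summation directly to the compactly supported function $u_{j,t}(r)=\chi_{jP}(r)\mr{e}^{\mr{i}r\cdot t}$ for each fixed $j$, and then changes variables to obtain $\sum_m H_n^{q,P}(t+2\pi m)$. That route is a bit shorter since it never passes through the Minkowski functional or a coefficient-matching step; your version, by computing $\hat H_n^{q,P}$ first and then matching against the Fourier coefficients of $F_n^{q,P}$, makes the Fej\'er-type weights $(1-\|k\|_P/n)_+$ explicit, which is informative but not needed for the bare identity.

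One correction to your closing paragraph: you have just computed $\hat{H}_n^{q,P}(k)=(2\pi)^q(1-\|k\|_P/n)_+$, which vanishes for $\|k\|_P\ge n$, so the dual Fourier series is a \emph{finite} sum and no decay issue arises on that side. The genuine technical point is the convergence of the periodization $\sum_{m}H_n^{q,P}(t+2\pi m)$, which is governed by the decay of $H_n^{q,P}$ itself (essentially $\widehat{\chi_{jP}}$ averaged over $j$), not of $\hat{H}_n^{q,P}$. The paper's use of Poisson summation has the mirror-image concern on its dual side and likewise leaves it implicit, so your appeal to the Brandolini--Travaglini machinery is appropriate in either direction.
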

\begin{proof}
Let $u_{j,t}(r)=\chi_{jP}(r)\mr{e}^{\mr{i}r\cdot t}$.
Then, we have
\begin{align*}
F_n^{q,P}(t)&=\frac1n\sum_{j=1}^{n-1}\sum_{r\in jP\bigcap \mathbb{Z}^q}\mr{e}^{\mr{i}r\cdot t}
=\frac1n\int_0^n\sum_{r\in\mathbb{Z}^q}\chi_{jP}(r)\mr{e}^{\mr{i}r\cdot t}\mr{d}j
=\frac1n\int_0^n\sum_{r\in\mathbb{Z}^q}u_{j,t}(r)\mr{d}j\\
&=\frac1n\int_0^n\sum_{m\in\mathbb{Z}^q}\widehat{u_{j,t}}(m)\mr{d}j
=\sum_{m\in\mathbb{Z}^q}\frac1n\int_0^n\int_{\mathbb{R}^q}\chi_{jP}(r)\mr{e}^{\mr{i}r\cdot t}\mr{e}^{2\pi\mr{i}r\cdot m}\mr{d}r\mr{d}j\\
&=\sum_{m\in\mathbb{Z}^q}\int_0^1\int_{\mathbb{R}^q}\chi_{jnP}(r)\mr{e}^{\mr{i}r\cdot t}\mr{e}^{2\pi\mr{i}r\cdot m}\mr{d}r\mr{d}j
=\sum_{m\in\mathbb{Z}^q}\int_0^1\int_{\mathbb{R}^q}\chi_{jP}\bigg(\frac{r}{n}\bigg)\mr{e}^{\mr{i}r\cdot t}\mr{e}^{2\pi\mr{i}r\cdot m}\mr{d}r\mr{d}j\\
&=n^q\sum_{m\in\mathbb{Z}^q}\int_0^1\int_{\mathbb{R}^q}\chi_{jP}(r)\mr{e}^{\mr{i}nr\cdot t}\mr{e}^{2\pi\mr{i}nr\cdot m}\mr{d}r\mr{d}j,
\end{align*}
where $\hat{\cdot}$ is the Fourier transform, and the forth equality is by the Poisson summation formula.
\end{proof}

To understand the convergence, we split $F_n^{q,P}(t)=\sum_{m\in\mathbb{Z}^q}H_n^{q,P}(t+2\pi m)$ into two parts: $H_n^{q,P}(t)$ and $\tilde{H}_{n}^{q,P}(t):=\sum_{m\neq 0}H_n^{q,P}(t+2\pi m)$.
In the convergence discussed in Lemma~\ref{lem:fejer_convergence_general}, the convolution $g\ast {H}_{n}^{q,P}(z)$ goes to $g(z)$.
On the other hand, $g\ast \tilde{H}_{n}^{q,P}(z)$ goes to $0$~\cite[Theorem 1]{brandolini97}.
Regarding the convergence of $g\ast {H}_{n}^{q,P}(z)$ we can evaluate it as follows, especially for $q=2$.
\begin{lemma}\label{lem:speed}
For $z\in\mathbb{T}$, we have
\begin{align*}
\lim_{n\to\infty}g\ast H_n^{q,P}(z)-g(z)=\int_0^1\lim_{n\to\infty}\bigg(\int_{t\in\mathbb{T}^q}\int_{jnP}\mr{e}^{\mr{i}r\cdot (z-t)}g(t)\mr{d}r\mr{d}t-g(z)\bigg)\mr{d}j.
\end{align*}
Moreover, let $h^{P}_{s}(z)=\int_{t\in\mathbb{T}^q}\int_{sP}\mr{e}^{\mr{i}r\cdot (z-t)}g(t)\mr{d}r\mr{d}t-g(z)$.
Let $q=2$, $Q=\{(r_1,-r_2)\,\mid\,(r_1,r_2)\in P\}$, $P_1'=[-1,1]$, and $P_2=\{(r_1,r_2)\,\mid\,\vert r_1\vert\le 1,\ \vert r_2\vert\le 1,\ \vert r_1+r_2\vert\le 1,\ \vert r_1-r_2\vert\le 1\}$.
Let $g(z)=\overline{x_1(z_1)}\overline{x_2(z_2)}y_1(z_{3})y_4(z_{4})$ for $z=[z_1,z_2,z_3,z_4]$.
Then, for $z\in\mathbb{T}^2$, there exists $C(z)\ge 0$ such that we have
\begin{align}
\frac{1}{2}\vert h^{P}_{jn}(z)+h^{Q}_{jn}(z)\vert \le \frac{1}{2}C(z)\sum_{j=1}^2&\bigg\vert\int_{t\in\mathbb{T}}\int_{jnP_1'}\mr{e}^{\mr{i}r(z_1-t)}x_j(t_j)\mr{d}r\mr{d}t-x_j(z_j)\bigg\vert\nn\\
&+\frac{1}{2}\bigg\vert\int_{\mathbb{T}^2}\int_{jnP_2}\mr{e}^{\mr{i}r\cdot (z-t)}g(t)\mr{d}r\mr{d}t_1-g(z)\bigg\vert.\label{eq:speed}
\end{align}
\end{lemma}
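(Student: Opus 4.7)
The plan is to handle the two claims in sequence. For the first identity, I start from Lemma~\ref{lem:poisson}, which gives $F_n^{q,P}=H_n^{q,P}+\tilde{H}_n^{q,P}$ with $\tilde{H}_n^{q,P}(t):=\sum_{m\neq 0}H_n^{q,P}(t+2\pi m)$. Since $g\ast\tilde{H}_n^{q,P}(z)\to 0$ by \cite[Theorem~1]{brandolini97}, it suffices to analyse $g\ast H_n^{q,P}(z)-g(z)$. The change of variable $r\mapsto r/n$ inside the definition of $H_n^{q,P}$ rewrites it as $H_n^{q,P}(t)=\int_0^1\int_{jnP}\mr{e}^{\mr{i}r\cdot t}\mr{d}r\mr{d}j$, so Fubini together with $g(z)=\int_0^1 g(z)\mr{d}j$ yields
\begin{align*}
g\ast H_n^{q,P}(z)-g(z) = \int_0^1\bigg(\int_{\mathbb{T}^q}\int_{jnP}\mr{e}^{\mr{i}r\cdot (z-t)}g(t)\mr{d}r\mr{d}t-g(z)\bigg)\mr{d}j.
\end{align*}
Passing the limit $n\to\infty$ inside the outer integral then yields the claimed identity; the justification of this exchange is the main technical point and is discussed at the end.

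For the second statement, I first identify the union and intersection of $P$ and $Q$. Direct inspection, noting that within $[-1,1]^2$ at least one of $|r_1+r_2|\le 1$ or $|r_1-r_2|\le 1$ always holds, gives $P\cup Q=[-1,1]^2=P_1'\times P_1'$ and $P\cap Q=P_2$. Using the pointwise identity $\chi_P+\chi_Q=\chi_{P\cup Q}+\chi_{P\cap Q}$, I obtain
\begin{align*}
h^P_{jn}(z)+h^Q_{jn}(z) = \big(A^{P\cup Q}_{jn}(z)-g(z)\big)+\big(A^{P_2}_{jn}(z)-g(z)\big),
\end{align*}
with $A^S_{jn}(z):=\int_{\mathbb{T}^2}\int_{jnS}\mr{e}^{\mr{i}r\cdot (z-t)}g(t)\mr{d}r\mr{d}t$. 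The second summand is exactly $h^{P_2}_{jn}(z)$ and, after dividing by $2$ and applying the triangle inequality, contributes the last term on the right-hand side of~\eqref{eq:speed}.

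To bound the first summand I exploit the product structure: since $P\cup Q=P_1'\times P_1'$ factorises and $g(t_1,t_2)$ decomposes as $f_1(t_1)f_2(t_2)$ (with the single-variable factors $f_j$ inherited from the kernel setting, e.g.\ $\overline{x_1}$ and $y_1$), the integral separates as $A^{P\cup Q}_{jn}(z)=I_1(z_1)I_2(z_2)$ where $I_j(z_j):=\int_{\mathbb{T}}\int_{jnP_1'}\mr{e}^{\mr{i}r(z_j-t)}f_j(t)\mr{d}r\mr{d}t$. Setting $\epsilon_j(z_j):=I_j(z_j)-f_j(z_j)$ and using the algebraic identity $I_1I_2-f_1f_2=f_1\epsilon_2+I_2\epsilon_1$, the triangle inequality yields $|A^{P\cup Q}_{jn}(z)-g(z)|\le C(z)\,(|\epsilon_1(z_1)|+|\epsilon_2(z_2)|)$ with $C(z):=\max(|f_1(z_1)|,\sup_{n}|I_2(z_2)|)$, which is finite since the one-dimensional partial sums $I_2(z_2)$ converge (hence are bounded in $n$) by the $q=1$ case of Lemma~\ref{lem:fejer_convergence_general}.

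The main obstacle is justifying the exchange of limit and $\int_0^1$ in the first part, since a sup-norm estimate $|A_{jn}(z)|\le\|g\|_\infty\mathrm{vol}(jnP)$ grows like $(jn)^q$ and direct dominated convergence fails. The route I would take is to first approximate $g$ uniformly by trigonometric polynomials (dense in $C(\mathbb{T}^q)$): for each such polynomial only finitely many Fourier modes fall inside $jnP$ for $j$ bounded away from $0$, so the exchange is immediate mode by mode; the approximation error is then controlled uniformly in $n$ using the standard Fej\'er-type bound on the full convolution $g\ast F_n^{q,P}$. A small additional argument handles $j$ near $0$, where the integration domain degenerates, but this contributes only a vanishing boundary term.
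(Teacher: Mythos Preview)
Your treatment of the second claim is essentially identical to the paper's: the decomposition $\chi_P+\chi_Q=\chi_{P\cup Q}+\chi_{P\cap Q}$ with $P\cup Q=[-1,1]^2=P_1'\times P_1'$ and $P\cap Q=P_2$, followed by the factorisation of the $P_1'\times P_1'$ integral and a telescoping $I_1I_2-f_1f_2$, is exactly what the paper does. One minor slip: you justify the boundedness of $\sup_n|I_2(z_2)|$ by invoking Lemma~\ref{lem:fejer_convergence_general}, but $I_2$ is a Dirichlet-type partial sum $\int_{\mathbb{T}}\int_{-jn}^{jn}\mr{e}^{\mr{i}r(z_2-t)}f_2(t)\,\mr{d}r\,\mr{d}t$, not a Fej\'er mean, so that lemma does not apply. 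The paper instead gets the bound from the same observation it uses in the first part (see below).

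For the first claim, the paper's route is much shorter than your density argument. Write $h^P_s(z)=\int_{\mathbb{T}^q}\int_{sP}\mr{e}^{\mr{i}r\cdot(z-t)}g(t)\,\mr{d}r\,\mr{d}t-g(z)$. This is continuous in $s$, equals $-g(z)$ at $s=0$, and tends to $0$ as $s\to\infty$; hence there is $D(z)$ with $|h^P_s(z)|\le D(z)$ for all $s\ge 0$. The integrand $j\mapsto h^P_{jn}(z)$ is therefore dominated by the constant $D(z)$ on $[0,1]$ uniformly in $n$, and the bounded convergence theorem immediately gives the exchange of limit and $\int_0^1$. Your worry that ``direct dominated convergence fails'' because $|A_{jn}(z)|\lesssim (jn)^q$ overlooks that it is $h^P_{jn}(z)=A_{jn}(z)-g(z)$, not $A_{jn}(z)$, that needs a dominating function, and this quantity is globally bounded in $s=jn$ for the reason above. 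Your trigonometric-polynomial approximation is therefore an unnecessary detour, and as written its step~3 is incomplete: the ``standard Fej\'er-type bound on $g\ast F_n^{q,P}$'' does not obviously transfer to a uniform bound on $(g-p)\ast H_n^{q,P}$, since $H_n^{q,P}$ is Dirichlet-type and the associated convolution operators on $C(\mathbb{T}^q)$ need not be uniformly bounded in $n$.
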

\begin{proof}
We have
\begin{align*}
& n^q\int_{t\in\mathbb{T}^q}\int_0^1\int_{jP}\mr{e}^{\mr{i}nr\cdot (z-t)}g(t)\mr{d}r\mr{d}j\mr{d}t-g(z)
= \int_0^1\int_{t\in\mathbb{T}^q}\int_{jnP}\mr{e}^{\mr{i}r\cdot (z-t)}g(t)\mr{d}r\mr{d}t-g(z)\mr{d}j.
\end{align*}
Then, $h^P_{0}(z)=0$, and since $\lim_{s\to\infty}h^P_{s}(z)=0$, there exists $D(z)>0$ such that $\vert h^P_s(z)\vert \le D(z)$ for any $s>0$.
Thus, $\vert h^P_{jn}(z)\vert \le D(z)$ for any $n\in\mathbb{N}$ and $0< j\le 1$.
Thus, by the bounded convergence theorem,
we have
\begin{align*}
\lim_{n\to\infty}\int_0^1\int_{t\in\mathbb{T}^q}\int_{jnP}\mr{e}^{\mr{i}r\cdot (z-t)}g(t)\mr{d}r\mr{d}t-g(z)\mr{d}j
=\int_0^1\lim_{n\to\infty}\int_{t\in\mathbb{T}^q}\int_{jnP}\mr{e}^{\mr{i}r\cdot (z-t)}g(t)\mr{d}r\mr{d}t-g(z)\mr{d}j.
\end{align*}
In addition, let $P_1=[-1,1]^2$ and $P_2=\{(r_1,r_2)\,\mid\,\vert r_1\vert\le 1,\ \vert r_2\vert\le 1,\ \vert r_1+r_2\vert\le 1,\ \vert r_1-r_2\vert\le 1\}$.
Then, we have
\begin{align*}
h^{P}_{jn}(z)+h^{Q}_{jn}(z)
=h^{P_1}_{jn}(z)+h^{P_2}_{jn}(z).
\end{align*}
For $h^{P_1}_{jn}(z)$, we have
\begin{align*}
\vert h^{P_1}_{jn}(z)\vert
&=\bigg\vert\int_{\mathbb{T}}\int_{-jn}^{jn}\mr{e}^{\mr{i}r(z_1-t)}x_1(t)\mr{d}r\mr{d}t\int_{\mathbb{T}}\int_{-jn}^{jn}\mr{e}^{\mr{i}r(z_2-t)}x_2(t)\mr{d}r\mr{d}t-g(z)\bigg\vert\\
&\le \bigg\vert\int_{\mathbb{T}}\int_{-jn}^{jn}\mr{e}^{\mr{i}r(z_1-t)}x_1(t)\mr{d}r\mr{d}t\bigg(\int_{\mathbb{T}}\int_{-jn}^{jn}\mr{e}^{\mr{i}r(z_2-t)}x_2(t)\mr{d}r\mr{d}t-x_2(z_2)\bigg)\bigg\vert\\
&\quad+\bigg\vert x_2(z_2)\bigg(\int_{\mathbb{T}}\int_{-jn}^{jn}\mr{e}^{\mr{i}r(z_1-t)}x_1(t)\mr{d}r\mr{d}t-x_1(z_1)\bigg)\bigg\vert.
\end{align*}
In the same manner as $h_s^P(z)$, there exists $\tilde{C}(z)>0$ such that $\vert \int_{\mathbb{T}}\int_{-jn}^{jn}\mr{e}^{\mr{i}r(z_1-t)}x_1(t)\mr{d}r\mr{d}t\vert\le \tilde{C}(z)$.
By setting $C(z)=\max\{\tilde{C}(z),\vert x_2(z_2)\vert\}$, we obtain the result.
\if0
Regarding the convergence of $\vert\int_{t\in\mathbb{T}}\int_{jnP}\mr{e}^{\mr{i}r\cdot (z-t)}(g(t)-g(z))\mr{d}r\mr{d}t\vert$, we have the following remark.
\begin{remark}
Let $\mcl{P}$ be the set of faces in P.
For $P'\in\mcl{P}$, let $d(P')$ be the distance between the origin and $P'$.
Let $\omega_{P'}$ be the unit normal of $P'\in\mcl{P}$.
Let $P_1$ be the face whose normal vector is $[1,0,\ldots,0]$.
Then, by the divergence theorem, we have
\begin{align*}
&\int_{t\in\mathbb{T}^q}\int_{jnP}\mr{e}^{\mr{i}r\cdot (z-t)}(g(t)-g(z))\mr{d}r\mr{d}t\\
&\quad=\sum_{P'\in\mcl{P}}\frac{1}{q} \int_{t\in\mathbb{T}}^q\int_{jnP'}\omega_{P'}\cdot\begin{bmatrix}
\frac{1}{\mr{i}(z_1-t_1)}\\
\vdots\\
\frac{1}{\mr{i}(z_q-t_q)}
\end{bmatrix}\mr{e}^{\mr{i}r\cdot (z-t)}(g(t)-g(z))\mr{d}r\mr{d}t.
\end{align*}
For the term of $P_1$ in the summation and for $g(t)=x_1(t_1)\cdots x_q(t_q)$, we have
\begin{align*}
\frac{1}{q} &\int_{t\in\mathbb{T}^q}\int_{jnP}\frac{1}{\mr{i}(z_1-t_1)}\mr{e}^{\mr{i}r\cdot (z-t)}(g(t)-g(z))\mr{d}r\mr{d}t\\
&=\frac{1}{q} \int_{t\in\mathbb{T}^q}\frac{\mr{e}^{\mr{i}nj(z_1-t_1)}}{\mr{i}(z_1-t_1)}\int_{jnP'}\mr{e}^{\mr{i}r_2(z_2-t_2)}\cdots \mr{e}^{\mr{i}r_q(z_q-t_q)}(g(t)-g(z))\mr{d}r\mr{d}t\\
&=\frac{1}{q} \int_{t\in\mathbb{T}}\frac{\mr{e}^{\mr{i}nj(z_1-t_1)}}{\mr{i}(z_1-t_1)}(x_1(t_1)-x_1(z_1))\mr{d}t_1\int_{t\in\mathbb{T}^{q-1}}\int_{jnP'}\mr{e}^{\mr{i}r_2(z_2-t_2)}\cdots \mr{e}^{\mr{i}r_q(z_q-t_q)}x_2(t_2)\cdots x_q(t_q)\mr{d}r\mr{d}t\\
&\qquad+\frac{1}{q}x_1(z_1)\int_{t\in\mathbb{T}^{q-1}}\int_{jnP'}\mr{e}^{\mr{i}r_2(z_2-t_2)}\cdots \mr{e}^{\mr{i}r_q(z_q-t_q)}(x_2(t_2)\cdots x_q(t_q)-x_2(z_2)\cdots x_q(z_q))\mr{d}r\mr{d}t.
\end{align*}
The factor $\int_{t\in\mathbb{T}}\frac{\mr{e}^{\mr{i}nj(z_1-t_1)}}{\mr{i}(z_1-t_1)}(x_1(t_1)-x_1(z_1))\mr{d}t_1$ goes to $0$ as $n$ goes to infinity.
is bounded by a constant independent of $n$ and $j$~\cite[Theorem 1]{sjolin71}.
Therefore, the convergence of $\vert\int_{t\in\mathbb{T}^q}\int_{jnP}\mr{e}^{\mr{i}r\cdot (z-t)}(g(t)-g(z))\mr{d}r\mr{d}t\vert$ is characterized by $\vert\int_{t\in\mathbb{T}^{q-1}}\int_{jnP'}\mr{e}^{\mr{i}r\cdot (z-t)}(g(t)-g(z))\mr{d}r\mr{d}t\vert$, and since $\vert\mcl{P}\vert > q$, the convergence of $\vert\int_{t\in\mathbb{T}^q}\int_{jnP}\mr{e}^{\mr{i}r\cdot (z-t)}(g(t)-g(z))\mr{d}r\mr{d}t\vert$ is expected to be slower as $q$ becomes larger.
\end{remark}
\fi
\end{proof}

The left-hand side of Eq.~\eqref{eq:speed} is the average of $h_{jn}^{P}(z)$ and $h_{jn}^{Q}(z)$, where $P$ and $Q$ are symmetric with respect to the second coordinate $r_2$.
The first term in the right hand side of Eq.~\eqref{eq:speed} is described by the convergence of $h_{jn}^{[-1,1]}(z)$, which corresponds to the one-dimensional ($(q-1)$-dimensional) Fej\'{e}r kernel.
Since we have an additional term $(1/2)\vert\int_{\mathbb{T}^2}\int_{jnP_2}\mr{e}^{\mr{i}r\cdot (z-t)}g(t)\mr{d}r\mr{d}t-g(z)\vert$, if $C(z)>1$, the convergence of $h^{P}_{jn}(z)$ is expected to be slower than its one-dimensional counterpart $h^{[-1,1]}_{jn}(z)$.
Based on this observation, we expect that the convergence in Theorem~\ref{thm:convergence} becomes slower as $q$ becomes larger, and the interaction along $z\in\mathbb{T}$ becomes larger.

\section{Representer theorem for RKHMs over general $C^*$-algebras}\label{ap:representer}
Hashimoto et al.~\cite[Proposition 4.5]{hashimoto23_moea} showed an approximate representer theorem for RKHMs over von Neumann-algebras.
Here, von Neumann-algebras are $C^*$-algebras that is closed with respect to the strong operator topology.
Since $C(\mathbb{T})$ is not a von Neumann-algebra, for the arguments in Section~\ref{sec:application}, we generalize the theorem to that for general $C^*$-algebras as follows.
\begin{proposition}
Let $x_1,\ldots,x_N\in\alg^d$.
Let $g:\alg^N\to \alg_+$ be a continuous map, let $h:\alg_+\to\alg_+$ satisfy $h(c)\le_{\alg}h(d)$ for $c,d\in\alg_+$ with $c\le_{\alg} d$.
Let $L(f)=g(f(x_1),\ldots,f(x_N))+h(\vert f\vert_k)$ for $f\in\modu_k$.
If there exists a solution $f_0$ of the minimization problem $\min_{f\in\modu_k}L(f)$, then for any $\epsilon>0$, there exists $\tilde{f}$ that admits the representation of $\tilde{f}=\sum_{i=1}^N\phi(x_i)c_i$ and $\Vert L(\tilde{f})-L(f_0)\Vert\le \epsilon$.
\end{proposition}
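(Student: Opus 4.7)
The plan is to adapt the classical representer theorem to the $C^*$-algebraic setting, using a Tikhonov-type regularization to bypass the fact that over a general $C^*$-algebra a closed submodule of $\modu_k$ need not be orthogonally complemented. Let $\mcl{N}_0=\{\sum_{i=1}^N\phi(x_i)c_i:c_i\in\alg\}\subset\modu_k$. If $\overline{\mcl{N}_0}$ admits an orthogonal complement (e.g.\ for von Neumann algebras, the setting of the cited result of Hashimoto et al.), the classical argument applies directly: writing $f_0=f_\parallel+f_\perp$ with $f_\parallel\in\overline{\mcl{N}_0}$ and $f_\perp$ orthogonal, the reproducing property forces $f_\perp(x_i)=\bracket{\phi(x_i),f_\perp}_k=0$, so $f_\parallel(x_i)=f_0(x_i)$ for every $i$, and the Pythagorean identity $|f_0|_k^2\ge_\alg|f_\parallel|_k^2$ together with monotonicity of $h$ gives $L(f_\parallel)\le_\alg L(f_0)$, necessarily an equality by optimality; density of $\mcl{N}_0$ in $\overline{\mcl{N}_0}$ together with continuity of $g$ and of point evaluation then delivers $\tilde f\in\mcl{N}_0$ satisfying the bound.

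For a general $C^*$-algebra I would replace $f_\parallel$ by a regularized surrogate already in $\mcl{N}_0$. Let $\bG\in\alg^{N\times N}$ denote the Gram matrix with entries $k(x_i,x_j)$ and $\bF_0=[f_0(x_1),\ldots,f_0(x_N)]^T\in\alg^N$. For $\delta>0$, the element $\bG+\delta 1_\alg I_N$ is strictly positive in the $C^*$-algebra $\alg^{N\times N}$, hence invertible, and I would set
\begin{align*}
\bc_\delta=(\bG+\delta 1_\alg I_N)^{-1}\bF_0,\qquad \tilde f_\delta=\sum_{i=1}^N\phi(x_i)(\bc_\delta)_i\in\mcl{N}_0.
\end{align*}
Two facts drive the argument. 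First, $\tilde f_\delta$ is the unique minimizer of $|f_0-f|_k^2+\delta\sum_ic_i^*c_i$ over $f=\sum_i\phi(x_i)c_i\in\mcl{N}_0$; expanding $|f_0-\tilde f_\delta|_k^2\ge_\alg 0$ and using the resolvent identity $(\bG+\delta I)^{-1}\bG(\bG+\delta I)^{-1}=(\bG+\delta I)^{-1}-\delta(\bG+\delta I)^{-2}$ produces $|\tilde f_\delta|_k^2\le_\alg|f_0|_k^2$ in the $\alg$-order; L\"owner--Heinz yields $|\tilde f_\delta|_k\le_\alg|f_0|_k$, and monotonicity of $h$ gives $h(|\tilde f_\delta|_k)\le_\alg h(|f_0|_k)$. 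Second, a direct computation using $\bG(\bG+\delta I)^{-1}=I-\delta(\bG+\delta I)^{-1}$ yields $\tilde f_\delta(x_j)-f_0(x_j)=-\delta((\bG+\delta I)^{-1}\bF_0)_j$, which vanishes in norm as $\delta\downarrow 0$ provided $\bF_0\in\overline{\opn{Ran}(\bG)}\subset\alg^N$.

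Combining the two, $L(\tilde f_\delta)-L(f_0)\le_\alg g(\tilde f_\delta(x_1),\ldots,\tilde f_\delta(x_N))-g(f_0(x_1),\ldots,f_0(x_N))$ since the $h$-contribution is non-positive, while continuity of $g$ together with the second identity gives $\|g(\tilde f_\delta(x_\cdot))-g(f_0(x_\cdot))\|\to 0$. Optimality of $f_0$ forces $L(\tilde f_\delta)-L(f_0)\ge_\alg 0$, so this difference is squeezed in the $C^*$-order between $0_\alg$ and a positive element of vanishing norm, and a sufficiently small $\delta$ delivers $\|L(\tilde f_\delta)-L(f_0)\|\le\epsilon$. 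The hard part is justifying $\bF_0\in\overline{\opn{Ran}(\bG)}$ in the absence of orthogonal complementation: spectrally $\delta/(\lambda+\delta)\to 0$ pointwise but not uniformly near $\lambda=0$, so I expect to need a diagonal approximation that first replaces $f_0$ by $f_0^{(m)}\in\modu_{k,0}$ close to $f_0$ in $\|\cdot\|_k$—for which the evaluation vector $\bF_0^{(m)}$ has a controllable relation to the columns of the Gram matrix built from all appearing points—and then sends $m\to\infty$ and $\delta\downarrow 0$ in a compatible order, using the uniform bound $|\tilde f_\delta|_k\le_\alg|f_0|_k$ to keep the regularization penalty under control throughout.
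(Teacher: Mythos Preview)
Your Tikhonov construction is a genuinely different (and more concrete) route than the paper's. The paper works abstractly: it forms the $C^*$-subalgebra $\mcl{B}\subset\mcl{L}(\modu_k)$ generated by the rank-one operators $\theta_{f_1,f_2}$ with $f_1,f_2$ in the algebraic span $\mcl{N}_k$ of $\phi(x_1),\ldots,\phi(x_N)$, pulls an approximate identity $\{b_i\}$ with $0\le b_i\le 1$ from $\mcl{B}_0$, and sets $\tilde f=b_if_0\in\mcl{N}_k$. Self-adjointness of $b_i$ gives $(b_if_0)(x_j)=\langle b_i\phi(x_j),f_0\rangle_k\to f_0(x_j)$, while $0\le b_i^2\le b_i\le 1$ gives $|b_if_0|_k^2\le_\alg|f_0|_k^2$; the squeeze concludes exactly as you outline. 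Your operator $B_\delta:=T^*(\bG+\delta I)^{-1}T$ (where $T:\modu_k\to\alg^N$ is evaluation and $T^*\bc=\sum_i\phi(x_i)c_i$, so $TT^*=\bG$) is in fact a concrete instance of such a $b_i$: the intertwining $T^*(TT^*+\delta)^{-1}=(T^*T+\delta)^{-1}T^*$ shows $B_\delta=T^*T(T^*T+\delta)^{-1}$, hence $0\le B_\delta\le 1$ by functional calculus, and $\tilde f_\delta=B_\delta f_0$.

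Where your proposal stalls is the convergence of evaluations, which you phrase as ``$\bF_0\in\overline{\opn{Ran}(\bG)}$'' and try to attack by a diagonal approximation through $\modu_{k,0}$. That detour is both unnecessary and, as written, unsound: approximants $f_0^{(m)}=\sum_j\phi(y_j^{(m)})d_j^{(m)}$ involve \emph{other} points $y_j^{(m)}$, and their evaluation vectors at the $x_i$ are no more obviously in $\opn{Ran}(\bG)$ than $\bF_0$ is; enlarging to ``the Gram matrix built from all appearing points'' would leave $\tilde f$ in the span of $\phi(x_i)$ \emph{and} $\phi(y_j^{(m)})$, not in $\mcl{N}_0$. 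The clean fix---precisely the paper's move---is to stay on the $\modu_k$ side rather than the $\alg^N$ side. Since $B_\delta$ is self-adjoint, $\tilde f_\delta(x_j)-f_0(x_j)=\langle(B_\delta-1)\phi(x_j),f_0\rangle_k$, so it suffices to show $\|(1-B_\delta)\phi(x_j)\|_k\to 0$. But $(1-B_\delta)^2\le 1-B_\delta$ and
\[
\langle\phi(x_j),(1-B_\delta)\phi(x_j)\rangle_k=\bG_{jj}-[\bG(\bG+\delta I)^{-1}\bG]_{jj}=\delta\,[\bG(\bG+\delta I)^{-1}]_{jj},
\]
whose $\alg$-norm is at most $\delta$. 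Hence $\|(1-B_\delta)\phi(x_j)\|_k\le\sqrt{\delta}$, and your ``hard part'' evaporates with no range condition on $\bF_0$ and no diagonal argument. With this patch your proof is complete and essentially equivalent to the paper's, the trade-off being that your version is explicit and quantitative ($\sqrt{\delta}$ rate on evaluations) while the paper's invokes only the existence of an approximate identity.
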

\begin{proof}
For $f_1,f_2\in\modu_k$, let $\theta_{f_1,f_2}:\modu_k\to\modu_k$ defined as $\theta_{f_1,f_2}(v)=f_1\bracket{f_2,v}_k$ for $v\in\modu_k$.
Let $\mcl{N}_k$ be the submodule generated algebraically by $\{\phi(x_1),\ldots,\phi(x_N)\}$, let $\mcl{B}_0={\opn{Span}\{\theta_{f_1,f_2}\,\mid\,f_1,f_2\in\mcl{N}_k\}}$, and let $\mcl{B}=\overline{\mcl{B}_0}$.
In addition, let $\mcl{L}(\modu_k)$ be the (unital) $C^*$-algebra of adjointable $\alg$-linear operators on $\modu_k$ (see \cite[p8]{lance95}).
Since $\mcl{B}$ is a $C^*$-subalgebra of $\mcl{L}(\modu_k)$ (Note $\theta_{f_1,f_2}\theta_{f_3,f_4}=\theta_{f_1,f_4\bracket{f_3,f_2}}$ and $\theta_{f_1,f_2}^*=\theta_{f_2,f_1}$), $\mcl{B}_0$ has a net $\{b_i\}_i$ such that $0\le_{\mcl{L}(\modu_k)}b_i\le_{\mcl{L}(\modu_k)} 1$ and $\lim_{i}\theta_{f_1,f_2}b_i=\lim_{i}b_i\theta_{f_1,f_2}=\theta_{f_1,f_2}$ for any $f_1,f_2\in\mcl{N}_k$~\cite[Theorem I.4.8]{davidson96}.
For each $v\in\mcl{N}_k$, we have
\begin{align*}
\Vert v-b_iv\Vert_{k}^2
&=\Vert \bracket{(1-b_i)v,(1-b_i)v}_k\Vert_{\alg}
=\Vert \theta_{(1-b_i)v,(1-b_i)v}\Vert_{\mcl{L}(\modu_k)}\\
&=\Vert (1-b_i)\theta_{v,v}(1-b_i)^*\Vert_{\mcl{L}(\modu_k)}
=\Vert (1-b_i)\theta_{v,v}(1-b_i)\Vert_{\mcl{L}(\modu_k)}.
\end{align*}
Thus, we have $\lim_ib_iv=v$.
Since $g$ is continuous, we have
\begin{align*}
&\lim_{i}g(\bracket{\phi_k(x_1),b_if_0}_k,\ldots,\bracket{\phi_k(x_N),b_if_0}_k)
=\lim_{i}g(\bracket{b_i^*\phi_k(x_1),f_0}_k,\ldots,\bracket{b_i^*\phi_k(x_N),f_0}_k)\\
&=\lim_{i}g(\bracket{b_i\phi_k(x_1),f_0}_k,\ldots,\bracket{b_i\phi_k(x_N),f_0}_k)
=g(\bracket{\phi_k(x_1),f_0}_k,\ldots,\bracket{\phi_k(x_N),f_0}_k).
\end{align*}
Therefore, for $\epsilon>0$, there exists $i$ such that 
\begin{align*}
\Vert g((b_if_0)(x_1),\ldots,(b_if_0)(x_N))-g(f_0(x_1),\ldots,f_0(x_N))\Vert_{\alg}\le \epsilon.
\end{align*}
Since $b_i\ge_{\mcl{L}(\modu_k)} 0$, there exists $c \ge_{\mcl{L}(\modu_k)} 0$ such that $b_i=c^2$.
Thus, $b_i-b_i^2=c^2-c^4=c(1-c^2)c=c^*(1-b_i)c\ge_{\mcl{L}(\modu_k)} 0$.
Thus, $0\le_{\mcl{L}(\modu_k)}b_i^2\le_{\mcl{L}(\modu_k)}b_i\le_{\mcl{L}(\modu_k)}1$, and we have $\vert b_if_0\vert_k^2=\bracket{f_0,b_i^2f_0}_k\le_{\alg}\bracket{f_0,f_0}_k=\vert f_0\vert_k^2$.
As a result, we obtain
\begin{align*}
0&\le_{\alg}L(b_if_0)-L(f_0)\\
&=
g((b_if_0)(x_1),\ldots,(b_if_0)(x_N))+h(\vert b_if_0\vert_k)
-g(f_0(x_1),\ldots,f_0(x_N))-h(\vert f_0\vert_k)\\
&\le_{\alg}g((b_if_0)(x_1),\ldots,(b_if_0)(x_N))-g(f_0(x_1),\ldots,f_0(x_N)).
\end{align*}
Since $b_if_0\in\mcl{N}_k$, setting $\tilde{f}=b_if_0$ completes the proof of the proposition.
\end{proof}

\begin{figure}[t]
    \centering
    \includegraphics[scale=0.3]{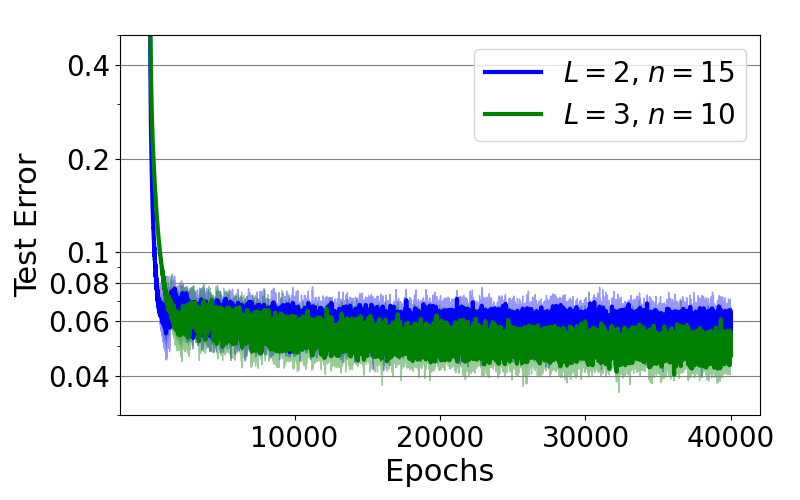}
    \includegraphics[scale=0.3]{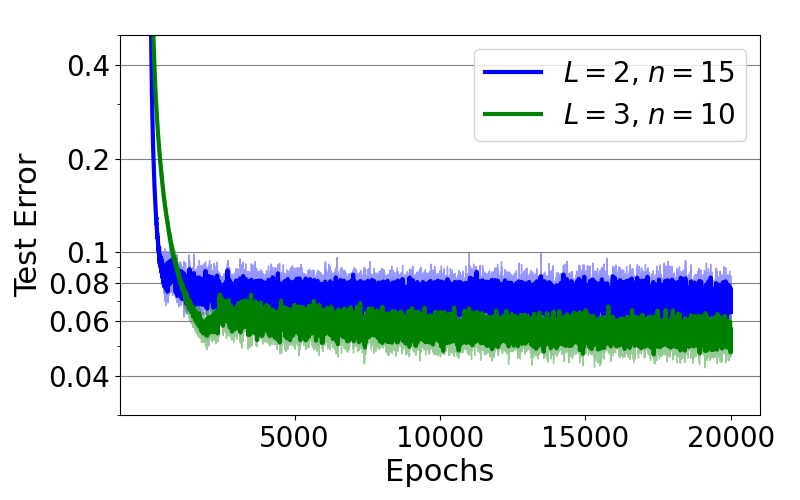}
    \caption{Test error of the regression task with the \blue{combined} approach with different $n$ and $L$. The parameters $n$ and $L$ are chosen so that the numbers of parameters are the same for all the cases. (Average value $\pm$ the standard deviation of three independent runs.) }
    \label{fig:deep}
\end{figure}


\section{Additional numerical results}\label{ap:additional_numexp}
We observed the behavior of the \blue{combined model} to show its high representation power.
We generated data in the same manner as Subsection~\ref{subsec:exp_syn}, but we set $N=300$.
We used the kernel ${k}_n^{\opn{prod},q}$ and set $L=2,3$, $\tau_1=\sqrt{2}$, $\tau_2=\sqrt{3}$, and $\tau_3=\sqrt{5}/2$.
We set $n_j=n$ for $j=1,\ldots,L$ for simplicity and set $n$ so that the numbers of parameters are the same for all values of $L$.
We used the loss function $1/N\sum_{i=1}^N\Vert f(x_i)-y_i\Vert_{L^2(\mathbb{T})}$ for the \blue{combined model} $f$ defined in Eq.~\eqref{eq:deep}.
\blue{For the optimization, we used the Adam optimizer with learning rate $0.001$.}
Figure~\ref{fig:deep} shows the result.
We can see that as the number of layers increases, we can obtain a higher performance model even with the same number of parameters.


\bigskip
\bibliography{main}

\end{document}